\newcommand{\cD}{{\mathcal D}}
\newcommand{\cN}{{\mathcal N}}
\let\E\undefined
\newcommand{\R}{\mathbb R}
\newcommand{\E}{\mathbb E}
\newcommand{\V}{\mathbb V}
\newcommand\indep{\protect\mathpalette{\protect\independenT}{\perp}}
\def\independenT#1#2{\mathrel{\rlap{$#1#2$}\mkern2mu{#1#2}}}
\DeclareMathAlphabet{\mathsfit}{T1}{\sfdefault}{\mddefault}{\sldefault}
\SetMathAlphabet{\mathsfit}{bold}{T1}{\sfdefault}{\bfdefault}{\sldefault}
\DeclareMathOperator*{\argmin}{arg\,min}
\DeclarePairedDelimiter{\bracket}{[}{]}
\DeclarePairedDelimiter{\paren}{(}{)}
\let\abs\undefined
\let\norm\undefined
\DeclarePairedDelimiter{\abs}{\lvert}{\rvert}
\DeclarePairedDelimiter{\norm}{\lVert}{\rVert}
\definecolor{DeepRed}{HTML}{B22222}
\definecolor{MediumBlue}{HTML}{0000CD}
\setlist[itemize,enumerate]{leftmargin=2em}
\theoremstyle{plain}
\newtheorem{theorem}{Theorem}    
\newtheorem{lemma}{Lemma}        
\newtheorem{corollary}{Corollary} 
\newtheorem{proposition}{Proposition}
\theoremstyle{definition}
\newtheorem{assumption}{Assumption}
\theoremstyle{remark}
\newtheorem{remark}{Remark}
\newcommand{\tr}{\mathrm{tr}}
\newcommand{\ntime}{n_{t}}
\newcommand{\nmask}{n_{y_t}}
\newcommand{\nbudget}{n}
\newcommand{\sdata}{\mathsfit{S}_{\mathrm{data}}}
\newcommand{\spref}{\mathsfit{S}_{\hat{s} \mid y_w, y_l}}
\newcommand{\stime}{\mathsfit{S}_{t}}
\newcommand{\smask}[1]{\mathsfit{S}_{y_{t^{#1}} \mid y}}
\title{LLaDA 1.5: Variance-Reduced Preference Optimization for Large Language Diffusion Models}
\author{
  Fengqi Zhu$^{1, 2, 3, *, \mathparagraph}$, Rongzhen Wang$^{1, 2, 3, *}$, Shen Nie$^{1, 2, 3}$, Xiaolu Zhang$^{5}$, Chunwei Wu$^{5}$, \\
  \textbf{Jun Hu}$^{5}$\textbf{,} \textbf{Jun Zhou}$^{5}$\textbf{,} \textbf{Jianfei Chen}$^{4}$\textbf{,} \textbf{Yankai Lin}$^{1, 2, 3, \dagger}$\textbf{,} \textbf{Ji-Rong Wen}$^{1, 2, 3}$\textbf{,} \textbf{Chongxuan Li}$^{1, 2, 3, \dagger, \ddagger}$ \\
  $^1$ Gaoling School of AI, Renmin University of China \\
  $^2$ Beijing Key Laboratory of Research on Large Models and Intelligent Governance \\
  $^3$ Engineering Research Center of  Next-Generation Intelligent Search and Recommendation, MOE \\
  $^4$ Tsinghua University \\ 
  $^5$ Ant Group \\
}
\begin{document}

\maketitle

{
    \vspace{-1\baselineskip}
    \centering
    \footnotesize\itshape
    May 25, 2025 (Version 2)\footnote{V1 was posted on May 25, 2025; V2 refines the exposition, adds additional theoretical analysis and experimental details, and makes minor corrections to the figures.}
    \par
    \vspace{2\baselineskip}
}

\begin{abstract}

Diffusion large language models present a promising paradigm to language modeling, yet their alignment remains underexplored, particularly in systematic theoretical analysis and comprehensive empirical validation on general tasks. 
In this paper, we identify a primary challenge for this problem: the high variance in Evidence Lower Bound (ELBO)-based likelihood estimates required for preference optimization. 
To address this issue, based on Direct Preference Optimization (DPO), we propose \emph{Variance-Reduced Preference Optimization} (VRPO), a framework that formally analyzes the bias and variance of the preference optimization loss and gradient, showing both are governed by a score-estimator variance.
Building on this foundation, we introduce multiple unbiased variance reduction strategies, including optimal budget allocation and antithetic sampling, to improve 
the alignment performance. We demonstrate the effectiveness of VRPO by applying it to LLaDA, a large-scale diffusion language model. The resulting model, LLaDA 1.5, outperforms its SFT-only predecessor consistently across mathematical (GSM8K +4.7), code (HumanEval +3.0, MBPP +1.8), and alignment (IFEval +4.0, Arena-Hard +4.3) benchmarks. Furthermore, LLaDA 1.5 demonstrates a highly competitive mathematical performance compared to other strong language MDMs and ARMs. 
\end{abstract}

\makeatletter
\renewcommand{\@fnsymbol}[1]{\ensuremath{%
  \ifcase#1\or
    *\or
    \dagger\or
    \ddagger\or
    \mathsection\or
    \mathparagraph\or
    \|\or
    **\or
    \dagger\dagger\or
    \ddagger\ddagger
  \else
    \@ctrerr
  \fi}}
\makeatother

\renewcommand{\thefootnote}{\fnsymbol{footnote}}
\footnotetext[1]{Equal contribution}
\footnotetext[5]{Work done during an internship at Ant Group}
\footnotetext[2]{Project leader}
\footnotetext[3]{Corresponding author}
\renewcommand{\thefootnote}{\arabic{footnote}}

\section{Introduction}

Recently, masked diffusion models (MDMs)~\citep{sohl2015deep,austin2021structured, campbell2022continuous, meng2022concrete, lou2023discrete, sahoo2024simple, shi2024simplified, ou2024your} have achieved significant progress in language modeling. By optimizing the evidence lower bound (ELBO) or its simplified variants, MDMs have demonstrated comparable or even superior performance to autoregressive models (ARMs) at a small scale~\citep{lou2023discrete, ou2024your, nie2024scaling}. Explorations on the scaling properties have also revealed MDMs' excellent scalability in various downstream tasks~\citep{nie2024scaling,gong2024scaling,nie2025large}, achieving competitive results to representative ARMs of the same size (e.g., LLaMA 3~\citep{dubey2024llama}).

Motivated by the success of aligning ARMs with human preferences~\citep{schulman2017proximal, ziegler2019fine, ouyang2022training, rafailov2023direct, shao2024deepseekmath, guo2025deepseek}, recent work has begun to explore MDM alignment~\citep{zekri2025fine,borso2025preference,zhao2025d1scalingreasoningdiffusion,huang2025reinforcing,yang2025mmada,gong2025diffucoder,tang2025wd1}.
Notably, most current methods adapt existing alignment frameworks to MDMs, introducing various likelihood approximation methods without providing pertinent theoretical analysis. Moreover, they primarily focus on specialized tasks such as reasoning and code generation--which, while important--leave broader alignment tasks underexplored that are essential for future diffusion language model development.

In this paper, we systematically study the challenge of aligning MDMs based on direct preference optimization (DPO)~\citep{rafailov2023direct}, for its simplicity and notable empirical performance.
The key challenge is that the original DPO formulation requires exact log-likelihoods, which are intractable for diffusion models. A natural solution under this scenario is to approximate these log-likelihoods with their evidence lower bounds (ELBOs), which introduce nested expectations over diffusion time and masked data. This substitution yields an ELBO-based preference score expressed as a linear combination of four ELBO terms (see Eq.(\ref{eq:dpo-elbo})).

In practice, these ELBO terms are estimated via a doubly Monte Carlo method~\citep{titsias2014doubly, dai2014scalable}. We demonstrate that this estimation introduces additional bias and variance into the preference optimization loss and gradient. 
To mitigate these errors, our theoretical analysis reveals a crucial insight: the introduced bias and variance are governed by the variance of the preference score estimator.
This finding underscores the need to control this variance for stable and effective preference optimization.

Building upon this, we introduce \emph{Variance-Reduced Policy Optimization} (VRPO), a method integrating principled techniques to reduce the variance of the preference score estimator:
\textbf{(1)} increasing the sampling budget for ELBOs,
\textbf{(2)} allocating the sampling budget across distinct diffusion timesteps with one masked sample per timestep, and
\textbf{(3)} applying {antithetic sampling}~\citep{kroese2013handbook} between ELBO estimates of the model and reference policies.
These techniques have been theoretically proven to reduce the variance of the score estimator in an unbiased manner and empirically validated in both synthetic (as in Figure~\ref{fig:iilustrition-logsigma-var}) and large-scale real-world ablation studies (as in Section~\ref{sec:ablation}). We further discuss on potential generalization of our variance reduction techniques to other  alignment algorithms such as PPO and GRPO~\citep{schulman2017proximal,shao2024deepseekmath}.

Finally, we show the effectiveness of VRPO by applying it to LLaDA 8B Instruct~\citep{nie2025large}, a leading language MDM, using 350k preference pairs. As shown in Figure~\ref{fig:proformance_mdms}, the resulting model, LLaDA 1.5, improves LLaDA consistently on mathematics, coding, and alignment tasks. In addition, LLaDA 1.5 maintains a highly competitive mathematical performance compared to other strong  MDMs~\citep{nie2025large,dream2025,zhao2025d1scalingreasoningdiffusion} and ARMs~\citep{dubey2024llama, bi2024deepseek}, achieving the highest score on Math. These results demonstrate the effectiveness of our variance reduction method and establish a foundation for further development of language MDMs.

\section{Preliminaries}

\begin{figure}[t]
    \vskip -0.05in
    \centering
    \subfloat
    {
    \includegraphics[width=0.35\linewidth]{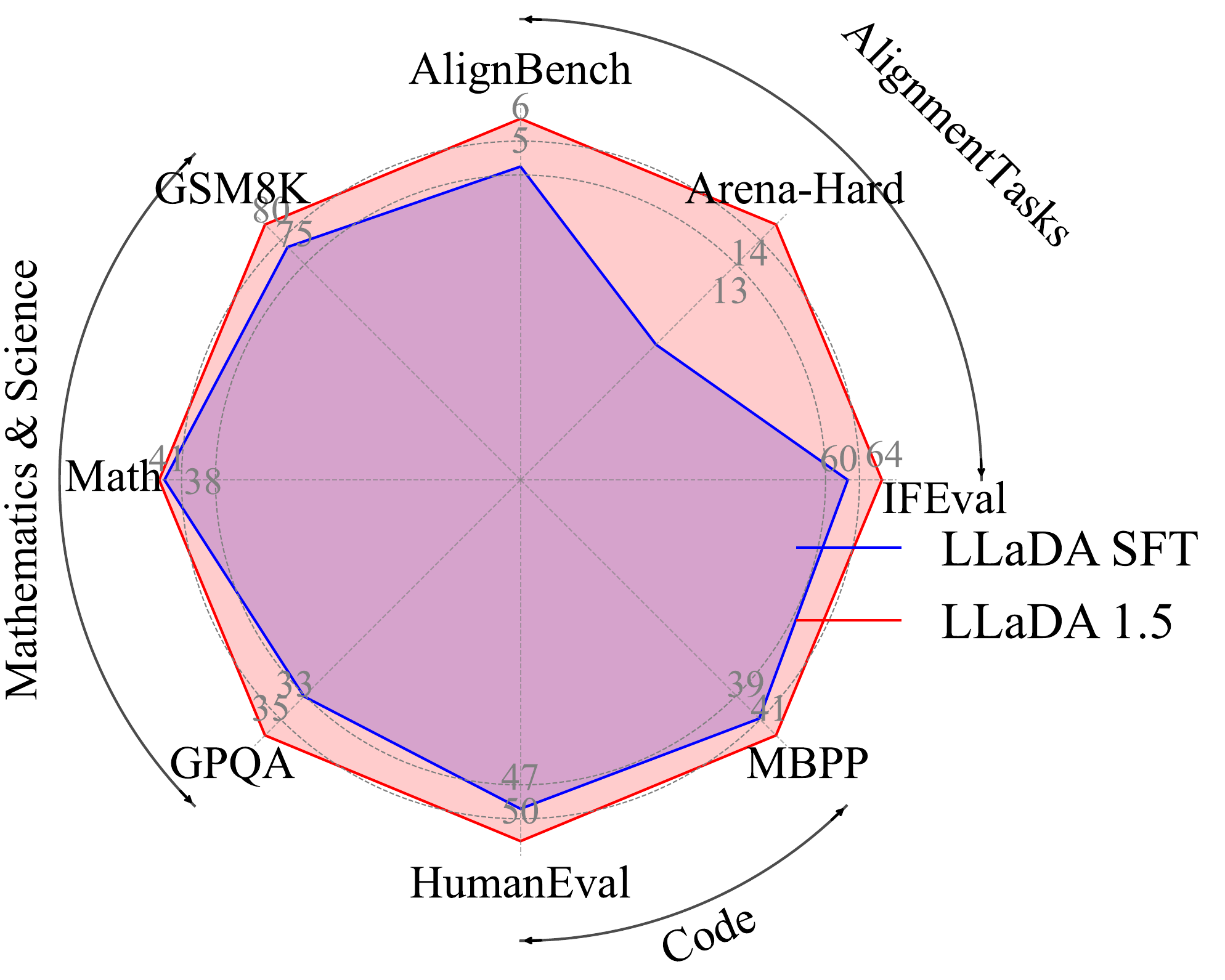}
    \label{fig:llada_llada1_5}
    }
    \hskip 1em
    \subfloat
    {\includegraphics[width=0.58\linewidth]{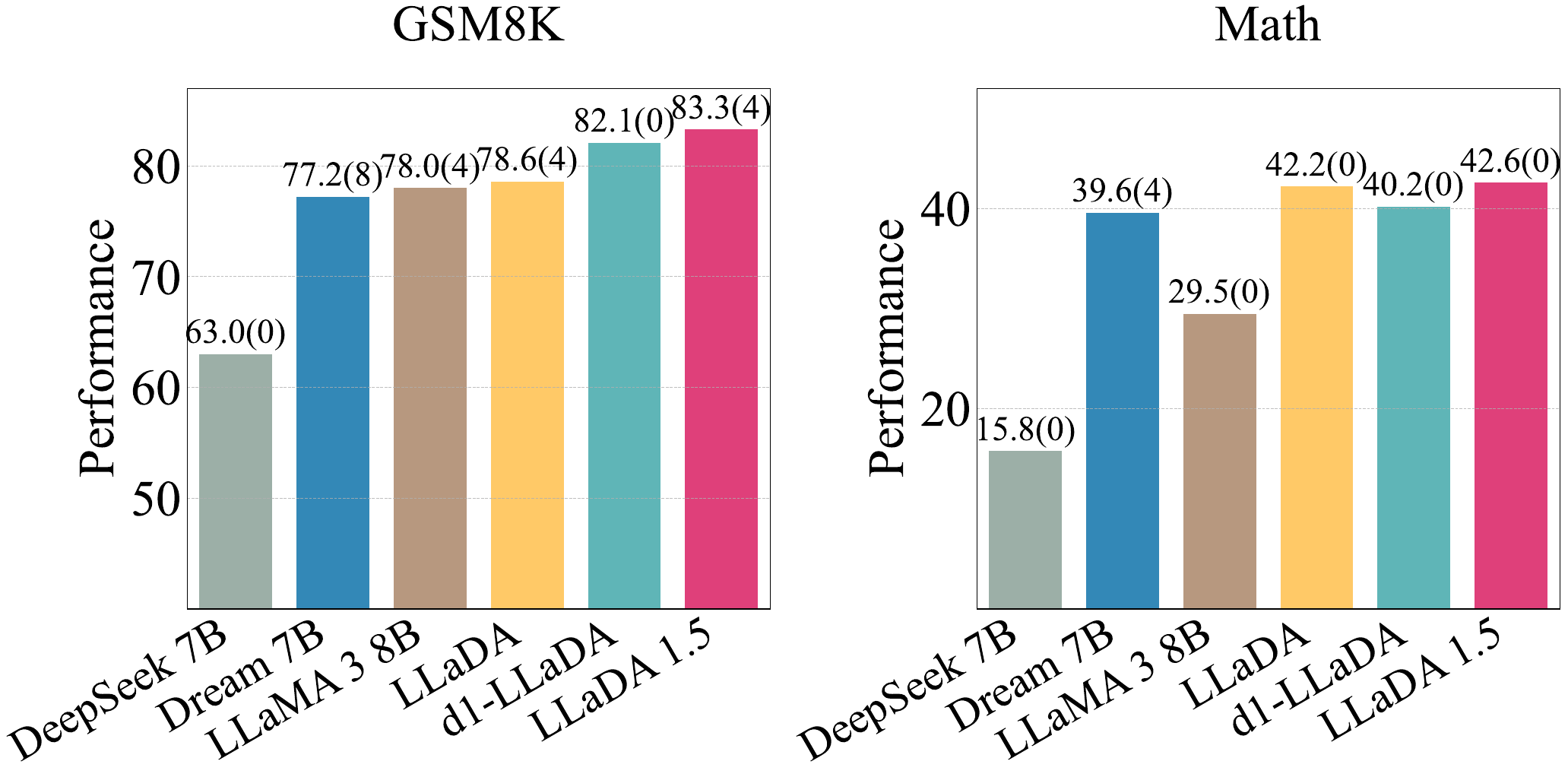}
    \label{fig:llada_d1_dream}
    }
    \caption{\textbf{Benchmark results.} 
    The left panel shows that LLaDA 1.5 improves LLaDA consistently and significantly on various benchmarks. The right panel demonstrates that LLaDA 1.5 has a highly competitive mathematical performance compared to strong language MDMs and ARMs.
    }
    \label{fig:proformance_mdms}
\end{figure}

\subsection{Alignment Methods}

Traditional alignment approaches~\citep{ziegler2019fine,ouyang2022training} consist of two stages.

\textbf{Reward modeling.} 
In the first stage, a static dataset of preference comparisons $\cD = \{(x, y_w, y_l)\}$ is constructed. For each prompt $x$, $y_w$ denotes the human-preferred response and $y_l$ denotes the less preferred one, respectively. 
A parameterized reward model $r_\phi$ is trained to reflect these preferences by minimizing the following objective based on Bradley-Terry formulation~\citep{bradley1952rank}: 
\begin{align}
    \mathcal{L}_{\mathrm{Reward}}(\phi) \triangleq -\mathbb{E}_{(x,y_w,y_l)\sim\mathcal{D}}\left[\log\sigma\left(r_\phi(x, y_w) - r_\phi(x, y_l)\right)\right],
\end{align}
where $\sigma(\cdot)$ is the sigmoid function. This encourages $r_\phi$ to assign higher scores to preferred responses.

\textbf{Reinforcement Learning (RL).} 
In the second stage, the language model policy $\pi_\theta(y \mid x)$, which defines the probability of generating response $y$ given prompt $x$, is then optimized via RL to maximize:
\begin{align}
\label{eq:ppo-rl}
    \max_{\pi_\theta} \ \mathbb{E}_{x \sim \mathcal{D},\ y \sim \pi_\theta(\cdot \mid x)}\left[r_\phi(x, y)\right] 
    - \beta \, \mathbb{D}_{\mathrm{KL}}\left(\pi_\theta(\cdot \mid x) \, \| \, \pi_{\mathrm{ref}}(\cdot \mid x)\right),
\end{align}
where $\pi_{\mathrm{ref}}$ is a fixed reference policy, often chosen as a frozen SFT model, and $\beta$ is a coefficient controlling the regularization strength. 
Notably, in autoregressive models (ARMs), both sampling and likelihood evaluation for the policy are exactly characterized by the language model distribution.

\textbf{Direct Preference Optimization (DPO).} DPO~\citep{rafailov2023direct} offers a simplified alternative to the two-stage paradigm above by avoiding explicit reward model training, while maintaining both theoretical grounding and strong empirical performance~\citep{grattafiori2024llama}. The DPO objective is to minimize $\mathcal{L}_{\mathrm{DPO}}(\theta) \triangleq\mathbb{E}_{(x, y_w, y_l) \sim \mathcal{D}}\left[\ell_{\mathrm{DPO}}(x,y_w,y_l;\theta)\right],$ where
\begin{align}
    \ell_{\mathrm{DPO}}(x,y_w,y_l;\theta) \triangleq 
    - \log \sigma\left(
    \beta \log\frac{\pi_\theta(y_w \mid x)}{\pi_{\mathrm{ref}}(y_w \mid x)} 
    - \beta \log\frac{\pi_\theta(y_l \mid x)}{\pi_{\mathrm{ref}}(y_l \mid x)}
    \right).
    \label{eq:dpo}
\end{align}

\subsection{Masked Diffusion Models}
\label{sec:preliminary_mdm}

Masked Diffusion Models (MDMs) define a model distribution via a forward–reverse framework~\citep{sohl2015deep,austin2021structured}. 
Starting from the original input at $t = 0$, the forward process progressively masks the input tokens with a masking probability increasing over time, producing a fully masked sequence at $t = 1$. The reverse process learns to denoise this sequence by iteratively predicting the masked tokens as time reverses from $t=1$ to $t=0$.
This framework enables principled modeling of complex data distributions, offering a feasible exploration for non-autoregressive generation approaches.

\textbf{Likelihood estimation in MDMs.} 
Unlike ARMs, the exact log-likelihood $\log\pi(y\mid x)$ in MDMs is often approximated by its evidence lower bound (ELBO)~\citep{lou2023discrete,ou2024your,shi2024simplified,sahoo2024simple} as follows:
\begin{align}
    \mathcal{B}_{\pi}(y \mid x) \triangleq \mathbb{E}_{t \sim \mathcal{U}[0,1]} \mathbb{E}_{y_t \sim q(y_t \mid t, y, x)} \left[\ell_{\pi}(y_t, t, y \mid x)\right] \le \log \pi(y \mid x), \label{eq:elbo}
\end{align}
where $q(y_t \mid t, y, x)$ denotes the forward diffusion process at time $t$ given the full response $y$ and prompt $x$, and $\ell_{\pi}$ represents the per-step loss of the mask prediction model, which admits multiple equivalent formulations elaborated in Appendix~\ref{app:mdm}. 
Notably, for a continuous-time diffusion process (or equivalently, when the step size is infinitesimal), the \emph{bias} of the ELBO for a well-trained model relative to the exact likelihood is negligible, as demonstrated in prior works~\citep{sohl2015deep,ho2020denoising,song2020score}. From now on, we omit the conditions on $x$ for brevity.

Computing $\mathcal{B}_{\pi}(y)$ exactly is generally intractable due to the double expectations. In practice, we approximate it by a \emph{doubly Monte Carlo} method. Letting $\ntime$ and $\nmask$ be the numbers of samples for timesteps and masked data per timestep, we draw:
\begin{align}
    \stime \triangleq \{t^{(j)}\}_{j=1}^{\ntime}\overset{\mathrm{i.i.d.}} {\sim}\mathcal{U}[0,1] 
    \quad \text{and} \quad
    \smask{(j)} \triangleq \{y^{(k)}_{t^{(j)}}\}_{k=1}^{\nmask}\overset{\mathrm{i.i.d.}} {\sim}q(y_{t} \mid t^{(j)}, y), \ j = 1, \dots, \ntime,    \label{eq:sampling-of-elbo}
\end{align} 
where the masked data for different timesteps are independently sampled, i.e., given $y$ and $\stime$, $\mathsfit{S}_{y_{t^{(j)}} \mid y} \indep \mathsfit{S}_{y_{t^{(j')}} \mid y}$ for any $j \neq j'$. 
The ELBO is then estimated by:
\begin{align} 
    \widehat{\mathcal{B}}_{\pi}(y) \triangleq \frac{1}{\ntime}\sum_{j=1}^{\ntime}
    \frac{1}{\nmask}\sum_{k=1}^{\nmask} \ell_{\pi}(y^{(k)}_{t^{(j)}}, t^{(j)}, y), \label{eq:elbo-mc}
\end{align} 
which is an average of mask-prediction loss computed over a total of $n = \ntime \times \nmask$ masked data.
The estimator in Eq.~\eqref{eq:elbo-mc} is an unbiased approximation for the ELBO following from the linearity of expectations. However, due to computational constraints, large values for $n$ are typically not used. As a result, the variance of the estimator must be considered. Particularly, in the context of MDMs' DPO, this presents unique challenges for optimization, as will be discussed in the next section. In this work, we explore how to mitigate the negative effects of this ELBO estimation variance on preference optimization, considering both scenarios with scalable and fixed computational budgets.

\textbf{Large-scale language MDMs.} 
LLaDA~\citep{nie2025large} is an 8B-parameter masked diffusion model for language generation. LLaDA is pretrained on 2.3 trillion tokens and fine-tuned on 4.5 million pairs of SFT data. It exhibits outstanding capabilities comparable with representative ARMs~\citep{dubey2024llama} in scalability, in-context learning, and instruction-following. 
In this paper, we adopt LLaDA 8B as a base model to explore and validate alignment methods for MDMs.

\section{Method}
\label{sec:method}

We investigate how to align MDMs with human preferences using the DPO framework~\citep{rafailov2023direct}. To address the intractability of the required log-likelihoods, we approximate them by ELBO estimators. We prove that the bias and variance of the resulting loss and its gradient can be bounded by the variance of a score estimator (a linear combination of four ELBOs). Based on this, we propose \emph{Variance-Reduced Preference Optimization} (VRPO), integrating multiple unbiased variance reduction techniques for better alignment. We also discuss potential extension beyond DPO.

\subsection{Substituting Likelihoods with ELBOs in DPO}
\label{sec:eblo-dpo}

Let us begin by adapting the DPO loss in Eq.~\eqref{eq:dpo} by substituting log-likelihoods with their ELBOs:
\begin{align}
    \ell_{\mathrm{DPO-E}}(y_w,y_l;\theta)\triangleq 
    -
    {\color{MediumBlue}{\log \sigma\left(
    {\color{DeepRed}{\beta \left(\mathcal{B}_{\pi_\theta}(y_w) \! - \! \mathcal{B}_{\pi_{\mathrm{ref}}}(y_w)\right) \! - \!
    \beta
    \left(\mathcal{B}_{\pi_\theta}(y_l) \! - \! \mathcal{B}_{\pi_{\mathrm{ref}}}(y_l)\right)}}
    \right)}}. 
    \label{eq:dpo-elbo}
\end{align}
We refer to the term in {\color{DeepRed} red} as the ELBO-based preference \emph{score} and denote it by ${\color{DeepRed}s_{\theta}(y_w, y_l)}$.

Intuitively, the loss encourages the current model $\pi_{\theta}$ to better prefer $y_w$ over $y_l$ than reference $\pi_{\mathrm {ref}}$ by comparing the ELBOs. 
As discussed around Eq.~\eqref{eq:elbo}, the ELBO provides a principled approximation to the log-likelihood with negligible bias. Moreover, the structure of the DPO loss---specifically its symmetric form and the smoothness of the sigmoid function---inherently helps mitigate the overall approximation gap, making $\ell_{\mathrm{DPO-E}}$ a reliable surrogate for the original DPO objective.

In practice, each ELBO in Eq.~\eqref{eq:dpo-elbo} is estimated by Eq.~\eqref{eq:elbo-mc}. 
   The resulting estimated loss is:
\begin{align}
    \widehat{\ell}_{\mathrm{DPO-E}}(y_w,y_l;\theta) \triangleq 
    - 
    {\color{MediumBlue}\log \sigma\left(
    {\color{DeepRed}\beta 
    \left(\widehat{\mathcal{B}}_{\pi_\theta}(y_w) \! - \! \widehat{\mathcal{B}}_{\pi_{\mathrm{ref}}}(y_w)\right) \! - \!
    \beta
    \left(\widehat{\mathcal{B}}_{\pi_\theta}(y_l) \! - \! \widehat{\mathcal{B}}_{\pi_{\mathrm{ref}}}(y_l)\right)}
    \right)}, \label{eq:dpo-elbo-mc} 
\end{align}
where we denote the score estimator, highlighted in {\color{DeepRed}red}, by ${\color{DeepRed}\hat{s}_{\theta}(y_w,y_l)}$, and we use $\spref$ to denote the stochastic sampling involved in this estimation. 

Notably, for a fixed pair of preference data $y_w, y_l$, the stochastic sampling in this score estimator introduces randomness into the estimated loss, making it a random variable over $\spref$, and thereby introduces \emph{variance} into both the loss and its gradient.
Besides, due to the nonlinearity of ${\color{MediumBlue}\log \sigma(\cdot)}$, this also results in additional \emph{bias} between $\E[{\color{MediumBlue}\log \sigma(\color{DeepRed}{\hat{s}_{\theta}(y_w,y_l)})}]$ and the target ${\color{MediumBlue}\log \sigma({\color{DeepRed}s_{\theta}(y_w,y_l)})} = {\color{MediumBlue}\log \sigma(\E[{\color{DeepRed}\hat{s}_{\theta}(y_w,y_l)]})}$ (see Figure~\ref{fig:iilustrition-logsigma-var}~(a) for an intuitive illustration), although
${\color{DeepRed}\hat{s}_{\theta}}$ itself is an unbiased estimator for the true score ${\color{DeepRed}s_{\theta}}$ (formally explained in Appendix~\ref{app:proof-of-eq:pref_unbiased}).

In the remainder of this section, we address these two problems by first establishing how the variance of the score estimator governs the introduced bias and variance, and then proposing multiple principled variance reduction strategies to mitigate them.
For clarity, we focus on the loss analysis in the main paper and defer the analogous gradient analysis to Appendix~\ref{app:analysis-of-gradient}.

\begin{figure}[t]
    \vskip -0.05in
    \centering
    \subfloat{\hspace{-0.1in}\includegraphics[width=0.45\linewidth]{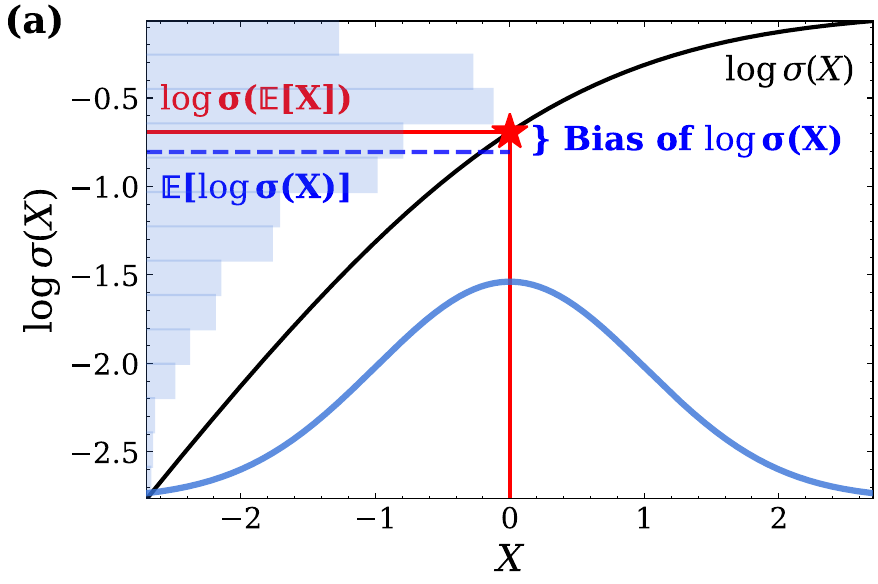}}
    \hspace{0.12in}
    \subfloat{\includegraphics[width=0.45\linewidth]{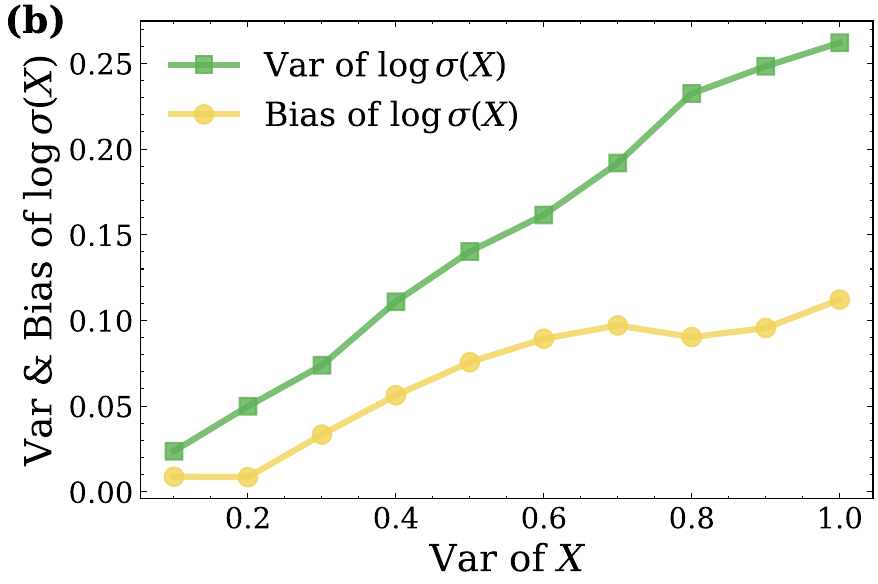}}
    \caption{\textbf{Toy example.} (a) Although $X$ is an unbiased estimator for $\E[X]$, $\log\sigma(X)$ is not an unbiased estimator for $\log\sigma(\E[X])$. Non-linear transformation introduces a gap between $\E[\log\sigma(X)]$ and $\log\sigma(\E[X])$ (blue and red horizontal lines). (b) Both the bias and variance of $\log\sigma(X)$ exhibit monotonic trends with $\V\bracket{X}$, supporting the insight to jointly reduce these errors by reducing $\V\bracket{X}$.}
    \label{fig:iilustrition-logsigma-var}
    \vskip -0.15in
\end{figure}

\subsection{Variance-Reduced Preference Optimization}
\label{sec:VRPO}

The following theorem demonstrates how the bias and variance of the empirical loss can be directly bounded in terms of the variance of the score estimator. 
Intuitively, the proof (see Appendix~\ref{app:proof-of-thm:effect-of-vp-on-bias-and-var}) utilizes the Lipschitz continuity of $\log \sigma (\cdot)$ and the unbiasedness of $\hat{s}_{\theta}$,  which ensures that the variability in $\hat{s}_\theta$ leads to controlled changes in the loss and keep it close to the true objective. Tightness analysis of these upper bounds is provided in Appendix~\ref{app:tightness-analysis}.

\begin{restatable}{theorem}{thmeffectofvponbiasandvar}
\label{thm:effect-of-vp-on-bias-and-var}
Given a pair of preference data $y_w,y_l$, the bias and variance of $\widehat{\ell}_{\mathrm{DPO-E}}(y_w,y_l;\theta)$ over stochastic sampling in the score estimation can be bounded as:
    \begin{align*}
        \E_{\spref}\left[\abs*{\ell_{\mathrm{DPO-E}}(y_w,y_l;\theta) - \widehat{\ell}_{\mathrm{DPO-E}}(y_w,y_l;\theta)}\right]
        &\leq \sqrt{\V_{\spref} \bracket*{\hat{s}_{\theta}(y_w,y_l)}},\\
        \V_{\spref}\left[ \widehat{\ell}_{\mathrm{DPO-E}}(y_w,y_l;\theta)\right]
        &\leq 4\E_{y_w,y_l} \bracket*{\V_{\spref} \bracket*{\hat{s}_{\theta}(y_w,y_l)}}.
    \end{align*} 
\end{restatable}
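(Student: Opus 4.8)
The plan is to prove the two bounds separately, using two standard facts about $\log\sigma$: first, that $\log\sigma$ is $1$-Lipschitz (since $|(\log\sigma)'(x)| = |1 - \sigma(x)| = \sigma(-x) \in (0,1)$); and second, that $\hat{s}_\theta(y_w,y_l)$ is an unbiased estimator of $s_\theta(y_w,y_l)$, i.e. $\E_{\spref}[\hat{s}_\theta(y_w,y_l)] = s_\theta(y_w,y_l)$, which is the fact referenced from Appendix~\ref{app:proof-of-eq:pref_unbiased}. Note that $\ell_{\mathrm{DPO-E}}(y_w,y_l;\theta) = -\log\sigma(s_\theta(y_w,y_l))$ is a deterministic constant over $\spref$, whereas $\widehat{\ell}_{\mathrm{DPO-E}}(y_w,y_l;\theta) = -\log\sigma(\hat{s}_\theta(y_w,y_l))$ is random.

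For the bias bound, I would write
\begin{align*}
\E_{\spref}\!\left[\abs*{\ell_{\mathrm{DPO-E}} - \widehat{\ell}_{\mathrm{DPO-E}}}\right]
= \E_{\spref}\!\left[\abs*{\log\sigma(\hat{s}_\theta) - \log\sigma(s_\theta)}\right]
\le \E_{\spref}\!\left[\abs*{\hat{s}_\theta - s_\theta}\right],
\end{align*}
where the inequality is the $1$-Lipschitz property applied pointwise. Then by Jensen's inequality (concavity of $\sqrt{\cdot}$, equivalently Cauchy--Schwarz / the power-mean inequality $\E|Z| \le \sqrt{\E Z^2}$),
\begin{align*}
\E_{\spref}\!\left[\abs*{\hat{s}_\theta - s_\theta}\right]
\le \sqrt{\E_{\spref}\!\left[(\hat{s}_\theta - s_\theta)^2\right]}
= \sqrt{\E_{\spref}\!\left[(\hat{s}_\theta - \E_{\spref}[\hat{s}_\theta])^2\right]}
= \sqrt{\V_{\spref}\bracket*{\hat{s}_\theta(y_w,y_l)}},
\end{align*}
using unbiasedness in the middle equality. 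This gives the first inequality.

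For the variance bound, the key observation is the general fact $\V[f(Z)] \le \E[(f(Z) - c)^2]$ for any constant $c$ (since variance is the minimum mean-squared deviation), applied with $f = -\log\sigma$, $Z = \hat{s}_\theta$, and $c = -\log\sigma(s_\theta)$. Combined with the $1$-Lipschitz bound $\abs{\log\sigma(\hat{s}_\theta) - \log\sigma(s_\theta)} \le \abs{\hat{s}_\theta - s_\theta}$, this yields, for a fixed pair,
\begin{align*}
\V_{\spref}\!\left[\widehat{\ell}_{\mathrm{DPO-E}}(y_w,y_l;\theta)\right]
\le \E_{\spref}\!\left[(\hat{s}_\theta - s_\theta)^2\right]
= \V_{\spref}\bracket*{\hat{s}_\theta(y_w,y_l)},
\end{align*}
which is in fact a factor of $4$ stronger than the stated bound for a fixed pair. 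To reach the displayed form, I would then take the expectation over the data pair $(y_w,y_l)$: by the law of total variance, $\V[\widehat{\ell}_{\mathrm{DPO-E}}] \le \E_{y_w,y_l}[\V_{\spref}[\widehat{\ell}_{\mathrm{DPO-E}}]] + \V_{y_w,y_l}[\ell_{\mathrm{DPO-E}}]$; bounding the second term crudely by noting $\ell_{\mathrm{DPO-E}} = -\log\sigma(s_\theta) \ge 0$ and using some range argument, or more likely the intended reading is that the left-hand side is also implicitly the conditional variance and the factor $4$ absorbs the handling of the outer data expectation together with $\V[f(Z)] \le \E[(f(Z)-c)^2]$ being loosened. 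I would double-check the precise statement of which expectations are conditional versus marginal on both sides.

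The main obstacle I anticipate is reconciling the exact form of the variance inequality with the factor of $4$ and the outer $\E_{y_w,y_l}$ on the right-hand side but (apparently) not on the left: the clean per-pair argument above gives constant $1$, so the factor $4$ must come from the law of total variance step (the cross/outer term). I would handle this by carefully separating $\V_{\spref}$ (conditional on the pair) from the total variance over both sources of randomness, showing $\V_{\mathrm{total}} = \E_{y_w,y_l}[\V_{\spref}[\widehat{\ell}]] + \V_{y_w,y_l}[\E_{\spref}\widehat{\ell}]$ and then bounding each piece; the $1$-Lipschitz property controls $\E_{\spref}\widehat{\ell}$ within $\sqrt{\V_{\spref}[\hat s_\theta]}$ of $\ell_{\mathrm{DPO-E}}$, after which $(a+b)^2 \le 2a^2 + 2b^2$-type splittings produce the constant $4$. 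Everything else is routine once the Lipschitz bound and unbiasedness are in hand.
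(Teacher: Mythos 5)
Your proof is correct. For the bias bound it is exactly the paper's argument: $1$-Lipschitzness of $\log\sigma$ applied pointwise, then Jensen ($\E\abs{Z}\le\sqrt{\E Z^2}$) together with the unbiasedness $\E_{\spref}[\hat{s}_\theta]=s_\theta$ from Proposition~\ref{prop:pref_unbiased}. For the variance bound you take a genuinely different and in fact sharper route: invoking $\V[f(Z)]\le\E[(f(Z)-c)^2]$ with $c=f(s_\theta)$ and then the Lipschitz bound gives $\V_{\spref}[\widehat{\ell}_{\mathrm{DPO-E}}]\le\V_{\spref}[\hat{s}_\theta]$ per pair, i.e.\ constant $1$. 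The paper instead decomposes $f(\hat{s}_\theta)-\E f(\hat{s}_\theta)$ through the intermediate point $f(\E\hat{s}_\theta)$, applies the triangle inequality and $(a+b)^2\le 2(a^2+b^2)$, and uses Jensen on the resulting deterministic term, arriving at $4\,\E(f(\hat{s}_\theta)-f(s_\theta))^2\le 4\,\V_{\spref}[\hat{s}_\theta]$ (Lemma~\ref{lem:bound-of-loss-and-gradient}); the factor $4$ is purely an artifact of that splitting. So your instinct that the clean argument gives constant $1$ is right, and your hunt for where the $4$ ``must'' come from (law of total variance, cross terms over the data distribution) is unnecessary. Relatedly, your unease about the $\E_{y_w,y_l}$ on the right-hand side of the second display is pointing at an imprecision in the statement rather than a gap you need to fill: the paper's own proof simply applies the per-pair lemma ``directly,'' which yields $\V_{\spref}[\widehat{\ell}_{\mathrm{DPO-E}}]\le 4\,\V_{\spref}[\hat{s}_\theta(y_w,y_l)]$ conditional on the fixed pair, and never actually introduces or justifies the outer data expectation. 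The intended content is the per-pair bound, which both you and the paper establish (yours with a better constant).
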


In the toy example shown in Figure~\ref{fig:iilustrition-logsigma-var}~(b), we plot how the variance of a random variable $X$ influences the bias and variance of $\log\sigma(X)$. These curves exhibit trends that align well with Theorem~\ref{thm:effect-of-vp-on-bias-and-var}.  

Collectively, these findings suggest that one can simultaneously mitigate both errors by reducing the variance of $\hat{s}_{\theta}$. 
To do this, we present VRPO, illustrated in Figure~\ref{fig:method}, a set of principled techniques designed to reduce the variance of the score estimator as follows:
\begin{enumerate}[label=\textbf{(\arabic*)}]
    \item \textbf{Sampling budget:} Increase the number of samples $\nbudget = \ntime \times \nmask$ used to estimate each ELBO.
    \item \textbf{Optimal allocation:} Allocate the full budget to timesteps by setting \(\ntime = \nbudget\) and \(\nmask = 1\). 
    \item \textbf{Antithetic sampling:} Share the same sampled timesteps and masked data between the ELBO estimates of the current policy \(\pi_\theta\) and the reference policy \(\pi_{\mathrm{ref}}\) for the same input $y_w$ or $y_l$. 
\end{enumerate}
Practically, the first component \emph{increases the FLOPs} of preference optimization by a factor of $n$, while the latter two components \emph{incur no additional computational cost}: optimal allocation redistributes the existing samples across timesteps without increasing the total sample count, and antithetic sampling reuses samples across ELBO estimates, effectively serving as a ``free lunch'' for variance reduction.
In our default experimental setting, where $n$ is set to be $8$, the additional overhead is fully affordable relative to the overall pretraining cost as discussed in Section~\ref{exp:settings}, and ablation studies under both scalable and fixed computational budgets are provided in Section~\ref{sec:ablation}.

Theoretically, all of these techniques reduce the variance of  $\hat{s}_{\theta}$ without introducing bias.
Main analysis is presented below, with proofs and unbiasedness examinations deferred to Appendix~\ref{app:proof-additional-variance}. 

We first observe the variance of the score estimator by unrolling it according to the definition in Eq.~\eqref{eq:dpo-elbo-mc} (where subscripts of variances and square brackets $\bracket{\cdot}$ are omitted for brevity): 
\begin{align} 
    &\V\hat{s}_{\theta}(y_w, y_l) = \notag\\
    &\beta^2 \! \! \! \sum_{y\in\{y_w,y_l\}}
    \bracket*{
    \V \widehat{\mathcal{B}}_{\pi_\theta}\!(y)
    \! + \! \V \widehat{\mathcal{B}}_{\pi_{\mathrm{ref}}}(y)
    \! - \! 2\mathrm{Corr}\paren*{\!\widehat{\mathcal{B}}_{\pi_\theta}(y), \widehat{\mathcal{B}}_{\pi_{\mathrm{ref}}}(y)\!}\sqrt{\V \widehat{\mathcal{B}}_{\pi_\theta}(y)\V \widehat{\mathcal{B}}_{\pi_{\mathrm{ref}}}(y)}
    }. \label{eq:variance-of-preference}
\end{align}

This decomposition reveals two strategies to reduce $\V\hat{s}_{\theta}$: {first}, decreasing the variance of each ELBO estimation; {second}, increasing the correlation between the ELBO estimates for the same input $y$. 
The techniques proposed in VRPO operate exactly according to these two strategies, as formalized below.

\vspace{0.3em}

\begin{restatable}[Reduce the ELBO variance]{proposition}{thmbudgetandallocation}
\label{prop:budget-and-allocation}
Given a total budget of $\nbudget = \ntime\times\nmask$ masked samples for estimating $\widehat{\mathcal{B}}_{\pi}(y)$, we have:
\begin{enumerate*}[label=\textbf{(\roman*)}]
    \item \(\V \widehat{\mathcal{B}}_{\pi}(y) = \Theta(\frac{1}{n})\), 
    \item $\V \widehat{\mathcal{B}}_{\pi}(y)$ is minimized when $\ntime=\nbudget, \nmask=1$.
\end{enumerate*}
\end{restatable}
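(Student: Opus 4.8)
The plan is to compute $\V \widehat{\mathcal{B}}_{\pi}(y)$ in closed form; once an exact expression is in hand, both claims drop out. Fix $y$ and $\pi$, write $L(t,y_t)\triangleq\ell_{\pi}(y_t,t,y)$ for the per-step loss, and introduce the conditional mean and variance of the masking given a timestep,
\[
\mu(t)\triangleq\E_{y_t\sim q(y_t\mid t,y)}[L(t,y_t)],\qquad v(t)\triangleq\V_{y_t\sim q(y_t\mid t,y)}[L(t,y_t)].
\]
Then $\mathcal{B}_{\pi}(y)=\E_{t\sim\mathcal{U}[0,1]}[\mu(t)]$ by Eq.~\eqref{eq:elbo}, and I set $c_1\triangleq\V_{t\sim\mathcal{U}[0,1]}[\mu(t)]$ and $c_2\triangleq\E_{t\sim\mathcal{U}[0,1]}[v(t)]$; both are finite (assuming $\ell_{\pi}$ has finite second moment) and, crucially, independent of the allocation $(\ntime,\nmask)$.

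The key step is a law-of-total-variance argument conditioning on the sampled timesteps $\stime=\{t^{(j)}\}_{j=1}^{\ntime}$:
\[
\V \widehat{\mathcal{B}}_{\pi}(y)=\V_{\stime}\bracket*{\E\bracket*{\widehat{\mathcal{B}}_{\pi}(y)\mid\stime}}+\E_{\stime}\bracket*{\V\bracket*{\widehat{\mathcal{B}}_{\pi}(y)\mid\stime}}.
\]
For the first term, $\E\bracket*{\widehat{\mathcal{B}}_{\pi}(y)\mid\stime}=\frac{1}{\ntime}\sum_{j=1}^{\ntime}\mu(t^{(j)})$ is an average of $\ntime$ i.i.d.\ samples (the $t^{(j)}$ are i.i.d.\ uniform), so its variance equals $c_1/\ntime$. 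For the second term, given $\stime$ the block averages $A_j\triangleq\frac{1}{\nmask}\sum_{k=1}^{\nmask}L(t^{(j)},y_{t^{(j)}}^{(k)})$ are mutually independent across $j$ — this is exactly the stated conditional independence $\smask{(j)}\indep\smask{(j')}$ for $j\neq j'$ — and within block $j$ the $\nmask$ masked samples are i.i.d.\ with conditional variance $v(t^{(j)})$; hence $\V\bracket*{\widehat{\mathcal{B}}_{\pi}(y)\mid\stime}=\frac{1}{\ntime^2}\sum_{j=1}^{\ntime}v(t^{(j)})/\nmask$, and taking expectations over $\stime$ gives $c_2/(\ntime\nmask)=c_2/\nbudget$. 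Adding the two contributions,
\[
\V \widehat{\mathcal{B}}_{\pi}(y)=\frac{c_1}{\ntime}+\frac{c_2}{\nbudget}.
\]

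Both parts now follow directly. For (ii): the summand $c_2/\nbudget$ is fixed by the budget and does not depend on the allocation at all, while $c_1/\ntime$ is strictly decreasing in $\ntime$; so over all factorizations $\nbudget=\ntime\nmask$ with integer $\nmask\ge 1$, the variance is minimized precisely by making $\ntime$ as large as possible, i.e.\ $\ntime=\nbudget$, $\nmask=1$. For (i): since $1\le\ntime\le\nbudget$, the formula gives $c_2/\nbudget\le\V \widehat{\mathcal{B}}_{\pi}(y)$, hence $\V \widehat{\mathcal{B}}_{\pi}(y)=\Omega(1/\nbudget)$ (nondegenerately $c_2>0$), while for any allocation keeping $\nmask=O(1)$ — in particular the optimal one — we have $\ntime=\Theta(\nbudget)$ and thus $\V \widehat{\mathcal{B}}_{\pi}(y)=(c_1/\nmask+c_2)/\nbudget=\Theta(1/\nbudget)$.

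I expect the only genuinely delicate point to be the conditional-independence bookkeeping in the second term: one must invoke the assumption that, given $y$ and $\stime$, the masked-data sets for distinct timesteps are independent, so that the variance of the double sum carries no cross terms and splits cleanly over $j$; everything else is routine first- and second-moment computation. The unbiasedness statement implicitly needed, $\E \widehat{\mathcal{B}}_{\pi}(y)=\mathcal{B}_{\pi}(y)$, is just linearity of expectation, as already noted below Eq.~\eqref{eq:elbo-mc}.
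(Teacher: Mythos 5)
Your proposal is correct and follows essentially the same route as the paper: a law-of-total-variance decomposition conditioning on the sampled timesteps, yielding $\V\widehat{\mathcal{B}}_{\pi}(y)=\mathsfit{V}_{t}/\ntime+\mathsfit{V}_{y_t}/(\ntime\nmask)$ (the paper's Lemma on the variance of the ELBO estimator, with your $c_1,c_2$ playing the roles of $\mathsfit{V}_t,\mathsfit{V}_{y_t}$), from which both the $\Theta(1/\nbudget)$ rate and the optimality of $\ntime=\nbudget,\nmask=1$ follow exactly as in the paper's argument. Your added caveat that the $\Theta(1/\nbudget)$ claim requires $\ntime=\Theta(\nbudget)$ matches the paper's implicit assumption of a fixed allocation proportion.
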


\vspace{-0.3em}

\begin{restatable}[Increase the correlation]{proposition}{thmsharingsample}
\label{prop:sharing-sample}
Given any response $y$, supposing $\mathrm{Corr}\paren*{\!\widehat{\mathcal{B}}_{\pi_\theta}(y), \widehat{\mathcal{B}}_{\pi_{\mathrm{ref}}}(y)\!}>0$ when the Monte Carlo samples $\stime$ and $\{\smask{(j)}\}_{j=1}^{\ntime}$ are shared between $\widehat{\mathcal{B}}_{\pi_\theta}(y)$ and $\widehat{\mathcal{B}}_{\pi_{\rm ref}}(y)$, we have: Sharing Monte Carlo samples yields lower $\V\hat{s}_{\theta}(y_w, y_l) $ than using independent samples.
\end{restatable}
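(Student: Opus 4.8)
The plan is to read the result off directly from the variance decomposition in Eq.~\eqref{eq:variance-of-preference}, comparing the shared and the independent sampling schemes term by term. The only quantity appearing there that depends on the scheme is the correlation $\mathrm{Corr}\paren*{\widehat{\mathcal{B}}_{\pi_\theta}(y),\widehat{\mathcal{B}}_{\pi_{\mathrm{ref}}}(y)}$ for each $y\in\{y_w,y_l\}$; everything else---the individual ELBO variances, and the fact that the $y_w$- and $y_l$-blocks contribute no cross terms---must be shown to be held fixed across the two schemes, and that is the only step needing care.

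First I would record the invariance of the marginals under resampling. Each estimator $\widehat{\mathcal{B}}_{\pi}(y)$ in Eq.~\eqref{eq:elbo-mc} is a fixed deterministic function of the sampled pairs $\{(t^{(j)}, y^{(k)}_{t^{(j)}})\}$, and the law of those pairs---$t^{(j)}\sim\mathcal{U}[0,1]$, $y^{(k)}_{t^{(j)}}\sim q(\cdot\mid t^{(j)}, y)$---is the same whether or not the realized samples are also reused to form $\widehat{\mathcal{B}}_{\pi_{\mathrm{ref}}}(y)$. Hence the one-dimensional distribution of $\widehat{\mathcal{B}}_{\pi_\theta}(y)$, and likewise that of $\widehat{\mathcal{B}}_{\pi_{\mathrm{ref}}}(y)$, is identical across the two schemes, so $\V\widehat{\mathcal{B}}_{\pi_\theta}(y)$ and $\V\widehat{\mathcal{B}}_{\pi_{\mathrm{ref}}}(y)$ are unchanged. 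Moreover, since the antithetic sharing couples only the $\pi_\theta$- and $\pi_{\mathrm{ref}}$-estimates for a common input $y$, the sample blocks used for $y_w$ and for $y_l$ remain independent under both schemes, so Eq.~\eqref{eq:variance-of-preference}---which already drops all cross terms between the $y_w$- and $y_l$-estimators---is valid in both.

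Next I would evaluate the scheme-dependent correlations. Under independent sampling, $\widehat{\mathcal{B}}_{\pi_\theta}(y)$ and $\widehat{\mathcal{B}}_{\pi_{\mathrm{ref}}}(y)$ are functions of disjoint independent sample sets, hence independent, so the correlation is $0$; under shared sampling it is positive by the hypothesis of the proposition. Subtracting Eq.~\eqref{eq:variance-of-preference} for the independent scheme from the same expression for the shared scheme, everything cancels except the correlation terms, leaving
\[
\V^{\mathrm{shared}}\hat{s}_{\theta}(y_w,y_l) - \V^{\mathrm{indep}}\hat{s}_{\theta}(y_w,y_l)
= -2\beta^2\!\!\sum_{y\in\{y_w,y_l\}}\!\!\mathrm{Corr}\paren*{\widehat{\mathcal{B}}_{\pi_\theta}(y),\widehat{\mathcal{B}}_{\pi_{\mathrm{ref}}}(y)}\sqrt{\V\widehat{\mathcal{B}}_{\pi_\theta}(y)\,\V\widehat{\mathcal{B}}_{\pi_{\mathrm{ref}}}(y)} \ \le\ 0 ,
\]
with the inequality strict whenever $\V\widehat{\mathcal{B}}_{\pi_\theta}(y)$ and $\V\widehat{\mathcal{B}}_{\pi_{\mathrm{ref}}}(y)$ are positive for at least one $y\in\{y_w,y_l\}$, i.e.\ in any non-degenerate case. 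This is exactly the claimed reduction.

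The whole argument is bookkeeping on top of Eq.~\eqref{eq:variance-of-preference}; the main (and mild) obstacle is the marginal-invariance observation---that reusing samples alters the \emph{coupling} of the two ELBO estimators but none of their one-dimensional laws---since this is what pins down every term of Eq.~\eqref{eq:variance-of-preference} except the correlations across the comparison. Verifying that the assumed positivity of the shared-sample correlation is reasonable (e.g.\ it holds when $\ell_{\pi_\theta}$ and $\ell_{\pi_{\mathrm{ref}}}$ are close, as at initialization) belongs to the surrounding discussion rather than the proof, since the proposition takes it as given.
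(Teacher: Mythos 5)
Your proposal is correct and takes essentially the same route as the paper: the paper's own proof simply reads the result off Eq.~\eqref{eq:variance-of-preference} by noting that a positive correlation term is subtracted, while you additionally spell out the (implicitly assumed) facts that the marginal laws of each $\widehat{\mathcal{B}}_{\pi}(y)$ are unchanged by the coupling and that the correlation vanishes under independent sampling. Your version is a more careful write-up of the same argument, and your remark that strictness requires nondegenerate ELBO variances is a small but accurate refinement the paper glosses over.
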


Proposition~\ref{prop:budget-and-allocation} characterizes a quantitative relationship between the variance of ELBO and the sampling budget $n$ (\emph{first technique}), and derives the optimality of allocating the entire budget across timesteps (\emph{second technique}). 
Proposition~\ref{prop:sharing-sample} is inspired by the classical antithetic variates method~\citep{kroese2013handbook}, where shared randomness is leveraged to reduce the variance of the difference between paired estimates (\emph{third technique}). 
The result and its assumption are quite natural since the current and reference policies typically share initialization and exhibit similar preferences on the same inputs. This proposition primarily highlights how to leverage their positive correlation to reduce variance.

The empirical effectiveness of VRPO is examined in Section~\ref{exp:settings}.
While we need to emphasize that our contribution lies not only in the proposed techniques themselves but also in the systematic analysis that motivates and supports them.
Unlike approaches relying purely on empirical tuning or prior experience with continuous diffusion for visual data, our theoretical analysis provides transferable insights into variance reduction strategies, offering guidance for MDM alignment and helping rule out suboptimal implementation choices.

\begin{figure}[t]
  \begin{center}
\centerline{\includegraphics[width=1.0\columnwidth]{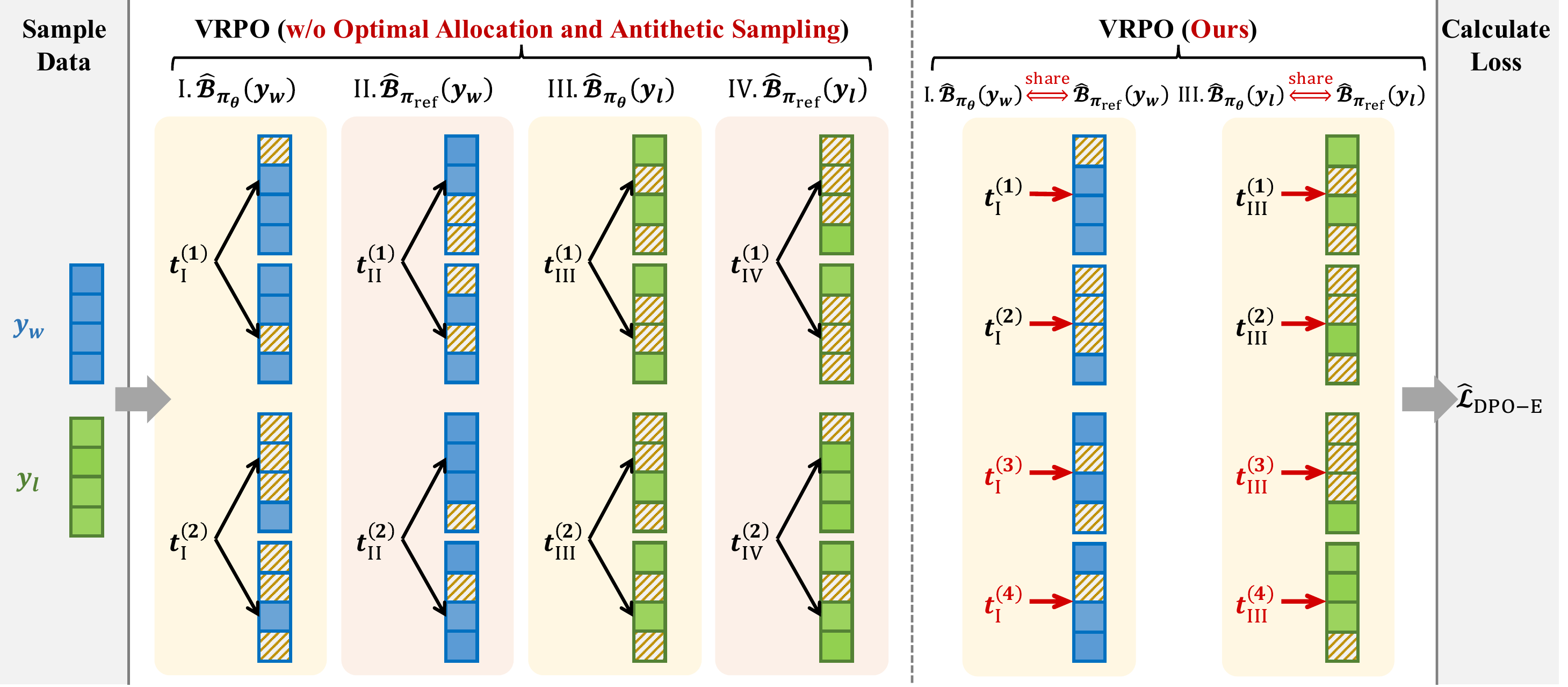}}
    \caption{\textbf{Illustration of VRPO.} 
    We compare VRPO (right) with VRPO without optimal allocation and antithetic sampling (left).
    VRPO allocates the sampling budget across timesteps to sample only one masked data per timestep (indicated by red arrows) and shares Monte Carlo samples between paired ELBOs (highlighted with the red annotations above the blocks).
    }
    \label{fig:method}
  \end{center}
  \vskip -0.15in
\end{figure}

\subsection{Extension to Other Alignment Methods}
\label{sec:extension}

The variance reduction techniques and analysis in VRPO are not limited to DPO, but naturally extend to other alignment algorithms that involve estimating the ELBO or subtracting two correlated ELBOs, which is a commonly encountered scenario when applying alignment to MDMs.

For example, PPO~\citep{schulman2017proximal} and GRPO~\citep{shao2024deepseekmath} optimize variants of the objective (see Eq.(6) in \citet{schulman2017proximal}):
$
\E_{\pi_{\mathrm{old}}}\left[\frac{\pi_{\theta}(y \vert x)}{\pi_{{\theta}_{\mathrm{old}}}(y \vert x)} \hat{A}(x,y)\right],
$
where $\hat{A}(x,y)$ is the advantage function computed using a KL-penalized reward (see Eq.(2) in \citet{ouyang2022training}):
$
r_{\theta}(x,y) - \beta\log\frac{\pi_{\theta}(y \vert x)}{\pi_{\mathrm{ref}}(y \vert x)}.
$
For both equations, when applied to MDMs, our variance reduction techniques can be directly used to reduce the variance in ELBO-based estimation for \emph{likelihood} terms $\pi(y \vert x)$ or \emph{likelihood-ratio} terms $\frac{\pi_{1}(y \vert x)}{\pi_{2}(y \vert x)}$ without introducing bias. These terms are structurally similar to those in the DPO loss (Eq.~\eqref{eq:dpo}), and the applicability of our techniques is supported by analogous analysis as in Propositions~\ref{prop:budget-and-allocation} and \ref{prop:sharing-sample}. 
In fact, the analysis becomes even simpler in these settings, as they do not involve the outer nonlinear $\log\sigma(\cdot)$ function that introduces additional challenge to providing theoretical guarantees as in DPO.

\section{Experiments}
\label{exp:settings}

We align LLaDA~\citep{nie2025large} using VRPO for general tasks and implement extensive evaluation on common benchmarks. 
We briefly present the setup, with more details provided in Appendix~\ref{app:exp}.

\textbf{Data.} We train LLaDA 8B Instruct~\citep{nie2025large} for one epoch on 350K preference pairs using VRPO, resulting in LLaDA 1.5. The data are collected internally at scale across rich scenarios and undergo several processing steps, including filtering out low-quality samples, removing duplicates via similarity matching, using a reward model to rank data, and replacing some responses with outputs from advanced LLMs. This process yields a high-quality and diverse dataset covering a wide range of topics such as writing, dialogue, knowledge Q\&A, reasoning, mathematics, and coding.

\textbf{Computational Cost.} We use a sampling budget $\nbudget=8$ for VRPO by default. 
This results in roughly an 8 times increase in computation compared to methods without Monte Carlo estimation (e.g., ARMs or setting $\nbudget=1$).
Despite this, the overall cost remains modest---less than 0.5\% of pre-training---making the added overhead practically acceptable.
If considering a fixed computational budget, VRPO’s optimal allocation and antithetic sampling techniques can still improve the effectiveness of preference optimization (relevant discussions are provided in ablation studies in Section~\ref{sec:ablation}).

\textbf{Metrics and evaluation.} Following common practice in open-source LLMs~\citep{grattafiori2024llama, yang2024qwen2, liu2024deepseek}, we conduct comprehensive evaluation of LLaDA 1.5 across three categories of tasks: mathematics and scientific reasoning (GSM8K~\citep{cobbe2021training}, Math~\citep{hendrycks2021measuring}, GPQA~\citep{rein2023gpqa}), code generation (HumanEval~\citep{chen2021evaluating}, MBPP~\citep{austin2021program}), and alignment (IFEval~\citep{zhou2023instruction}, Arena-Hard~\citep{li2024crowdsourced}, AlignBench~\citep{liu2023alignbench}, MTBench~\citep{zheng2023judging}). Specific to MDMs, there are three commonly used sampling strategies for inference, including diffusion sampling, diffusion semi-autoregressive sampling~\citep{nie2025large}, and low-confidence remasking~\citep{chang2022maskgit}. Following common practice in MDM evaluation~\citep{nie2025large}, we adopt the best sampling strategy for each task. Detailed descriptions of the sampling strategies employed and ablations across different sampling methods are provided in Appendix~\ref{app:exp_eval}.

\subsection{Benchmark Results}
\label{sec:benchmark}

 \begin{table*}[t]
    \centering
    \caption{\textbf{Benchmark results.} We compare the performance of \emph{LLaDA 1.5} against \emph{LLaDA Instruct}~\citep{nie2025large} and \emph{LLaDA with naive DPO} across various benchmarks, including mathematics, code, and alignment. The results show overall improvements for VRPO.
    }
    \label{tab:dpo}
    \vspace{.2cm}
    \begin{adjustbox}{max width=\textwidth}
    \begin{tabular}{l|ccc}
      \toprule
        & LLaDA 8B Instruct & LLaDA DPO & LLaDA 1.5 8B \\
        \midrule
        Post-training & SFT & SFT + naive DPO & SFT + VRPO (\textbf{Ours}) \\
        \midrule
        \multicolumn{4}{c}{Mathematics \& Science}\\
        \midrule
        GSM8K & 78.6 & 80.7 ({\color{green}+2.1}) & 83.3 ({\color{green}+4.7})  \\
        Math & 42.2 & 41.6 ({\color{red}-0.6}) & 42.6 ({\color{green}+0.4}) \\
        GPQA & 33.3 & 34.3 ({\color{green}+1.0}) & 36.9 ({\color{green}+3.6}) \\
        \midrule
        \multicolumn{4}{c}{Code} \\
        \midrule
        HumanEval & 49.4 & 48.2 ({\color{red}-1.2}) & 52.4 ({\color{green}+3.0}) \\
        MBPP & 41.0 & 41.4 ({\color{green}+0.4}) & 42.8 ({\color{green}+1.8}) \\
        \midrule
        \multicolumn{4}{c}{Alignment Tasks} \\
        \midrule
        IFEval & 62.2 & 62.0 ({\color{red}-0.2}) & 66.2 ({\color{green}+4.0}) \\
        Arena-Hard & 10.0 & 11.9 ({\color{green}+1.9}) & 14.3 ({\color{green}+4.3}) \\
        AlignBench & 5.4 & 5.8 ({\color{green}+0.4}) & 5.9  ({\color{green}+0.5}) \\
        MTbench & 7.2 & 7.1 ({\color{red}-0.1}) & 7.3  ({\color{green}+0.1}) \\
      \bottomrule
    \end{tabular}
    \end{adjustbox}
\end{table*}

Table~\ref{tab:dpo} presents benchmark results for three models: LLaDA Instruct; LLaDA with naive DPO ($n_t = 1$, $n_{y_t} = 1$, without antithetic sampling); and LLaDA 1.5 with VRPO, which fully incorporates variance-reduction techniques ($n_t = 8$, $n_{y_t} = 1$, with antithetic sampling). Appendix~\ref{app:case_study} lists instruction-following case studies comparing LLaDA 1.5 and LLaDA Instruct, as a supplement. Ablations for VRPO's components under a fixed compute budget are provided in Section~\ref{sec:ablation}.

As a result, LLaDA 1.5 consistently outperforms both baselines across all benchmarks, showing the overall effectiveness of VRPO on various tasks. 
Particularly, we observe that LLaDA 1.5 nonetheless exhibits strong mathematical performance. As also shown in the right panel of Figure~\ref{fig:proformance_mdms}, compared with similar-scale language MDMs and ARMs~\citep{nie2025large,dream2025,zhao2025d1scalingreasoningdiffusion,dubey2024llama, bi2024deepseek}, it remains competitive and achieves the highest four-shot score on GSM8K and the highest zero-shot score on Math. Overall, these results demonstrate the effectiveness of VRPO, laying the groundwork for future work to further enhance MDMs' performance.

\subsection{Ablation Experiments}
\label{sec:ablation}

\begin{table*}[t]
    \centering
    \caption{\textbf{Ablation of VRPO variance reduction strategies.} 
    We report estimator variances and benchmark results under different sampling configurations.
    As for biases, we refer to Figure~\ref{fig:iilustrition-logsigma-var} as an illustration since they are difficult to measure in practice.
    Results confirm that techniques in VRPO generally improve task performance, supporting the theoretical analysis in Section~\ref{sec:method}.
    }
    \label{tab:ablation}
    \vspace{.2cm}
    \begin{adjustbox}{max width=\textwidth}
    \begin{tabular}{l|c|cc|cc|c}
      \toprule
        & Base & \multicolumn{2}{|c|}{Budget} & \multicolumn{2}{|c|}{Allocation} & Antithetic \\
        \midrule
        \# Timesteps $n_t$ & 4 & 1 & 8 & 1 & 2 & 4\\
        \# Masked samples $n_{y_t}$ & 1 & 1 & 1 & 4 & 2 & 1\\
        Antithetic sampling & \checkmark & \checkmark & \checkmark & \checkmark & \checkmark & \ding{55}\\
        \midrule
        \multicolumn{7}{c}{Variances}\\
        \midrule
        Var of score estimator & 2.2 & 44.0 & 1.0 & 7.3 & 4.7 & 2183.7 \\
        Var of loss & $3.1 \!\times\! 10^{-3}$ & $8.7 \!\times\! 10^{-2}$ & $2.6 \!\times\! 10^{-3}$ & $3.2 \!\times\! 10^{-2}$ & $7.3 \!\times\! 10^{-3}$ & 62.0 \\
        Var of gradient & 2.5 & 13.0 & 1.6 & 4.7 & 2.5 & 10.6 \\
        \midrule
        \multicolumn{7}{c}{Mathematics \& Science}\\
        \midrule
        GSM8K & 82.8 & 80.1 & 83.3 & 81.4 & 82.3 & 82.0 \\
        Math & 42.3 & 41.7 & 42.6 & 41.9 & 42.4 & 42.4 \\
        GPQA & 36.4 & 34.3 & 36.9 & 34.9 & 36.4 & 35.9 \\
        \midrule
        \multicolumn{7}{c}{Code} \\
        \midrule
        HumanEval & 51.2 & 50.6 & 52.4 & 48.2 & 48.8 & 47.0 \\
        MBPP & 42.8 & 40.6 & 42.8 & 40.8 & 41.0 & 41.2 \\
        \midrule
        \multicolumn{7}{c}{Alignment Tasks} \\
        \midrule
        IFEval & 66.1 & 63.9 & 66.2 & 64.8 & 66.2 & 65.8 \\
        Arena-Hard & 13.9 & 13.5 & 14.3 & 13.8 & 13.4 & 15.6 \\
        AlignBench & 5.9 & 5.6 & 5.9 & 5.8 & 5.9 & 5.9 \\
        MTbench & 7.4 & 7.0 & 7.3 & 7.0 & 7.2 & 7.2 \\
      \bottomrule
    \end{tabular}
    \end{adjustbox}
\end{table*}
 
We conduct ablation studies to evaluate the impact of each variance reduction technique in VRPO. We vary sampling configurations in three factors corresponding to these components: \textbf{(1)} the sampling budget $\nbudget = \ntime \times \nmask$, \textbf{(2)} the allocation strategy between the number of timesteps and masked samples per timestep $\ntime / \nmask$, and \textbf{(3)} the use of antithetic sampling. 
We set the base configuration as $\nbudget=4$, $\ntime / \nmask = 4 / 1$, with antithetic sampling used.
For each configuration, we measure: \textbf{(i)} the variance of the score estimator \( \V \hat{s}_\theta \), \textbf{(ii)} the additional variances of the loss and gradient, and \textbf{(iii)} benchmark results spanning mathematics, code, and alignment. Results are summarized in Table~\ref{tab:ablation}. To illustrate the impact of these techniques on the optimization process more concretely, we also provide the training loss dynamics for the ablation configurations in Appendix~\ref{app:exp_train}. Details of the empirical variance computation are provided in Appendix~\ref{app:exp}. We highlight key observations below.

\textbf{Effect of preference score estimator variance.} Lower variances of the score estimator generally lead to lower variances in both the loss and gradient, along with improved task performance. This empirical trend supports our theoretical insight in Theorem~\ref{thm:effect-of-vp-on-bias-and-var} to control the errors by \( \V \hat{s}_\theta \).

\textbf{Increasing sampling budget.} Increasing the sampling budget \( \nbudget \) consistently reduces estimator variance and improves task performance. For instance, increasing \( \nbudget \) from 1 to 8 reduces \( \V \hat{s}_\theta \) from 44.0 to 1.0 and improves GSM8K accuracy from 80.1 to 83.3, validating our finding in Proposition~\ref{prop:budget-and-allocation}~\textbf{(i)}.

\textbf{Comparison under fixed sampling budget.} The first, fourth, and sixth columns show results under a fixed sampling budget, where the fourth and sixth columns disable the optimal allocation technique and antithetic sampling technique, respectively. For \textbf{optimal allocation}, it is shown to generally yield lower variance and better results than repeating multiple mask samples per timestep, supporting the analysis in Proposition~\ref{prop:budget-and-allocation}~\textbf{(ii)}. For \textbf{antithetic sampling}, we observe that it leads to notable decreases in variance, confirming our prediction in Proposition~\ref{prop:sharing-sample}. 
Despite this, we also observe that these sharp reductions in variance do not always translate into substantial improvements on downstream benchmarks. We believe this is understandable since the benchmark performance depends on two factors: \emph{optimization} and \emph{generalization}. VRPO is designed to improve \emph{optimization} and has shown effective (as further illustrated in Figure~\ref{fig:loss_curve}), whereas \emph{generalization} is influenced by complex factors that are rarely feasible to control. We hypothesize that disabling antithetic sampling may expose the model to a broader diversity of data patterns, which could benefit certain downstream tasks.

To summarize, these results demonstrate a strong empirical correlation between the proposed techniques and variance reduction, and benchmark results further confirm their essential role in effective preference optimization, which aligns with the theoretical analysis in Section~\ref{sec:method}.

\section{Related Work}

\textbf{Masked diffusion models.} MDMs are inspired by advances in discrete diffusion models~\citep{sohl2015deep,austin2021structured}, which introduced new forward and reverse transitions and enabled numerous variants~\citep{campbell2022continuous,hoogeboom2021argmax,he2022diffusionbert,wu2023ardiffusion,zheng2024masked}. Empirically, MDMs can match ARMs in perplexity, and simplified objectives for masked diffusion have been proposed for efficient training~\citep{lou2023discrete,sahoo2024simple,shi2024simplified,ou2024your}. Subsequent work has explored scaling properties~\citep{nie2024scaling}, including training from scratch and adaptation from pre-trained autoregressive models~\citep{nie2025large,gong2024scaling,dream2025}.

\textbf{Alignment of MDMs.} Recent studies have emerged to explore aligning MDMs. \citet{zekri2025fine} introduced a general policy-gradient method leveraging the denoising distribution of the discrete diffusion model during the reverse process.
\citet{borso2025preference} adopts a continuous-time Markov chain view for discrete diffusion, treating each token step as an action, and introduces a DPO variant, validated on small-scale binary sequence generation. 
\citet{zhao2025d1scalingreasoningdiffusion,yang2025mmada,tang2025wd1} treat each token step as an action and develop GRPO-based methods to enhance reasoning ability. 
\citet{huang2025reinforcing} propose a GRPO variant viewing intermediate diffusion steps as the RL trajectory, focusing on the reasoning and code generation tasks. 
\citet{gong2025diffucoder} present a GRPO-based algorithm for code generation with a coupled-sampling variance-reduction technique, which can be used complementary to VRPO. 
Compared with these existing and concurrent works, we investigate the alignment of MDMs based on DPO with ELBO-based log-likelihood approximation, which serves as a natural choice for diffusion models. The proposed VRPO incorporates theoretically grounded variance-reduction techniques and is validated through large-scale experiments on general alignment tasks beyond reasoning and code generation. We believe our work provides a meaningful complement to existing MDMs alignment methods.

\textbf{Variance reduction techniques.}
Our work relates to the broad fields of variance reduction in Monte Carlo methods, doubly stochastic optimization, and variational inference. 
In Monte Carlo methods, variance reduction aims to enhance estimation accuracy by improving sampling strategies.
Classic techniques include control variables and stratified sampling~\citep{kroese2013handbook}, where our approach adapts antithetic variates to couple correlated ELBO terms.
The doubly expectation in ELBOs further parallels the nested structure in doubly SGD~\citep{dai2014scalable,titsias2014doubly,gower2020variance, kim2024demystifying}, motivating decomposition via the law of total variance to isolate distinct variance sources.
Our approach also conceptually aligns with importance weighted variational inference~\citep{burda2015importance, huang2019note}, where the outer bias is decreased by reducing the inner variance.

\section{Conclusion}
\label{sec:conclusion}

We analyze the challenges of aligning MDMs with human preference, particularly the high variance and bias inherent in the ELBO-based likelihood estimation. 
To address these issues, we propose VRPO, a systematic framework that incorporates variance reduction techniques with both theoretical guarantees and empirical validation, which provides transferable insight beyond specific architectures or datasets.
The resulting model, LLaDA 1.5, demonstrates stronger general capabilities than LLaDA, with pronounced strengths in mathematics, coding, and alignment, supporting the effectiveness of VRPO at a large scale. 
Careful ablation studies further investigate each component in VRPO, showing their effect on variance reduction and thus the stability and efficiency of the optimization. Potential extensions of the proposed variance reduction techniques to broader RL-based alignment algorithms are also discussed. 
We hope this work provides useful guidance for future research on MDM alignment and contributes to the continued development of diffusion-based language models.

\section*{Ethics Statement}

This paper focuses on aligning MDMs with human preferences to improve helpfulness. Nonetheless, misuse risks remain: the models may still generate discriminatory, biased, or otherwise harmful content. To mitigate these risks, we curated and filtered the preference data to remove harmful material where feasible and will continue to evaluate and refine our safeguards to reduce harmful outputs.

\bibliography{iclr2026_conference}
\bibliographystyle{plainnat}

\newpage
\clearpage

\appendix

\begin{tcolorbox}[
    sharp corners,
    boxrule=0.8pt,
    colback=white,
    colframe=black,
]
    {\hypersetup{hidelinks}
    \tableofcontents
    }
\end{tcolorbox}

\newpage

\begin{figure}[t]
    \centering
\begin{tikzpicture}[
  every text node part/.style={align=center},
  every node/.style={node distance=3em},
]

\node (vrpo)  [draw, fill=blue!20, rounded corners] {\emph{VRPO}: \\ (1) Sampling budget \ (2) Optimal allocation \  (3) Antithetic sampling \\(Section~\ref{sec:VRPO})};
\node (str) [draw, fill=gray!20, rounded corners, above=of vrpo]  {Identify two variance reduction strategies:\\ I. $\V\widehat{\mathcal{B}}_{\pi}, \V\nabla_{\theta}\widehat{\mathcal{B}}_{\pi_\theta}$ $\downarrow$ \ \ II. $\mathrm{Corr}(\widehat{\mathcal{B}}_{\pi_\theta},\widehat{\mathcal{B}}_{\pi_\mathrm{ref}})$ $\uparrow$};
\node (varseffect) [draw, fill=gray!20, rounded corners, above=of str]  {Control bias and variance by $\V \hat{s}_{\theta}(y_w,y_l)$ and $\V \nabla\hat{s}_{\theta}(y_w,y_l)$ \\ (Theorem~\ref{thm:effect-of-vp-on-bias-and-var}, Theorem~\ref{thm:effect-of-vp-on-bias-and-var-grad})};

\node (unbias)  [draw, fill=gray!20, rounded corners, below left=12ex and 4em =of vrpo] {Unbiasedness guarantee \\ (Proposition~\ref{prop:unbiased-vrpo})};
\node (vr)  [draw, fill=gray!20, rounded corners, below right=12ex and 4em =of vrpo] {Variance reduction guarantees: \\ I. Using (1) and (2) \ (Proposition~\ref{prop:budget-and-allocation}, Proposition~\ref{prop:budget-and-allocation-grad}) \\
II. Using (3) \ (Proposition~\ref{prop:sharing-sample}) };

\draw[-latex, thick] (str) -- (vrpo) node[pos=0.5, left]{};
\draw[-latex, thick] (varseffect) -- (str) node[pos=0.5, left]{};
\draw[-latex, dashed, thick] (vrpo) -- (unbias) node[pos=0.5, left]{};
\draw[-latex, dashed, thick] (vrpo) -- (vr) node[pos=0.5, left]{};

\end{tikzpicture}
    \caption{
    \textbf{Illustration of the analysis process.} This diagram outlines the conceptual flow that leads to the proposed VRPO method. 
    Gray boxes represent theoretical analyses, and the blue box highlights the final sampling strategy. 
    Starting from a bias and variance analysis of the estimated loss and gradient, we identify the score-estimator variance as a dominant controller. These theoretical findings collectively motivate the design of the VRPO algorithm, which is equipped with provable properties (dashed lines): unbiasedness and guaranteed variance reduction.}
    \label{fig:analysis-process}
\end{figure}

\section{Additional Formulation of MDMs}
\label{app:mdm}

For completeness, we introduce the formulation of MDMs in this section. For further details, please refer to prior works~\citep{shi2024simplified, sahoo2024simple, ou2024your}.

In the forward process, MDMs corrupt an original sequence $y$ by progressively masking tokens. Each token is masked independently at a noise level $t \in [0, 1]$. Let $y \in \{0, 1, \dots, K - 1\}^L$ be the original full response, where $K$ denotes the vocabulary size and $L$ denotes the sequence length, given a prompt $x$, the forward process is formulated as:
\begin{align}
    q(y_t | t, y, x) = \prod_{i=1}^{L} q(y_t^i | t, y^i, x), \quad
    q(y_t^i | t, y^i, x) = 
    \begin{cases}
        1 - t, & y_t^i = y^i, \\
        t, & y_t^i = \textbf{M}, \\
    \end{cases}
\end{align}
where $y^i$ denotes the $i$-th token of response $y$, and $\textbf{M}$ denotes the mask token.

The reverse process starts from a fully masked sequence and gradually unmask tokens to recover meaningful language sequences. For timesteps $0 \leq s < t \leq 1$, the reverse process is defined as:
\begin{align}
    q(y_s | s, t, y_t, x) = \prod_{i=1}^L q(y_s^i | s, t, y_t, x), \quad
    q(y_s^{i} | s, t, y_t, x) = 
    \begin{cases}
        \frac{t - s}{t} p_\theta(y^i | y_t, x), & y_t^i = \textbf{M} \land y_s^i \neq \textbf{M}, \\
        \frac{s}{t}, & y_t^i = \textbf{M} \land y_s^i = \textbf{M}, \\
        1, & y_t^i \neq \textbf{M} \land y_s^i = y_t^i, \\
        0, & \text{otherwise}, \\
    \end{cases}
\end{align}
where $p_\theta$ is modeled by the mask prediction model. 

As stated in Section~\ref{sec:preliminary_mdm}, the exact log-likelihood $\log\pi(y | x)$ in MDMs is typically approximated by its ELBO~\citep{lou2023discrete,ou2024your,shi2024simplified,sahoo2024simple}:
\begin{align}
\label{eq:app_elbo_appro}
\mathcal{B}_{\pi}(y | x) \triangleq \mathbb{E}_{t \sim \mathcal{U}[0,1]} \mathbb{E}_{y_t \sim q(y_t | t, y, x)} \ell_{\pi}(y_t, t, y | x),
\end{align}
where
\begin{align}
\ell_{\pi}(y_t, t, y | x) \triangleq \left[ \frac{1}{t} \sum_{i=1}^L \mathbf{1}[y_t^i = \mathbf{M}] \log p_{\theta}(y^i | y_t, x) \right].
\end{align}

As noted in~\citep{ou2024your, nie2025large}, the following formulation is an equivalent approximation:
\begin{align}
\label{eq:app_elbo_appro_low_var}
\mathcal{B}^\prime_{\pi}(y | x) \triangleq \mathbb{E}_{l \sim \mathcal{U}(\{ 1, 2, \dots, L \})} \mathbb{E}_{y_l \sim q(y_l | l, y, x)} \ell^\prime_{\pi}(y_l, l, y | x),
\end{align}
where
\begin{align}
\ell^\prime_{\pi}(y_l, l, y | x) \triangleq \left[ \frac{L}{l} \sum_{i=1}^{L} \mathbf{1}[y_l^i = \mathbf{M}] \log p_\theta (y^i | y_l, x) \right],
\end{align}
with $l$ uniformly sampled from $\{ 1, 2, \dots, L \}$, and $y_l$ denoting the sequence obtained by masking $l$ tokens without replacement.

In practice, although Eq.~\eqref{eq:app_elbo_appro_low_var} and Eq.~\eqref{eq:app_elbo_appro} are equivalent in expectation~\citep{ou2024your}, the former typically yields lower variance during estimation~\citep{nie2025large}. Intuitively, Eq.~\eqref{eq:app_elbo_appro_low_var} deterministically masks exactly $l$ out of $L$ tokens in each sequence, providing more consistent samples. In contrast, Eq.~\eqref{eq:app_elbo_appro} relies on masking an expected fraction $t$ of the tokens, which introduces greater variability into the estimation process. In practice, we apply Eq.~\eqref{eq:app_elbo_appro_low_var} as our log-likelihood estimator.

\section{Additional Theoretical Contents}
\label{app:proofs}

\textbf{Notations.} We use $\mathsfit{S}_{\mathcal{B}_{\pi \mid y}}$ and $\spref$ to denote the stochastic sampling in the ELBO estimates and the resulting preference score, respectively. Let $\stime$ and $\mathsfit{S}_{y_{t^{(j)}} \mid y}$ be as defined in Eq.~\eqref{eq:sampling-of-elbo}, $\sdata$ be as defined in Eq.~\eqref{eq:dpo-elbo-mc}.  $\Theta(\cdot)$ denotes functions of the same order.

\newcommand{\dashedarrow}{\raisebox{.5ex}{\tikz{\draw[-latex,dashed,thick](0,0) -- (2em,0);}}}
\newcommand{\dottedarrow}{\raisebox{.5ex}{\tikz{\draw[-latex,dotted,thick](0,0) -- (2em,0);}}}

\subsection{Auxiliary Lemmas}

\subsubsection{Properties of $\log\sigma(\cdot)$ (Lemma~\ref{lem:properties-logsigma})}

\begin{lemma}[Properties of $\log\sigma(x)$]
\label{lem:properties-logsigma}
    Let \( f(x) = \log \sigma(x) \), where \( x \in \mathbb{R} \) and \( \sigma(x) = \frac{1}{1 + e^{-x}} \) denotes the sigmoid function. Then \( f \) satisfies the following properties:
    \begin{enumerate}[label=\textbf{(\roman*)}]
        \item \textbf{concavity:} \( f(x) \) is concave;
        \item \textbf{continuity:} \( f \) is 1-Lipschitz continuous on \( \mathbb{R} \), i.e., for all \( x_1, x_2 \in \mathbb{R} \),
        \[
        \abs{f(x_1) - f(x_2)} \leq \abs{x_1 - x_2};
        \]
        \item \textbf{smoothness:} \( f \) is \( \frac{1}{4} \)-smooth on \( \mathbb{R} \), i.e., its derivative is \( \frac{1}{4} \)-Lipschitz continuous: for all \( x_1, x_2 \in \mathbb{R} \),
        \[
        \abs{f'(x_1) - f'(x_2)} \leq \frac{1}{4} \abs{x_1 - x_2}.
        \]
    \end{enumerate}
\end{lemma}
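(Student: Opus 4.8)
The plan is to compute the first two derivatives of $f(x) = \log\sigma(x)$ in closed form and then read off all three properties from elementary bounds on these derivatives. First I would rewrite $f(x) = \log\sigma(x) = -\log(1+e^{-x})$ and differentiate, obtaining $f'(x) = \frac{e^{-x}}{1+e^{-x}} = 1 - \sigma(x) = \sigma(-x)$. Differentiating once more and using the standard identity $\sigma'(x) = \sigma(x)\bigl(1-\sigma(x)\bigr)$ gives $f''(x) = -\sigma(x)\bigl(1-\sigma(x)\bigr)$. With these two expressions in hand, each claim reduces to a one-line estimate.

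For part \textbf{(i)}, since $\sigma(x) \in (0,1)$ for every $x \in \mathbb{R}$, we have $f''(x) = -\sigma(x)\bigl(1-\sigma(x)\bigr) < 0$ on all of $\mathbb{R}$, so $f$ is (strictly) concave. For part \textbf{(ii)}, note $f'(x) = 1 - \sigma(x) \in (0,1)$, hence $\sup_{x \in \mathbb{R}} \abs{f'(x)} \le 1$; the mean value theorem then yields $\abs{f(x_1) - f(x_2)} \le \sup_{\xi} \abs{f'(\xi)}\cdot\abs{x_1-x_2} \le \abs{x_1 - x_2}$, which is exactly $1$-Lipschitz continuity. For part \textbf{(iii)}, observe $\abs{f''(x)} = \sigma(x)\bigl(1-\sigma(x)\bigr)$; substituting $u = \sigma(x) \in (0,1)$, the quadratic $u(1-u)$ attains its maximum $\tfrac14$ at $u = \tfrac12$, so $\sup_{x} \abs{f''(x)} \le \tfrac14$. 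Applying the mean value theorem to $f'$ gives $\abs{f'(x_1) - f'(x_2)} \le \tfrac14 \abs{x_1 - x_2}$, i.e.\ $f$ is $\tfrac14$-smooth.

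There is essentially no obstacle here: the only slightly non-automatic ingredients are the identity $\sigma'(x) = \sigma(x)(1-\sigma(x))$ and the elementary maximization of $u(1-u)$ on $(0,1)$, both of which are standard. If one prefers to avoid invoking the mean value theorem, the Lipschitz/smoothness bounds can instead be obtained by integrating the pointwise derivative bounds along the segment from $x_1$ to $x_2$; the MVT route is the most economical. I would present the derivative computation once at the start and then dispatch (i)–(iii) in three short displays.
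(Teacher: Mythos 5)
Your proof is correct and follows essentially the same route as the paper's: compute $f'$ and $f''$ in closed form, bound $\abs{f'}\le 1$ and $\abs{f''}\le \tfrac14$, and conclude via the mean value theorem (which the paper leaves implicit). The only cosmetic difference is that you express the derivatives via $\sigma(x)(1-\sigma(x))$ while the paper writes $e^{x}/(1+e^{x})^2$; incidentally, your sign analysis for concavity is cleaner than the paper's, which contains a sign typo (``$f''(x)\ge 0$'') at that step.
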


\begin{proof}
    We first compute the first and second derivatives of \( f \). Note that
    \[
    f'(x) = \frac{e^{-x}}{1 + e^{-x}} = \frac{1}{1 + e^{x}} \in (0, 1),
    \]
    and
    \[
    f''(x) = -\frac{e^{x}}{(1 + e^{x})^2} \in [-\frac{1}{4},0).
    \]
    \textbf{(i)}
    Since \( f''(x) \geq 0 \) for all \( x \in \mathbb{R} \), we have \( f \) is concave.
    
    \textbf{(ii)}
    We observe from above that \( \abs{f'(x)} \leq 1 \) for all \( x \in \mathbb{R} \), implying that \( f \) is 1-Lipschitz continuous.
    
    \textbf{(iii)}
    Since \( \abs{f''(x)} \leq \frac{1}{4} \) for all \( x \in \mathbb{R} \), the derivative \( f'(x) \) is \( \frac{1}{4} \)-Lipschitz continuous, and thus \( f \) is \( \frac{1}{4} \)-smooth.
\end{proof}

\subsubsection{Interchangeability of Expectation and Gradient (Lemma~\ref{lem:interchange-expectation-and-gradient})}

\begin{lemma}[Interchangeability of expectation and gradient]
\label{lem:interchange-expectation-and-gradient}
Let $\theta \in \mathbb{R}^d$, and let $X$ be a random variable (or random vector) taking values in a measurable space $\mathcal{X}$. Suppose $f_\theta : \mathbb{R}^d \times \mathcal{X} \to \mathbb{R}$ is differentiable with respect to $\theta$ for all $X \in \mathcal{X}$, and there exists a constant $C > 0$ such that
\(
\norm{ \nabla_\theta f_\theta(X) }_2 \leq C 
\) for all $X \in \mathcal{X}$.
Then the expectation and gradient operators are interchangeable:
\[
\nabla_\theta \mathbb{E}f_\theta(X) = \mathbb{E}\nabla_\theta f_\theta(X).
\]
\end{lemma}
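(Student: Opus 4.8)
The final statement to prove is Lemma~\ref{lem:interchange-expectation-and-gradient}, the interchangeability of expectation and gradient under a uniform bound on $\norm{\nabla_\theta f_\theta(X)}_2$. This is a classical measure-theoretic fact, and the plan is to derive it from the dominated convergence theorem applied to difference quotients, combined with the mean value theorem to get the required domination.

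\textbf{Plan.} First I would reduce to the scalar case by treating each coordinate $\partial_{\theta_i}$ separately; it suffices to show $\partial_{\theta_i}\E f_\theta(X) = \E\,\partial_{\theta_i} f_\theta(X)$ for each $i$. Fix $\theta$ and a coordinate direction $e_i$. For a scalar sequence $h_n \to 0$ (with $h_n \neq 0$), define the difference quotients
\[
g_n(X) \triangleq \frac{f_{\theta + h_n e_i}(X) - f_\theta(X)}{h_n}.
\]
By differentiability of $f_\theta(X)$ in $\theta$ for every $X$, we have $g_n(X) \to \partial_{\theta_i} f_\theta(X)$ pointwise in $X$. The next step is domination: by the mean value theorem applied to the scalar function $s \mapsto f_{\theta + s e_i}(X)$ on the segment between $0$ and $h_n$, there is some $\xi_n = \xi_n(X)$ between $\theta$ and $\theta + h_n e_i$ with $g_n(X) = \partial_{\theta_i} f_{\xi_n}(X)$, hence $\abs{g_n(X)} = \abs{\partial_{\theta_i} f_{\xi_n}(X)} \le \norm{\nabla_\theta f_{\xi_n}(X)}_2 \le C$ by the hypothesis, which holds uniformly over all $X \in \mathcal{X}$ and all $n$. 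Since the constant $C$ is integrable against the law of $X$ (a probability measure), the dominated convergence theorem gives
\[
\lim_{n\to\infty} \E\, g_n(X) = \E\,\partial_{\theta_i} f_\theta(X).
\]
Finally, $\E\, g_n(X) = \frac{\E f_{\theta + h_n e_i}(X) - \E f_\theta(X)}{h_n}$ by linearity of expectation, so the left side converges to $\partial_{\theta_i}\E f_\theta(X)$; since $h_n \to 0$ was an arbitrary sequence, the limit along all sequences agrees, establishing the partial derivative identity. Assembling the coordinates yields $\nabla_\theta \E f_\theta(X) = \E \nabla_\theta f_\theta(X)$.

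\textbf{Main obstacle.} The only delicate point is justifying the domination step cleanly: one needs the bound $\norm{\nabla_\theta f_\theta(X)}_2 \le C$ to hold not just at the base point $\theta$ but along the whole segment traversed by the mean value theorem, which is exactly why the hypothesis is stated as ``for all $X \in \mathcal{X}$'' with $C$ independent of $\theta$ as well (the statement should be read as a uniform bound over the relevant parameter region). Once that uniform bound is in hand, the mean value theorem converts it into a uniform bound on the difference quotients, and the rest is a routine invocation of dominated convergence; there are no real analytic subtleties beyond bookkeeping. A minor technical nicety is that the mean value theorem requires differentiability along the segment, which is supplied by the hypothesis that $f_\theta(X)$ is differentiable in $\theta$ for every $X$.
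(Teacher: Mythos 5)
Your proposal is correct and follows essentially the same route as the paper's proof: reduce to each coordinate, dominate the difference quotients via the mean value theorem using the uniform bound $C$, and conclude by dominated convergence. You actually spell out the details more carefully than the paper does, including the (correct) observation that the gradient bound must be read as uniform in $\theta$ along the segment for the domination step to go through.
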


\begin{proof}[Proof]
Let $\theta \in \mathbb{R}^d$ be fixed. For all $X$, for each $i \in \{1, \dots, d\}$, define $g_i(X) := \frac{\partial}{\partial \theta_i} f_\theta(X)$, which exists since $f_\theta(X)$ is differentiable w.r.t. $\theta$.  
By assumption, we have
\[
\abs{g_i(X)} \leq \norm{\nabla_\theta f_\theta(X)}_2 \leq C.
\]
For each $i$, by the mean value theorem and dominated convergence theorem~\citep[Chapter 5]{bartle2014elements}, we can interchange the expectation and derivative:
\[
\frac{\partial}{\partial \theta_i} \mathbb{E}f_\theta(X) = \mathbb{E} \frac{\partial}{\partial \theta_i} f_\theta(X).
\]
Applying this for each coordinate and stacking the results gives the full gradient interchangeability:
\[
\nabla_\theta \mathbb{E}f_\theta(X) = \mathbb{E}\nabla_\theta f_\theta(X).
\]
\end{proof}

\subsubsection{Bias and Variance of Transformed Random Variable (Lemma~\ref{lem:bound-of-loss-and-gradient})}
\begin{lemma}[Bias and variance of transformed random variable]
\label{lem:bound-of-loss-and-gradient}
    Let $X_\theta$ be a real-valued random variable with $\mathbb{E}X_\theta =\mu_\theta$ with parameter $\theta\in\R^d$, and define function $f(x) = \log \sigma(x)$ on $\R$, where $\sigma(x) = \frac{1}{1 + e^{-x}}$ denotes the sigmoid function.
    Then:
    \begin{enumerate}[label=\textbf{(\roman*)}]
        \item The transformed random variable satisfies:
        \begin{align}
            \E \abs*{f (X_{\theta})-f (\mu_{\theta})} 
            & \leq \sqrt{\V X_{\theta}},  \label{eq:bias-bound-of-loss} \\
            \V f (X_{\theta}) & \leq 4 \V X_{\theta}.   \label{eq:variance-bound-of-loss}
        \end{align}

        \item Suppose there exists a constant $C \geq 0$ such that the gradient of $X_{\theta}$ is uniformly bounded as $\|\nabla_\theta X_{\theta}\|_2 \leq C$. 
        Then, the gradient satisfies:
        \begin{align}
            \E \norm*{\nabla_{\theta} f (X_{\theta})-\nabla_{\theta} f (\mu_{\theta})}_{2} 
            & \leq \frac{C}{4} \sqrt{\V X_{\theta}} + 
            \sqrt{\tr\V\nabla_{\theta}X_{\theta}}, \label{eq:bias-bound-of-gradient} \\
            \tr\V\nabla_{\theta} f (X_{\theta}) 
            & \leq \frac{C^2}{8} \V X_{\theta} + 
            \tr\V\nabla_{\theta}X_{\theta}.   \label{eq:variance-bound-of-gradient}
        \end{align}
    \end{enumerate}
\end{lemma}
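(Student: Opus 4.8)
The plan is to prove the two items in order, with item (ii) leaning on item (i), and throughout to use only three analytic facts about $f=\log\sigma$ from Lemma~\ref{lem:properties-logsigma} (it is $1$-Lipschitz, $\abs{f'}\le 1$, and $f'$ is $\tfrac14$-Lipschitz) together with four elementary probabilistic tools: Jensen's inequality in the form $\E\abs{Z}\le\sqrt{\E Z^2}$, the fact that variance is the smallest mean-square deviation (so $\V Z\le\E\norm{Z-c}_2^2$ for any constant $c$), Cauchy--Schwarz, and Lemma~\ref{lem:interchange-expectation-and-gradient} to differentiate under the expectation sign.

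For item (i), the bias bound \eqref{eq:bias-bound-of-loss} is immediate: $1$-Lipschitzness gives $\abs{f(X_\theta)-f(\mu_\theta)}\le\abs{X_\theta-\mu_\theta}$ pointwise, and taking expectations then applying Jensen yields $\E\abs{f(X_\theta)-f(\mu_\theta)}\le\E\abs{X_\theta-\mu_\theta}\le\sqrt{\V X_\theta}$. For the variance bound \eqref{eq:variance-bound-of-loss} I would bound the centered variable through the anchor $f(\mu_\theta)$: $\abs{f(X_\theta)-\E f(X_\theta)}\le\abs{f(X_\theta)-f(\mu_\theta)}+\abs{\E f(X_\theta)-f(\mu_\theta)}$, where the first term is $\le\abs{X_\theta-\mu_\theta}$ and the second is $\le\sqrt{\V X_\theta}$ by the bias bound just established; squaring, taking expectations, and applying Cauchy--Schwarz to the cross term gives $\V f(X_\theta)=\E\abs{f(X_\theta)-\E f(X_\theta)}^2\le 4\V X_\theta$. (Recentering directly at $f(\mu_\theta)$ would give the sharper $\V f(X_\theta)\le\V X_\theta$; the stated constant $4$ is what the above ``anchor'' argument produces, paralleling the gradient case.)

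For item (ii) the starting point is the chain rule $\nabla_\theta f(X_\theta)=f'(X_\theta)\,\nabla_\theta X_\theta$, together with the observation $\norm{\nabla_\theta f(X_\theta)}_2=\abs{f'(X_\theta)}\,\norm{\nabla_\theta X_\theta}_2\le C$, so Lemma~\ref{lem:interchange-expectation-and-gradient} applies (to both $\nabla_\theta X_\theta$ and $\nabla_\theta f(X_\theta)$) and gives $\nabla_\theta\mu_\theta=\nabla_\theta\E X_\theta=\E\nabla_\theta X_\theta$ and hence $\nabla_\theta f(\mu_\theta)=f'(\mu_\theta)\,\E\nabla_\theta X_\theta$. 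The key algebraic move is the split $\nabla_\theta f(X_\theta)-\nabla_\theta f(\mu_\theta)=U+V$ with $U:=(f'(X_\theta)-f'(\mu_\theta))\,\nabla_\theta X_\theta$ and $V:=f'(\mu_\theta)\,(\nabla_\theta X_\theta-\E\nabla_\theta X_\theta)$. For the bias \eqref{eq:bias-bound-of-gradient} I apply the triangle inequality $\E\norm{U+V}_2\le\E\norm{U}_2+\E\norm{V}_2$: using $\norm{\nabla_\theta X_\theta}_2\le C$ and $\tfrac14$-Lipschitzness of $f'$, $\E\norm{U}_2\le C\,\E\abs{f'(X_\theta)-f'(\mu_\theta)}\le\tfrac{C}{4}\,\E\abs{X_\theta-\mu_\theta}\le\tfrac{C}{4}\sqrt{\V X_\theta}$; using $\abs{f'(\mu_\theta)}\le 1$, $\E\norm{V}_2\le\E\norm{\nabla_\theta X_\theta-\E\nabla_\theta X_\theta}_2\le\sqrt{\tr\V\nabla_\theta X_\theta}$ by Jensen. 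For the variance \eqref{eq:variance-bound-of-gradient} I recenter at the constant $\nabla_\theta f(\mu_\theta)$: $\tr\V\nabla_\theta f(X_\theta)\le\E\norm{U+V}_2^2=\E\norm{U}_2^2+2\,\E\langle U,V\rangle+\E\norm{V}_2^2$, where $\E\norm{U}_2^2\le\tfrac{C^2}{16}\V X_\theta$ and $\E\norm{V}_2^2=f'(\mu_\theta)^2\,\tr\V\nabla_\theta X_\theta\le\tr\V\nabla_\theta X_\theta$.

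The step I expect to be the main obstacle is controlling the cross term $2\,\E\langle U,V\rangle$ while landing \emph{exactly} on the constants $\tfrac{C^2}{8}$ and $1$. A Cauchy--Schwarz bound $2\,\E\langle U,V\rangle\le 2\sqrt{\E\norm{U}_2^2}\,\sqrt{\E\norm{V}_2^2}$ followed by Young's inequality closes the argument cleanly whenever $\abs{f'(\mu_\theta)}\le 1/\sqrt2$, because then the slack $(1-f'(\mu_\theta)^2)\,\tr\V\nabla_\theta X_\theta$ left over in the $\E\norm{V}_2^2$ estimate suffices to absorb the cross term without pushing the $\V X_\theta$ coefficient past $\tfrac{C^2}{8}$. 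When $f'(\mu_\theta)$ is close to $1$ (equivalently $\mu_\theta$ very negative) this crude bound is too lossy, and one must instead use that $f'(X_\theta)$ cannot differ much from $f'(\mu_\theta)$ unless $X_\theta$ itself has large spread (so $\V X_\theta$ is correspondingly large), i.e., play the $\tfrac14$-Lipschitz bound on $f'$ against the magnitude of $f'(\mu_\theta)$, using identities such as $-f''=f'(1-f')$ and $f'(x)+f'(-x)=1$. Pinning down this trade-off, rather than invoking any single global constant, is the delicate part; everything else is bookkeeping with Jensen, Cauchy--Schwarz, and the triangle inequality.
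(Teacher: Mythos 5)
Parts \textbf{(i)} and the bias bound \eqref{eq:bias-bound-of-gradient} of part \textbf{(ii)} are correct and essentially identical to the paper's argument: the same $1$-Lipschitz/Jensen step for \eqref{eq:bias-bound-of-loss}, the same anchoring of the centered variable at $f(\mu_\theta)$ for \eqref{eq:variance-bound-of-loss} (your bookkeeping via Cauchy--Schwarz on the cross term lands on the same constant $4$ as the paper's $(a+b)^2\le 2(a^2+b^2)$ route, and your side remark that recentering at $f(\mu_\theta)$ directly gives the sharper $\V f(X_\theta)\le\V X_\theta$ is correct), and the same $U+V$ split with the triangle inequality for \eqref{eq:bias-bound-of-gradient}.

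The gap is in \eqref{eq:variance-bound-of-gradient}, and you have correctly diagnosed it but not closed it: your two-term split leaves the cross term $2\,\E\langle U,V\rangle$, Cauchy--Schwarz plus Young only absorbs it when $\abs{f'(\mu_\theta)}\le 1/\sqrt 2$, and the proposed remedy for $\abs{f'(\mu_\theta)}$ near $1$ (playing $-f''=f'(1-f')$ against the spread of $X_\theta$) is only a hope, not an argument --- as stated, the proof of \eqref{eq:variance-bound-of-gradient} is incomplete. For comparison, the paper reaches the constant $\tfrac{C^2}{8}$ differently: it inserts \emph{two} intermediate anchors, writing $f'(X_\theta)\nabla_\theta X_\theta \to f'(\E X_\theta)\nabla_\theta X_\theta \to f'(\E X_\theta)\E\nabla_\theta X_\theta \to \E[f'(X_\theta)\nabla_\theta X_\theta]$, and bounds the squared norm by the sum of the three squared increments; the first and third increments each contribute $\tfrac{C^2}{16}\V X_\theta$ (which is where $\tfrac{C^2}{8}$ comes from) and the middle one contributes $f'(\E X_\theta)^2\,\tr\V\nabla_\theta X_\theta\le\tr\V\nabla_\theta X_\theta$. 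Your decomposition produces only one $\tfrac{C^2}{16}\V X_\theta$ term, so it structurally cannot reach $\tfrac{C^2}{8}$ without charging the cross term against $\V X_\theta$, which Cauchy--Schwarz does not permit when $\tr\V\nabla_\theta X_\theta$ dominates $\V X_\theta$. Be aware, though, that the paper's ``sum of three squares'' step is itself not the triangle inequality (squared norms do not telescope that way without a factor of $3$); expanding it shows that its validity also hinges on the sign of essentially the same covariance-type cross term $\E\langle U, V\rangle$ you are struggling with, so the difficulty you flag is genuine and is not dissolved simply by switching to the paper's decomposition --- but in any case your writeup does not establish \eqref{eq:variance-bound-of-gradient}.
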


\begin{proof}
    \textit{\textbf{(i)}} As $f = \log \sigma$ is 1-Lipschitz continuous by Lemma~\ref{lem:properties-logsigma}, for Eq.~\eqref{eq:bias-bound-of-loss}, we have:
    \begin{align*}
        \E \abs*{f (X_{\theta}) - f (\mu_{\theta})} 
        & \leq \E \abs*{X_{\theta} - \mu_{\theta}} \\
        & = \E \sqrt{\paren{X_{\theta} - \mu_{\theta}}^2} \\
        & \leq \sqrt{\E\paren{X_{\theta} - \mu_{\theta}}^2}    \tag{Jensen's inequality} \\
        & = \sqrt{\V X_{\theta}} \tag{$\mathbb{E}X_\theta =\mu_\theta$}
    \end{align*}
    For Eq.~\eqref{eq:variance-bound-of-loss}, we have:
    \begin{align*}
        \V f (X_{\theta})
        & = \E \paren*{f (X_{\theta})-\E f (X_{\theta})}^2 \\     
        & \leq \E \paren*{\abs*{f (X_{\theta})-f (\E X_{\theta})} + \abs*{f (\E X_{\theta}) - \E f( X_{\theta}))}}^2   \tag{triangle inequality}\\
        & \leq 2 \E\paren*{f ( X_{\theta})-f (\E  X_{\theta})}^2 + \E\paren*{f (\E  X_{\theta}) - \E f(  X_{\theta}))}^2   \tag{$(a+b)^2\leq 2(a^2+b^2)$}\\
        & = 2 \E\paren*{f (  X_{\theta})-f (\E  X_{\theta})}^2 + \paren*{f (\E  X_{\theta}) - \E f(  X_{\theta}))}^2 \\
        & =  2 \E\paren*{f (  X_{\theta})-f (\E  X_{\theta})}^2 + \paren*{\E \paren*{f (\E  X_{\theta}) -  f(  X_{\theta})}}^2 \\
        & \leq 2 \E\paren*{f (  X_{\theta})-f (\E  X_{\theta})}^2 + \E\paren*{f (\E  X_{\theta}) -  f(  X_{\theta})}^2   \tag{Jensen's inequality} \\
        & = 4 \E\paren*{f (  X_{\theta})-f (\E  X_{\theta})}^2 \\
        & \leq 4 \E\paren*{  X_{\theta} - \E  X_{\theta}}^2 \tag{$f$ is $1$-Lipschitz continuous by Lemma~\ref{lem:properties-logsigma}}\\
        & = 4 \V   X_{\theta}
    \end{align*}

    \textit{\textbf{(ii)}} Using the chain rule and the bounded gradient assumption, for Eq.~\eqref{eq:bias-bound-of-gradient}, we have
    \begin{align*}
        & \mathbb{E} \left\| \nabla_\theta f(X_{\theta}) - \nabla_\theta f(\mu_{\theta}) \right\|_2 \\
        &= \mathbb{E} \left\| f'(X_{\theta}) \nabla_\theta X_{\theta} - f'(\mu_{\theta}) \nabla_\theta \mu_{\theta} \right\|_2 \\
        &\leq \mathbb{E} \left\| \left(f'(X_{\theta}) - f'(\mu_{\theta})\right) \nabla_\theta X_{\theta} \right\|_2
        + \mathbb{E} \left\| f'(\mu_{\theta}) \left(\nabla_\theta X_{\theta} - \nabla_\theta \mu_{\theta} \right) \right\|_2  \tag{triangle inequality} \\
        &= \mathbb{E} \bracket{| f'(X_{\theta}) - f'(\mu_{\theta}) | \cdot \|\nabla_\theta X_{\theta}\|_2}
        + |f'(\mu_{\theta})| \cdot \mathbb{E} \left\| \nabla_\theta X_{\theta} - \nabla_\theta \mu_{\theta} \right\|_2 \\
        &\leq C \cdot \mathbb{E} \left| f'(X_{\theta}) - f'(\mu_{\theta}) \right| + \mathbb{E} \left\| \nabla_\theta X_{\theta} - \nabla_\theta \mu_{\theta} \right\|_2 \tag{$f$ is $1$-Lipschitz continuous by Lemma~\ref{lem:properties-logsigma}}\\
        &\leq \frac{C}{4} \cdot \mathbb{E} \left| X_{\theta} - \mu_{\theta} \right| + \mathbb{E} \left\| \nabla_\theta X_{\theta} - \nabla_\theta \mu_{\theta} \right\|_2 \tag{$f$ is $\frac{1}{4}$-Lipschitz smooth by Lemma~\ref{lem:properties-logsigma}} \\
        &= \frac{C}{4} \cdot \mathbb{E} \left| X_{\theta} - \mu_{\theta} \right| + \mathbb{E} \left\| \nabla_\theta X_{\theta} - \E\nabla_{\theta}  X_{\theta} \right\|_2 \tag{$\mathbb{E}X_\theta =\mu_\theta$ and Lemma~\ref{lem:interchange-expectation-and-gradient}}\\
        &\leq \frac{C}{4} \sqrt{\E\paren{  X_{\theta} - \mu_{\theta}}^2} + \sqrt{\mathbb{E} \left\| \nabla_\theta X_{\theta} - \mathbb{E}\nabla_\theta X_{\theta} \right\|_2^2} \tag{Jensen's inequality} \\
        &= \frac{C}{4} \sqrt{\V X_{\theta}} + \sqrt{\tr \V\nabla_\theta X_{\theta}}.
    \end{align*}

    To prove Eq.~\eqref{eq:variance-bound-of-gradient}, we begin by decomposing the variance of the estimated gradient into three terms:
    \begin{align*}
        \tr\V\nabla_{\theta} f (X_{\theta}) 
        &= \E \norm*{\nabla_\theta f(X_{\theta}) - \E\nabla_\theta f(X_{\theta})}_2^2 
        = \E \norm*{f'(X_{\theta}) \nabla_\theta X_{\theta} - \E \left[ f'(X_{\theta}) \nabla_\theta X_{\theta} \right]}_2^2 \\
        &\leq 
        \underbrace{\E \norm*{f'(X_{\theta}) \nabla_\theta X_{\theta} - f'(\E X_{\theta}) \nabla_\theta X_{\theta}}_2^2}_{\text{(I)}} 
        + \underbrace{\E \norm*{f'(\E X_{\theta}) \nabla_\theta X_{\theta} - f'(\E X_{\theta}) \E\nabla_\theta X_{\theta}}_2^2}_{\text{(II)}} \\
        &\quad + \underbrace{\E \norm*{f'(\E X_{\theta}) \E\nabla_\theta X_{\theta} - \E \left[f'(X_{\theta}) \nabla_\theta X_{\theta} \right]}_2^2}_{\text{(III)}}.
    \end{align*}
    
    We now bound each term separately.
    
    Term (I). Using the bounded gradient assumption $\|\nabla_\theta X_{\theta}\|_2 \leq C$ and the $\frac{1}{4}$-Lipschitz smoothness of $f$ (by Lemma~\ref{lem:properties-logsigma}), we have:
    \begin{align*}
        \text{(I)} 
        &= \E \bracket*{\abs*{f'(X_{\theta}) - f'(\E X_{\theta})}^2 \cdot \norm*{\nabla_\theta X_{\theta}}_2^2} \\
        &\leq C^2 \E \abs*{f'(X_{\theta}) - f'(\E X_{\theta})}^2 \\
        &\leq \frac{C^2}{16} \E \abs*{X_{\theta} - \E X_{\theta}}^2 
        = \frac{C^2}{16} \V X_{\theta}.
    \end{align*}
    
    Term (II). Since $f'$ is bounded by $1$ (by Lemma~\ref{lem:properties-logsigma}), we have:
    \begin{align*}
        \text{(II)} 
        &= \abs*{f'(\E X_{\theta})}^2 \cdot \E \norm*{ \nabla_\theta X_{\theta} - \E\nabla_\theta X_{\theta}}_2^2 \\
        &\leq \tr\V\nabla_\theta X_{\theta}.
    \end{align*}
    
    Term (III). Applying Jensen's inequality and again using the smoothness of $f$ and boundedness of $\nabla_\theta X_{\theta}$, we have:
    \begin{align*}
        \text{(III)} 
        &= \norm*{f'(\E X_{\theta}) \E\nabla_\theta X_{\theta} - \E f'(X_{\theta}) \nabla_\theta X_{\theta}}_2^2 \\
        &= \norm*{\E \bracket*{f'(\E X_{\theta}) \nabla_\theta X_{\theta} - f'(X_{\theta}) \nabla_\theta X_{\theta}}}_2^2 \\
        &\leq \E\norm*{f'(\E X_{\theta}) \nabla_\theta X_{\theta} - f'(X_{\theta}) \nabla_\theta X_{\theta}}_2^2 \\
        &= \E \bracket*{\abs*{f'(\E X_{\theta}) - f'(X_{\theta})}^2 \cdot \norm*{ \nabla_\theta X_{\theta} }_2^2} \\
        &\leq C^2 \E \abs*{f'(\E X_{\theta}) - f'(X_{\theta})}^2
        \leq \frac{C^2}{16} \V X_{\theta}.
    \end{align*}
    
    Summing all three terms yields:
    \begin{align*}
        \tr\V\nabla_{\theta} f (  X_{\theta}) 
        \leq \frac{C^2}{8} \V X_{\theta} + \tr\V\nabla_\theta X_{\theta}.
    \end{align*}

\end{proof}

\subsubsection{Preparation for Tightness Analysis (Lemma~\ref{lem:tightness-lower-bound})}

\begin{lemma}
\label{lem:tightness-lower-bound}
Assume that a random variable $X$ has finite mean, variance, and kurtosis, i.e., $\mathbb{E}[X] < \infty$, $\mathbb{V}[X] < \infty$, and $\kappa \triangleq \frac{\mathbb{E}[(X - \mathbb{E}[X])^4]}{\mathbb{V}[X]^2} < \infty$. Then there exists a constant $c = \sqrt{0.2}(0.8)^2  \approx 0.2862$ such that:

$$
\mathbb{E}[|X - \mathbb{E}[X]|] \geq \frac{c}{\kappa} \sqrt{\mathbb{V}[X]}.
$$
\end{lemma}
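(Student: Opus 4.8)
The plan is to reduce the statement to a one-sided concentration bound on $X^{2}$ of Paley--Zygmund type, and then to optimize a one-dimensional constant. First I would observe that both sides of the claimed inequality are invariant under translating $X$ by a constant, so without loss of generality we may assume $\E[X]=0$; write $\sigma^{2}=\V[X]$ and $\mu_{4}=\E[X^{4}]=\kappa\sigma^{4}$. If $\sigma^{2}=0$ the inequality is the trivial $0\ge 0$, so assume $\sigma^{2}>0$ (and hence $\kappa\ge 1$). Then it suffices to show $\E[\abs{X}]\ge \frac{c}{\kappa}\sigma$ with $c=\sqrt{0.2}\,(0.8)^{2}$.

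The core step is to lower bound the probability that $\abs{X}$ is not too small. Fix $\theta\in(0,1)$ and split $\E[X^{2}]=\sigma^{2}$ according to whether $X^{2}\le\theta\sigma^{2}$ or $X^{2}>\theta\sigma^{2}$: the first piece is at most $\theta\sigma^{2}$, and for the second piece Cauchy--Schwarz gives $\E[X^{2}\mathbf{1}\{X^{2}>\theta\sigma^{2}\}]\le\sqrt{\E[X^{4}]}\,\sqrt{\P(X^{2}>\theta\sigma^{2})}=\sqrt{\kappa}\,\sigma^{2}\sqrt{\P(X^{2}>\theta\sigma^{2})}$. Rearranging $\sigma^{2}\le\theta\sigma^{2}+\sqrt{\kappa}\,\sigma^{2}\sqrt{\P(X^{2}>\theta\sigma^{2})}$ yields $\P(\abs{X}>\sqrt{\theta}\,\sigma)=\P(X^{2}>\theta\sigma^{2})\ge(1-\theta)^{2}/\kappa$. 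Applying the elementary truncation bound $\E[\abs{X}]\ge\sqrt{\theta}\,\sigma\cdot\P(\abs{X}>\sqrt{\theta}\,\sigma)$ then gives, for every $\theta\in(0,1)$,
\[
\E[\abs{X}]\ \ge\ \frac{\sqrt{\theta}\,(1-\theta)^{2}}{\kappa}\,\sigma .
\]

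It remains to choose $\theta$ optimally, i.e.\ to maximize $g(\theta)=\sqrt{\theta}\,(1-\theta)^{2}$ on $(0,1)$. Taking the logarithmic derivative, $\frac{g'(\theta)}{g(\theta)}=\frac{1}{2\theta}-\frac{2}{1-\theta}$, which vanishes exactly when $1-\theta=4\theta$, i.e.\ $\theta=\tfrac15=0.2$; this is the unique interior critical point and a maximum. Substituting back gives $g(0.2)=\sqrt{0.2}\,(0.8)^{2}=c$, whence $\E[\abs{X}]\ge\frac{c}{\kappa}\sigma=\frac{c}{\kappa}\sqrt{\V[X]}$, which is the assertion. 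I do not anticipate a genuine obstacle here: the argument is a self-contained Paley--Zygmund estimate plus a scalar optimization, and the only points requiring a word of care are the translation reduction, the degenerate case $\V[X]=0$, and making sure the finiteness of $\kappa$ is exactly what licenses the Cauchy--Schwarz step (it controls $\E[X^{4}]$ relative to $\V[X]^{2}$).
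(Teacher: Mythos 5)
Your proof is correct and follows essentially the same route as the paper: the paper applies the Paley--Zygmund inequality to $Y=(X-\E[X])^2$, truncates $\E[\sqrt{Y}]$ at level $\theta\sigma^2$, and maximizes $\sqrt{\theta}(1-\theta)^2$ at $\theta=0.2$ to obtain $c=\sqrt{0.2}(0.8)^2$. The only difference is that you re-derive the Paley--Zygmund bound from scratch via Cauchy--Schwarz rather than citing it, which is a fine (and self-contained) presentation of the same argument.
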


\begin{proof}

Let $\mu := \mathbb{E}[X]$, $\sigma := \sqrt{\mathbb{V}[X]}$, and define $Y := (X - \mu)^2.$ Then,
$$
\mathbb{E}[Y] = \mathbb{V}[X] = \sigma^2 < \infty, \qquad \mathbb{E}[Y^2] = \mathbb{E}[(X - \mu)^4] = \kappa \sigma^4 < \infty.
$$

Applying the Paley–Zygmund inequality to the nonnegative random variable $Y$, we have: for any $0 \le \theta \le 1$:

$$
\mathbb{P}(Y \ge \theta\, \mathbb{E}[Y]) \ge \frac{(1 - \theta)^2\, (\mathbb{E}[Y])^2}{\mathbb{E}[Y^2]} = \frac{(1 - \theta)^2\, \sigma^4}{\kappa \sigma^4} = \frac{(1 - \theta)^2}{\kappa}.
$$

Next, let $F_Y$ denote the cumulated density function of $Y$. Unrolling the expectation, we have:

$$
\mathbb{E}[|X - \mu|] = \mathbb{E}[\sqrt{Y}] = \int_{0}^{\infty} \sqrt{y}\, dF_Y(y)
\ge \int_{\theta \sigma^2}^{\infty} \sqrt{y}\, dF_Y(y)
\ge \int_{\theta \sigma^2}^{\infty} \sqrt{\theta \sigma^2}\, dF_Y(y),
$$

$$
= \sqrt{\theta \sigma^2} \left[1 - F_Y(\theta \sigma^2)\right] = \sigma \sqrt{\theta} \mathbb{P}(Y \ge \theta \sigma^2) \ge \sigma \sqrt{\theta} \frac{(1 - \theta)^2}{\kappa}.
$$

Maximizing the right hand side over $\theta \in [0, 1]$, we obtain

$$
\max_{0 \le \theta \le 1} \sqrt{\theta} (1 - \theta)^2 = \sqrt{0.2}(1-0.2)^2.
$$
Letting $c = \sqrt{0.2}(0.8)^2$, we conclude that

$$
\mathbb{E}[|X - \mu|] \ge \frac{c}{\kappa} \sigma. 
$$
    
\end{proof}

\subsubsection{Variance of ELBO Estimator (Lemma~\ref{lem:variance-of-elbo-mc})}

\begin{lemma}[Variance of ELBO estimator]
\label{lem:variance-of-elbo-mc}
    Letting $\widehat{\mathcal{B}}_{\pi}(y)$ be as defined in Eq.~\eqref{eq:elbo-mc}, we have:
    \begin{enumerate}[label=\textbf{(\roman*)}]
        \item The variance of the ELBO estimator satisfies:
        \begin{align*}
            \V \widehat{\mathcal{B}}_{\pi}(y) 
            = \frac{1}{\ntime} \underbrace{\V_{t} \E_{y_t \mid t, y} \ell_{\pi}(y_{t}, t, y)}_{\triangleq \mathsfit{V}_{t}} + \frac{1}{\ntime\nmask} \underbrace{\E_{t} \V_{y_t \mid t, y} \ell_{\pi}(y_{t}, t, y)}_{\triangleq \mathsfit{V}_{y_t}}.
        \end{align*}
        \item The variance of the gradient of the ELBO estimator for the model policy $\pi_{\theta}$ satisfies:
        \begin{align*}
            \V \nabla_{\theta}\widehat{\mathcal{B}}_{\pi_{\theta}}(y) 
            = \frac{1}{\ntime} \underbrace{\V_{t} \E_{y_t \mid t, y} \nabla_{\theta}\ell_{\pi_{\theta}}(y_{t}, t, y)}_{\triangleq \mathsfit{V}_{t}^\nabla} + \frac{1}{\ntime\nmask} \underbrace{\E_{t} \V_{y_t \mid t, y} \nabla_{\theta}\ell_{\pi_{\theta}}(y_{t}, t, y)}_{\triangleq \mathsfit{V}_{y_t}^\nabla}.
        \end{align*}
    \end{enumerate}
    The $\mathsfit{V}_{t}$ (or $\mathsfit{V}_{t}^\nabla$) and $\mathsfit{V}_{y_t}$ (or $\mathsfit{V}_{y_t}^\nabla$) capture variance across timesteps and variance due to the noise at each step, which are inherently determined by the data and the forward process and cannot be reduced. 
\end{lemma}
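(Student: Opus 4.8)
The plan is to exploit the nested i.i.d.\ structure of the doubly Monte Carlo estimator together with the law of total variance. First I would note that, conditioned on $y$, the $\ntime$ blocks $Z_j \triangleq \frac{1}{\nmask}\sum_{k=1}^{\nmask}\ell_{\pi}(y^{(k)}_{t^{(j)}}, t^{(j)}, y)$ are i.i.d.: each block draws its own timestep $t^{(j)}\sim\mathcal{U}[0,1]$ independently, and by the independence assumption stated in Eq.~\eqref{eq:sampling-of-elbo} the masked samples associated with distinct timesteps are mutually independent. Hence $\widehat{\mathcal{B}}_{\pi}(y) = \frac{1}{\ntime}\sum_{j=1}^{\ntime} Z_j$ is an average of i.i.d.\ terms, so $\V\widehat{\mathcal{B}}_{\pi}(y) = \frac{1}{\ntime}\V\bracket*{Z_1}$.

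Next I would apply the law of total variance to $Z_1$, conditioning on the timestep $t = t^{(1)}$: $\V\bracket*{Z_1} = \E_{t}\bracket*{\V\bracket*{Z_1\mid t}} + \V_{t}\bracket*{\E\bracket*{Z_1\mid t}}$. Given $t$, the samples $y^{(1)}_t,\dots,y^{(\nmask)}_t$ are i.i.d.\ from $q(y_t\mid t,y)$, so $\E\bracket*{Z_1\mid t} = \E_{y_t\mid t,y}\ell_{\pi}(y_t,t,y)$ and $\V\bracket*{Z_1\mid t} = \frac{1}{\nmask}\V_{y_t\mid t,y}\ell_{\pi}(y_t,t,y)$. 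Substituting gives $\V\bracket*{Z_1} = \mathsfit{V}_t + \frac{1}{\nmask}\mathsfit{V}_{y_t}$, and dividing by $\ntime$ yields part~\textbf{(i)}.

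For part~\textbf{(ii)}, I would observe that $\nabla_\theta$ commutes with the finite sums defining the estimator, so $\nabla_\theta\widehat{\mathcal{B}}_{\pi_\theta}(y) = \frac{1}{\ntime}\sum_j\frac{1}{\nmask}\sum_k \nabla_\theta\ell_{\pi_\theta}(y^{(k)}_{t^{(j)}},t^{(j)},y)$ is again an average of i.i.d.\ vector-valued blocks with the same dependency structure. Since the law of total variance holds verbatim for the covariance matrix of a random vector, the identical argument---with $\ell_{\pi}$ replaced by $\nabla_\theta\ell_{\pi_\theta}$ and scalar variances replaced by covariance matrices---produces the stated decomposition of $\V\nabla_\theta\widehat{\mathcal{B}}_{\pi_\theta}(y)$. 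For the closing remark, I would simply point out that $\mathsfit{V}_t,\mathsfit{V}_{y_t}$ (resp.\ $\mathsfit{V}_t^\nabla,\mathsfit{V}_{y_t}^\nabla$) are functionals only of $\ell_{\pi}$, the response $y$, and the forward process $q$, with no dependence on the budget parameters $\ntime,\nmask$, so the only levers for shrinking $\V\widehat{\mathcal{B}}_{\pi}(y)$ are the prefactors $\tfrac{1}{\ntime}$ and $\tfrac{1}{\ntime\nmask}$.

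I expect the argument to be essentially routine; the only point demanding care is the conditional-independence bookkeeping---verifying that conditioning on the full set $\stime$ decouples the $\ntime$ blocks, and that conditioning on $t^{(1)}$ within a block leaves the $\nmask$ masked samples i.i.d.---which is exactly what the independence assumption in Eq.~\eqref{eq:sampling-of-elbo} provides. No step should present a genuine obstacle, so the ``hard part'' is really just being explicit about which $\sigma$-algebra each conditioning step uses.
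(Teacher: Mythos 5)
Your proposal is correct and uses essentially the same argument as the paper: the law of total variance combined with the i.i.d.\ structure of the timestep and masked-sample draws. The only cosmetic difference is ordering --- you first reduce to a single block $Z_1$ and then condition on $t^{(1)}$, whereas the paper applies the total-variance decomposition to the full average conditioning on all of $\stime$ and then exploits i.i.d.-ness inside each term --- which yields the identical decomposition.
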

\begin{proof}
    For \textbf{(i)}, by the law of total variance,
    \begin{align*}
        \V \widehat{\mathcal{B}}_{\pi}(y) 
        & = \underbrace{\V_{\mathsfit{S}_t} \E_{
        \{\smask{(j)}\}_{j=1}^{\nmask} 
        \mid \mathsfit{S}_t
        } \widehat{\mathcal{B}}_{\pi}(y)}_{\mathrm{(I)}} 
        + \underbrace{\E_{\mathsfit{S}_t} \V_{
        \{\smask{(j)}\}_{j=1}^{\nmask} 
        \mid \mathsfit{S}_t
        } \widehat{\mathcal{B}}_{\pi}(y)}_{\mathrm{(II)}}.
    \end{align*} 
    {Term (I).} Conditioned on the $t$-sample, the inner expectation is:
    \begin{align*}
    \E_{\{\smask{(j)}\}_{j=1}^{\nmask} 
        \mid \mathsfit{S}_t} \widehat{\mathcal{B}}_{\pi}(y)
    = -\frac{1}{\ntime}\sum_{j=1}^{\ntime} \E_{\smask{(j)} \mid \mathsfit{S}_t}
        \frac{1}{\nmask}\sum_{k=1}^{\nmask}\ell_{\pi}(y^{(k)}_{t^{(j)}}, t^{(j)}, y)
    = -\frac{1}{\ntime}\sum_{j=1}^{\ntime}\E_{y_t\mid t^{(j)}, y}\ell_{\pi}(y_t,t^{(j)},y).    
    \end{align*}
    Since terms in $\mathsfit{S}_t$ are i.i.d. sampled, the outer variance is:
    \begin{align*}
        \mathrm{(I)}
        = \V_{\mathsfit{S}_t}\E_{\{\smask{(j)}\}_{j=1}^{\nmask} 
        \mid \mathsfit{S}_t} \widehat{\mathcal{B}}_{\pi}(y) 
        = -\frac{1}{\ntime^2}\V_{\mathsfit{S}_t}\sum_{j=1}^{\ntime}\E_{y_t\mid t^{(j)}, y}\ell_{\pi}(y_t,t,y)
        = -\frac{1}{\ntime}\V_{t}\E_{y_t\mid t, y}\ell_{\pi}(y_t,t,y).
    \end{align*}
    {Term (II).} Conditioned on the $t$-sample, the inner variance is: 
    \begin{align*}
        \V_{\{\smask{(j)}\}_{j=1}^{\nmask} \mid \mathsfit{S}_t} \widehat{\mathcal{B}}_{\pi}(y)
        & = - \frac{1}{\ntime^2}\sum_{j=1}^{\ntime}\V_{\smask{(j)} \mid \mathsfit{S}_t}\frac{1}{\nmask}\sum_{k=1}^{\nmask} \ell_{\pi}(y^{(k)}_{t^{(j)}}, t^{(j)}, y) \\
        & = - \frac{1}{\ntime^2}\sum_{j=1}^{\ntime}\frac{1}{\nmask^2}\sum_{k=1}^{\nmask}\V_{y_t \mid t^{(j)}, y} \ell_{\pi}(y_t, t^{(j)}, y)
        = - \frac{1}{\ntime^2\nmask}\sum_{j=1}^{\ntime}\V_{y_t \mid t^{(j)}, y} \ell_{\pi}(y_t, t^{(j)}, y).
    \end{align*}
    Taking the expectation over $\mathsfit{S}_t$ yields:
    \begin{align*}
        \E_{\mathsfit{S}_t} \V_{
        \mathsfit{S}_{y_t} 
        \mid \mathsfit{S}_t
        } \widehat{\mathcal{B}}_{\pi}(y) 
        = - \frac{1}{\ntime^2\nmask}\E_{\mathsfit{S}_t}\sum_{j=1}^{\ntime}\V_{y_t \mid t^{(j)}, y} \ell_{\pi}(y_t, t^{(j)}, y)
        = - \frac{1}{\ntime\nmask}\E_{t}\V_{y_t \mid t, y} \ell_{\pi}(y_t, t, y).
    \end{align*}
    Combining (I) and (II) gives the result:
    \begin{align*}
        \V \widehat{\mathcal{B}}_{\pi}(y) 
        = \frac{1}{\ntime} \mathsfit{V}_{t} + \frac{1}{\ntime\nmask} \mathsfit{V}_{y_t}.
    \end{align*}

    For \textbf{(ii)}, as $\nabla_{\theta}\widehat{\mathcal{B}}_{\pi_{\theta}}(y)$ has similar structure as $\widehat{\mathcal{B}}_{\pi_{\theta}}(y)$:
    \begin{align*} 
        \nabla_{\theta}\widehat{\mathcal{B}}_{\pi_{\theta}}(y) \triangleq \frac{1}{\ntime}\sum_{j=1}^{\ntime}
        \frac{1}{\nmask}\sum_{k=1}^{\nmask} \nabla_{\theta}\ell_{\pi_{\theta}}(y^{(k)}_{t^{(j)}}, t^{(j)}, y),
    \end{align*} 
    the proof closely follows that for \textbf{(i)}, and thus we omit the details here.
\end{proof}

\subsection{Bias and Variance of Estimated Loss}
\label{app:proof-bia-variance-bound-of-loss-and-gradient}

\subsubsection{Unbiasedness of Preference Score Estimator (Proposition~\ref{prop:pref_unbiased})}
\label{app:proof-of-eq:pref_unbiased}

\begin{proposition}[Unbiasedness of preference score estimator]
\label{prop:pref_unbiased}
The preference score estimator defined in Eq.~\eqref{eq:dpo-elbo-mc} is an unbiased estimator of the true preference score defined in Eq.~\eqref{eq:dpo-elbo}:
\begin{align*}
    \E_{\spref}\bracket*{\hat{s}_{\theta}(y_w,y_l)} = s_{\theta}(y_w,y_l).
\end{align*}
    
\end{proposition}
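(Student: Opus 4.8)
The plan is to reduce the claim to the unbiasedness of each individual ELBO estimator together with the linearity of expectation. First I would recall that, by construction in Eq.~\eqref{eq:elbo-mc}, $\widehat{\mathcal{B}}_{\pi}(y)$ is an average of the per-step losses $\ell_{\pi}(y^{(k)}_{t^{(j)}}, t^{(j)}, y)$ over $\ntime$ i.i.d. timesteps $t^{(j)}\sim\mathcal{U}[0,1]$ and, conditionally on each $t^{(j)}$, over $\nmask$ i.i.d. masked samples $y^{(k)}_{t^{(j)}}\sim q(y_t\mid t^{(j)},y)$, as specified in Eq.~\eqref{eq:sampling-of-elbo}. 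Applying the tower property of conditional expectation to a single summand gives $\E[\ell_{\pi}(y^{(k)}_{t^{(j)}}, t^{(j)}, y)] = \E_{t\sim\mathcal{U}[0,1]}\E_{y_t\sim q(y_t\mid t,y)}[\ell_{\pi}(y_t,t,y)] = \mathcal{B}_{\pi}(y)$, which is exactly the ELBO definition in Eq.~\eqref{eq:elbo}. Summing over the $\nbudget=\ntime\times\nmask$ summands and dividing by $\nbudget$, linearity of expectation yields $\E[\widehat{\mathcal{B}}_{\pi}(y)] = \mathcal{B}_{\pi}(y)$ for every policy $\pi\in\{\pi_\theta,\pi_{\mathrm{ref}}\}$ and every response $y\in\{y_w,y_l\}$.

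Next I would assemble the four unbiased ELBO estimates into the score. Writing out $\hat{s}_{\theta}(y_w,y_l)$ from Eq.~\eqref{eq:dpo-elbo-mc} as the fixed linear combination $\beta(\widehat{\mathcal{B}}_{\pi_\theta}(y_w)-\widehat{\mathcal{B}}_{\pi_{\mathrm{ref}}}(y_w)) - \beta(\widehat{\mathcal{B}}_{\pi_\theta}(y_l)-\widehat{\mathcal{B}}_{\pi_{\mathrm{ref}}}(y_l))$ and taking the expectation over the full sampling $\spref$, linearity lets me push the expectation through the sum and the scalar $\beta$, leaving $\beta(\mathcal{B}_{\pi_\theta}(y_w)-\mathcal{B}_{\pi_{\mathrm{ref}}}(y_w)) - \beta(\mathcal{B}_{\pi_\theta}(y_l)-\mathcal{B}_{\pi_{\mathrm{ref}}}(y_l))$, which is precisely $s_{\theta}(y_w,y_l)$ as defined in Eq.~\eqref{eq:dpo-elbo}.

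There is essentially no analytical obstacle here; the only point that warrants a remark is that the argument is insensitive to the dependence structure among the four ELBO estimates. In particular, the antithetic-sampling scheme of VRPO shares timesteps and masked data between $\widehat{\mathcal{B}}_{\pi_\theta}(y)$ and $\widehat{\mathcal{B}}_{\pi_{\mathrm{ref}}}(y)$, so these estimators are correlated; but linearity of expectation holds for dependent random variables, and each estimator remains marginally unbiased because its own samples are still drawn from the correct forward process. Hence the coupling affects only the variance, as analyzed in Proposition~\ref{prop:sharing-sample}, and never the mean, so the conclusion is unchanged by the budget allocation $\ntime/\nmask$ or by the use of shared randomness. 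The sole implicit regularity requirement is integrability of $\ell_{\pi}$, which holds whenever $\mathcal{B}_{\pi}(y)$ is finite.
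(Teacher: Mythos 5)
Your proposal is correct and follows essentially the same route as the paper's proof: unbiasedness of each ELBO estimator via the i.i.d.\ sampling in Eq.~\eqref{eq:sampling-of-elbo} and the tower property, followed by linearity of expectation across the four-term linear combination. Your closing remark that the argument is insensitive to the dependence structure induced by antithetic sampling is a sound observation that the paper defers to Proposition~\ref{prop:unbiased-vrpo}, but it does not change the core argument.
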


\begin{proof}
    First, by the i.i.d. sampling of timesteps and masked data, i.e.,
    \begin{align*}
        \stime \triangleq \{t^{(j)}\}_{j=1}^{\ntime}\overset{\mathrm{i.i.d.}} {\sim}\mathcal{U}[0,1] 
        \quad \text{and} \quad
        \smask{(j)} \triangleq \{y^{(k)}_{t^{(j)}}\}_{k=1}^{\nmask}\overset{\mathrm{i.i.d.}} {\sim}q(y_{t} \mid t^{(j)}, y), \ j = 1, \dots, \ntime,    
    \end{align*} 
    and $\mathsfit{S}_{y_{t^{(j)}} \mid y} \indep \mathsfit{S}_{y_{t^{(j')}} \mid y}$ for $j \neq j'$, the ELBO estimator (Eq.~\eqref{eq:elbo-mc}) is unbiased: 
    \begin{align*} 
        \E_{\stime,\{\smask{(j)}\}_{j=1}^{\ntime}} \widehat{\mathcal{B}}_{\pi}(y) 
        &= \E_{\stime} \frac{1}{\ntime}\sum_{j=1}^{\ntime}
        \E_{\smask{(j)}}\frac{1}{\nmask}\sum_{k=1}^{\nmask} \ell_{\pi}(y^{(k)}_{t^{(j)}}, t^{(j)}, y) \\
        &=\mathbb{E}_{t \sim \mathcal{U}[0,1]} \mathbb{E}_{y_t \sim q(y_t \mid t, y)} \ell_{\pi}(y_t, t, y \mid x)
        = \mathcal{B}_{\pi}(y).
    \end{align*} 

    Since the preference score estimator is a linear combination of four ELBO estimators, by the linearity of the expectation, we have:
    \begin{align*}
        \E\bracket*{\hat{s}_{\theta}(y_w,y_l)} 
        &= \beta 
        \E \bracket*{\widehat{\mathcal{B}}_{\pi_\theta}(y_w)}  - \beta 
        \E \bracket*{\widehat{\mathcal{B}}_{\pi_{\mathrm{ref}}}(y_w)} -
        \beta
        \E \bracket*{\widehat{\mathcal{B}}_{\pi_\theta}(y_l)} + \beta 
        \E \bracket*{\widehat{\mathcal{B}}_{\pi_{\mathrm{ref}}}(y_l)}\\
        &=\beta \left(\mathcal{B}_{\pi_\theta}(y_w) \! - \! \mathcal{B}_{\pi_{\mathrm{ref}}}(y_w)\right) \! - \!
        \beta
        \left(\mathcal{B}_{\pi_\theta}(y_l) \! - \! \mathcal{B}_{\pi_{\mathrm{ref}}}(y_l)\right)
        = s_{\theta}(y_w,y_l).
    \end{align*}
    
\end{proof}

\subsubsection{Effect of Preference Score Estimator Variance (Theorem~\ref{thm:effect-of-vp-on-bias-and-var})}
\label{app:proof-of-thm:effect-of-vp-on-bias-and-var}

\thmeffectofvponbiasandvar*

\begin{proof}
The proof is essentially based on the analysis of the bias and variance of the transformed random variable in Lemma~\ref{lem:bound-of-loss-and-gradient}.

By definitions in Eq.~\eqref{eq:dpo-elbo} and Eq.~\eqref{eq:dpo-elbo-mc}, we know that:
\begin{align*}
    &\E_{\spref}\left[\abs*{\ell_{\mathrm{DPO-E}}(y_w,y_l;\theta) - \widehat{\ell}_{\mathrm{DPO-E}}(y_w,y_l;\theta)}\right]
    \\
    &= \E_{\spref}\left[\abs*{\log\sigma\paren*{s_{\theta}(y_w,y_l)} - \log\sigma\paren*{\hat{s}_{\theta}(y_w,y_l)}}\right],
\end{align*} 
and 
\begin{align*}
    \V_{\spref}\left[\widehat{\ell}_{\mathrm{DPO-E}}(y_w,y_l;\theta)\right] = \V_{\spref}\left[\log\sigma\paren*{\hat{s}_{\theta}(y_w,y_l)}\right].
\end{align*} 

According to Proposition~\ref{prop:pref_unbiased}, we know that $\hat{s}_{\theta}(y_w,y_l)$ is an unbiased estimator for $s_{\theta}(y_w,y_l)$ such that $\E_{\spref}\bracket*{\hat{s}_{\theta}(y_w,y_l)} = s_{\theta}(y_w,y_l).$ Therefore, we can apply Lemma~\ref{lem:bound-of-loss-and-gradient} presented previously to directly get the result.

\end{proof}

\subsubsection{Tightness Analysis (Theorem~\ref{thm:tight-bias}, Theorem~\ref{thm:tight-var})}
\label{app:tightness-analysis}

\begin{theorem}[Tightness analysis of bias]
\label{thm:tight-bias}
Assume that for any $y_w, y_l$, the estimator $\hat{s}_{\theta}(y_w, y_l)$ has finite mean, variance, and kurtosis, i.e., $\mathbb{E}_{S_{\hat{s} \mid y_w, y_l}}[\hat{s}_{\theta}(y_w, y_l)] < \infty$, $\mathbb{V}_{S_{\hat{s} \mid y_w, y_l}}[\hat{s}_{\theta}(y_w, y_l)] < \infty$, and $\kappa \triangleq \frac{\mathbb{E}[(\hat{s}_{\theta}(y_w, y_l) - \mathbb{E}[\hat{s}_{\theta}(y_w, y_l)])^4]}{\mathbb{V}[\hat{s}_{\theta}(y_w, y_l)]^2} < \infty$. Then, under a first-order Taylor expansion, the bias of $\widehat{\ell}_{\mathrm{DPO-E}}(y_w,y_l;\theta)$ scales proportionally to the square root of the variance of the score estimator as:

\begin{align*}
    &\E_{\spref}\left[\abs*{\ell_{\mathrm{DPO-E}}(y_w,y_l;\theta) - \widehat{\ell}_{\mathrm{DPO-E}}(y_w,y_l;\theta)}\right] \\
    &\approx \Theta\left(\mathbb{E}_{y_w, y_l} \left[ \left\vert f'\left(s_{\theta}(y_w, y_l)\right)\right\vert \sqrt{\mathbb{V}_{S_{\hat{s} \mid y_w, y_l}}\left[\hat{s}_{\theta}(y_w, y_l)\right]}\right]\right),
\end{align*}

where $f(x) = \log \sigma(x)$, $f'(x) \in (0, 1)$.

\end{theorem}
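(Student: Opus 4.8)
The plan is to reduce the bias, for each fixed preference pair, to the mean absolute deviation of the score estimator via a first-order Taylor expansion of $f=\log\sigma$, and then sandwich that deviation between constant multiples of $\sqrt{\V_{\spref}\bracket*{\hat s_\theta}}$: the upper side by Jensen's inequality, the lower side by the finite-kurtosis bound of Lemma~\ref{lem:tightness-lower-bound}. Fix $(y_w,y_l)$, abbreviate $\hat s = \hat s_\theta(y_w,y_l)$ and $s = s_\theta(y_w,y_l)$, and recall from Proposition~\ref{prop:pref_unbiased} that $\E_{\spref}\bracket*{\hat s} = s$. Since $\ell_{\mathrm{DPO-E}} = -f(s)$ and $\widehat{\ell}_{\mathrm{DPO-E}} = -f(\hat s)$, the left-hand side of the theorem is $\E_{\spref}\bracket*{\abs*{f(\hat s)-f(s)}}$.

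First I would Taylor-expand around $s$, writing $f(\hat s) = f(s) + f'(s)(\hat s - s) + \tfrac12 f''(\xi)(\hat s - s)^2$ for some $\xi$ between $s$ and $\hat s$. In the first-order regime the quadratic remainder is discarded, and because $f'(s) = \sigma(-s)$ does not depend on $\spref$,
\[
\E_{\spref}\bracket*{\abs*{f(\hat s)-f(s)}} \;\approx\; \abs*{f'(s)}\,\E_{\spref}\bracket*{\abs*{\hat s - s}}.
\]
This is the step that linearizes the bias and produces the factor $\abs*{f'(s_\theta(y_w,y_l))}$ appearing on the right-hand side of the theorem.

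Next I would control $\E_{\spref}\bracket*{\abs*{\hat s - s}}$ two-sidedly. The upper bound $\E_{\spref}\bracket*{\abs*{\hat s - s}} \le \sqrt{\V_{\spref}\bracket*{\hat s}}$ is Jensen, exactly as in the proof of Lemma~\ref{lem:bound-of-loss-and-gradient}; the matching lower bound comes from applying Lemma~\ref{lem:tightness-lower-bound} to $X=\hat s$, whose mean, variance, and kurtosis $\kappa$ are finite by hypothesis, giving $\E_{\spref}\bracket*{\abs*{\hat s - s}} \ge \tfrac{c}{\kappa}\sqrt{\V_{\spref}\bracket*{\hat s}}$ with $c=\sqrt{0.2}(0.8)^2$. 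Hence $\E_{\spref}\bracket*{\abs*{\hat s - s}} = \Theta\paren*{\sqrt{\V_{\spref}\bracket*{\hat s}}}$ pointwise in $(y_w,y_l)$, with the hidden constants governed by the kurtosis. Substituting this into the previous display and then averaging the resulting pointwise two-sided estimates over $(y_w,y_l)$ --- which preserves the $\Theta$ as long as the kurtosis is bounded uniformly over $\mathcal{D}$ so the constants are uniform --- yields
\[
\E_{y_w,y_l}\E_{\spref}\bracket*{\abs*{\ell_{\mathrm{DPO-E}} - \widehat{\ell}_{\mathrm{DPO-E}}}}
\;\approx\;
\Theta\paren*{\E_{y_w,y_l}\bracket*{\abs*{f'(s_\theta(y_w,y_l))}\sqrt{\V_{\spref}\bracket*{\hat s_\theta(y_w,y_l)}}}},
\]
which is the assertion, the outer expectation over $(y_w,y_l)$ being understood on the left as well, consistent with $\mathcal{L}_{\mathrm{DPO}}$ being an average over $\mathcal{D}$.

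The main obstacle is justifying the discarded Taylor remainder. The term $\tfrac12 f''(\xi)(\hat s - s)^2$ is of order $\V_{\spref}\bracket*{\hat s}$, i.e.\ second order in the noise scale $\sqrt{\V_{\spref}\bracket*{\hat s}}$, while the linear term is only $\Theta\paren*{\sqrt{\V_{\spref}\bracket*{\hat s}}}$; so the linearization is faithful only when $\hat s$ is reasonably concentrated around $s$, and making this uniform would require a quantitative moment or tail bound beyond finite kurtosis --- which is exactly why the statement is phrased with ``$\approx$'' and ``under a first-order Taylor expansion''. A secondary point to watch is that the lower bound must survive both multiplication by $\abs*{f'(s)} = \sigma(-s)\in(0,1)$ and the outer expectation: since $f'$ is strictly positive (and bounded away from $0$ on any bounded range of scores) and the constant $c/\kappa$ of Lemma~\ref{lem:tightness-lower-bound} is uniform under a uniform kurtosis bound, this goes through, but it is the place where one must be precise about the meaning of ``$\Theta$'' with an expectation outside it.
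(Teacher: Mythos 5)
Your proposal is correct and follows essentially the same route as the paper's proof: a first-order Taylor expansion of $f=\log\sigma$ around $s_\theta$ to extract the factor $\abs*{f'(s_\theta)}$, Jensen's inequality for the upper bound $\E\bracket*{\abs*{\hat s_\theta - s_\theta}} \le \sqrt{\V\bracket*{\hat s_\theta}}$, and Lemma~\ref{lem:tightness-lower-bound} for the matching kurtosis-dependent lower bound. Your added remarks on the size of the discarded quadratic remainder and on the uniformity of $\kappa$ needed for the $\Theta$ to survive the outer expectation over $(y_w,y_l)$ are sound refinements of points the paper leaves implicit, but they do not change the argument.
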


\begin{proof}

We omit the explicit conditioning on $y_w, y_l$ for brevity and denote $s_{\theta} := s_{\theta}(y_w, y_l)$, $\hat{s}_{\theta} := \hat{s}_{\theta}(y_w, y_l)$.

By a first-order Taylor expansion of $f(\hat{s}_{\theta})$ around $s_{\theta}$, we have:
$$ f(\hat{s}_{\theta}) = f(s_{\theta}) + f'(s_{\theta})(\hat{s}_{\theta} - s_{\theta}) + O\left((\hat{s}_{\theta} - s_{\theta})^2\right).$$

Ignoring the higher-order term yields the linear approximation:
$$ f(\hat{s}_{\theta}) \approx f(s_{\theta}) + f'(s_{\theta})(\hat{s}_{\theta} - s_{\theta}). $$

According to the definition, the bias of $\widehat{\ell}_{\mathrm{DPO-E}}(y_w,y_l;\theta)$ is:

$$\E_{\spref}\left[\abs*{\ell_{\mathrm{DPO-E}}(y_w,y_l;\theta) - \widehat{\ell}_{\mathrm{DPO-E}}(y_w,y_l;\theta)}\right] = \mathbb{E}_{S_{\hat{s}\mid y_w,y_l}}\left[\left\vert f\left(\hat{s}_{\theta}\right)-f\left({s}_{\theta}\right)\right\vert\right].$$

Applying the linear approximation and using the fact that $f(s_{\theta})$ is constant w.r.t. $S_{\hat{s}\mid y_w,y_l}$, we get:

$$ \mathbb{E}_{S_{\hat{s}\mid y_w,y_l}}\left[\left\vert f\left(\hat{s}_{\theta}\right)-f\left({s}_{\theta}\right)\right\vert\right]
\approx \mathbb{E}_{S_{\hat{s} \mid y_w, y_l}}\left[\left\vert f'(s_{\theta})\right\vert \left\vert\hat{s}_{\theta} - s_{\theta}\right\vert \right]
=\left\vert f'(s_{\theta})\right\vert \mathbb{E}_{S_{\hat{s} \mid y_w, y_l}}\left[\left\vert\hat{s}_{\theta} - s_{\theta}\right\vert \right].$$

According to Jensen’s inequality and by Proposition~\ref{prop:pref_unbiased}, which states that $\mathbb{E}_{S_{\hat{s} \mid y_w, y_l}}\left[\hat{s}_{\theta}\right] =  s_{\theta}$, we have

$$\mathbb{E}_{S_{\hat{s} \mid y_w, y_l}}\left[\left\vert\hat{s}_{\theta} - s_{\theta}\right\vert \right]
\leq \sqrt{\mathbb{E}_{S_{\hat{s} \mid y_w, y_l}}\left[\left(\hat{s}_{\theta} - s_{\theta}\right)^2 \right]}
=\sqrt{\mathbb{V}_{S_{\hat{s} \mid y_w, y_l}}\left[\hat{s}_{\theta}\right]},
$$

and according to Lemma~\ref{lem:tightness-lower-bound}, there exists a constant $c = \sqrt{0.2}(0.8)^2$ such that:
$$\mathbb{E}_{S_{\hat{s} \mid y_w, y_l}} \left[ \left| \hat{s}_{\theta} - s_{\theta} \right| \right]\geq\frac{c}{\kappa} \sqrt{\mathbb{V}_{S_{\hat{s} \mid y_w, y_l}}[\hat{s}_{\theta}]}.
$$

Thus we get:
$$\frac{c}{\kappa} \sqrt{\mathbb{V}_{S_{\hat{s} \mid y_w, y_l}}[\hat{s}_{\theta}]} \leq \mathbb{E}_{S_{\hat{s} \mid y_w, y_l}}\left[\left\vert\hat{s}_{\theta} - s_{\theta}\right\vert \right]
\leq \sqrt{\mathbb{V}_{S_{\hat{s} \mid y_w, y_l}}\left[\hat{s}_{\theta}\right]},
$$
which means:
\begin{align*}
\E_{\spref}\left[\abs*{\ell_{\mathrm{DPO-E}}(y_w,y_l;\theta) - \widehat{\ell}_{\mathrm{DPO-E}}(y_w,y_l;\theta)}\right]
&\approx \left| f'(s_{\theta}) \right| \mathbb{E}_{S_{\hat{s} \mid y_w, y_l}} \left[ \left| \hat{s}_{\theta} - s_{\theta} \right| \right]\\
&=\Theta\left(\left\vert f'(s_{\theta})\right\vert \sqrt{\mathbb{V}_{S_{\hat{s} \mid y_w, y_l}}\left[\hat{s}_{\theta}\right]}\right).
\end{align*}

Finally, from Lemma~\ref{lem:properties-logsigma}, we know $f'(s_{\theta}) \in (0, 1)$. 

\end{proof}

\begin{remark}
The assumptions on $\hat{s}_{\theta}$ in Theorem~\ref{thm:tight-bias}, namely finite mean, variance, and kurtosis, are very mild and standard~\citep{boucheron2003concentration,vershynin2018high,wainwright2019high}. These conditions exclude only extremely heavy-tailed distributions. They hold for all sub-Gaussian and sub-exponential distributions, specifically including Gaussian, uniform, exponential, and any bounded distributions. Since $\hat{s}_\theta$ is the estimated preference score computed from ELBOs derived using a neural network, it is naturally bounded in practice and thus satisfies these assumptions.
\end{remark}

\begin{theorem}[Tightness analysis of variance]
\label{thm:tight-var}
Under a first-order Taylor expansion, the variance of $\widehat{\ell}_{\mathrm{DPO-E}}(y_w,y_l;\theta)$ scales proportionally to the variance of the score estimator as follows:

$$\V_{\spref}\left[\widehat{\ell}_{\mathrm{DPO-E}}(y_w,y_l;\theta)\right] \approx  \left(f'\left(s_{\theta}(y_w, y_l)\right)\right)^2 \mathbb{V}_{S_{\hat{s} \mid y_w, y_l}}\left[\hat{s}_{\theta}(y_w, y_l)\right],
$$
where $f(x) = \log \sigma(x)$, $f^\prime\left(x\right)\in (0,1)$.

\end{theorem}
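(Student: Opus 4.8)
The plan is to carry out the classical delta-method (first-order Taylor) argument, in direct parallel with the proof of Theorem~\ref{thm:tight-bias}. First I would record that negation leaves variance unchanged, so by the definition in Eq.~\eqref{eq:dpo-elbo-mc},
\[
\V_{\spref}\left[\widehat{\ell}_{\mathrm{DPO-E}}(y_w,y_l;\theta)\right] = \V_{\spref}\left[\log\sigma\!\left(\hat{s}_{\theta}(y_w,y_l)\right)\right] = \V_{\spref}\left[f\!\left(\hat{s}_{\theta}(y_w,y_l)\right)\right],
\]
exactly as in the proof of Theorem~\ref{thm:effect-of-vp-on-bias-and-var}, where $f = \log\sigma$. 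Abbreviating $s_\theta := s_\theta(y_w,y_l)$ and $\hat{s}_\theta := \hat{s}_\theta(y_w,y_l)$, I would Taylor-expand $f(\hat{s}_\theta)$ about $s_\theta$, using $s_\theta = \E_{\spref}[\hat{s}_\theta]$ from Proposition~\ref{prop:pref_unbiased}:
\[
f(\hat{s}_\theta) = f(s_\theta) + f'(s_\theta)(\hat{s}_\theta - s_\theta) + O\!\left((\hat{s}_\theta - s_\theta)^2\right),
\]
and drop the quadratic remainder to obtain the linear surrogate $f(\hat{s}_\theta) \approx f(s_\theta) + f'(s_\theta)(\hat{s}_\theta - s_\theta)$.

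Next I would take the variance over $\spref$ of this surrogate. Because $f(s_\theta)$ and $f'(s_\theta)$ are deterministic given $(y_w,y_l)$ — they depend only on the true score, not on the Monte Carlo draws in $\spref$ — the additive constant is annihilated and the multiplicative constant factors out squared:
\[
\V_{\spref}\left[f(\hat{s}_\theta)\right] \approx \V_{\spref}\left[f(s_\theta) + f'(s_\theta)(\hat{s}_\theta - s_\theta)\right] = \left(f'(s_\theta)\right)^2 \V_{\spref}[\hat{s}_\theta],
\]
which, after restoring the arguments $(y_w,y_l)$, is precisely the claimed identity. The range $f'(s_\theta) \in (0,1)$ then follows immediately from Lemma~\ref{lem:properties-logsigma} (or from the explicit formula $f'(x) = 1/(1+e^{x})$ computed there).

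The only point requiring care — and the place where the $\approx$ sign is doing work — is the passage from $f(\hat{s}_\theta)$ to its linear surrogate inside the variance. Retaining the $O((\hat{s}_\theta - s_\theta)^2)$ term contributes a cross term proportional to $\mathrm{Cov}_{\spref}\!\left(\hat{s}_\theta - s_\theta,\,(\hat{s}_\theta - s_\theta)^2\right)$ together with $\V_{\spref}[(\hat{s}_\theta - s_\theta)^2]$, both governed by the third and fourth central moments of $\hat{s}_\theta$. Under the same mild finite-kurtosis hypothesis as in Theorem~\ref{thm:tight-bias}, these are of strictly lower order than the leading term $\left(f'(s_\theta)\right)^2\V_{\spref}[\hat{s}_\theta]$ in the small-variance regime — for instance as the sampling budget grows, using $\V\widehat{\mathcal{B}}_\pi(y) = \Theta(1/n)$ from Proposition~\ref{prop:budget-and-allocation} — so the first-order approximation becomes asymptotically exact. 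I would state this caveat explicitly but keep the higher-moment bookkeeping implicit, matching the level of rigor adopted for the companion bias statement.
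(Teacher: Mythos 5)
Your proposal is correct and follows essentially the same route as the paper's proof: a first-order Taylor expansion of $f=\log\sigma$ about $s_\theta=\E_{\spref}[\hat s_\theta]$, dropping the remainder, and using the fact that $f(s_\theta)$ and $f'(s_\theta)$ are constants with respect to $\spref$ so the variance of the linear surrogate is $(f'(s_\theta))^2\,\V_{\spref}[\hat s_\theta]$, with the range of $f'$ from Lemma~\ref{lem:properties-logsigma}. Your extra remark on controlling the neglected terms via third and fourth central moments goes slightly beyond what the paper writes for this theorem, but it is consistent with the finite-kurtosis discussion accompanying Theorem~\ref{thm:tight-bias} and does not change the argument.
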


\begin{proof}
We omit the explicit conditioning on $y_w, y_l$ for brevity and denote $s_{\theta} := s_{\theta}(y_w, y_l)$, $\hat{s}_{\theta} := \hat{s}_{\theta}(y_w, y_l)$.

By a first-order Taylor expansion of $f(\hat{s}_{\theta})$ around $s_{\theta}$, we have:
$$ f(\hat{s}_{\theta}) = f(s_{\theta}) + f'(s_{\theta})(\hat{s}_{\theta} - s_{\theta}) + O\left((\hat{s}_{\theta} - s_{\theta})^2\right).$$

Ignoring the higher-order term yields the linear approximation:
$$ f(\hat{s}_{\theta}) \approx f(s_{\theta}) + f'(s_{\theta})(\hat{s}_{\theta} - s_{\theta}). $$

According to the definition, the variance of $\widehat{\ell}_{\mathrm{DPO-E}}(y_w,y_l;\theta)$ is:
$$ \V_{\spref}\left[\widehat{\ell}_{\mathrm{DPO-E}}(y_w,y_l;\theta)\right] = \mathbb{V}_{S_{\hat{s}\mid y_w,y_l}}\left[ f\left(\hat{s}_{\theta}\right)\right].$$

Applying the linear approximation and using the fact that $f(s_{\theta})$ is constant w.r.t. $S_{\hat{s}\mid y_w,y_l}$, we get:

$$ \mathbb{V}_{S_{\hat{s} \mid y_w, y_l}}\left[ f(\hat{s}_{\theta}) \right]
\approx \mathbb{V}_{S_{\hat{s} \mid y_w, y_l}}\left[ f(s_{\theta}) + f'(s_{\theta})(\hat{s}_{\theta} - s_{\theta}) \right]
= \left(f'(s_{\theta})\right)^2 \mathbb{V}_{S_{\hat{s} \mid y_w, y_l}}\left[ \hat{s}_{\theta} \right]. $$

Finally, from Lemma~\ref{lem:properties-logsigma}, we know $f'(s_{\theta}) \in (0, 1)$, ensuring the scaling factor is bounded. 
\end{proof}

\subsection{Variance Reduction of Preference Score Estimator}
\label{app:proof-additional-variance}

\subsubsection{Unbiasedness of VRPO (Proposition~\ref{prop:unbiased-vrpo})}
\label{app:proof-of-technique-unbiased}

\begin{proposition}[Unbiasedness of VRPO]
\label{prop:unbiased-vrpo}
    Under the variance reduction techniques in VRPO (Section~\ref{sec:VRPO}), the preference score estimator defined in Eq.~\eqref{eq:dpo-elbo-mc} remains an unbiased estimator of the true preference score defined in Eq.~\eqref{eq:dpo-elbo}.
\end{proposition}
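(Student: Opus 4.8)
The plan is to reduce the claim to Proposition~\ref{prop:pref_unbiased} by observing that each of the three VRPO techniques either is already an instance of the sampling scheme covered there or modifies only the joint distribution of the four ELBO estimators while leaving their marginals --- and hence their means --- intact. Recall that the score estimator is the fixed linear combination $\hat{s}_{\theta}(y_w,y_l) = \beta\widehat{\mathcal{B}}_{\pi_\theta}(y_w) - \beta\widehat{\mathcal{B}}_{\pi_{\mathrm{ref}}}(y_w) - \beta\widehat{\mathcal{B}}_{\pi_\theta}(y_l) + \beta\widehat{\mathcal{B}}_{\pi_{\mathrm{ref}}}(y_l)$, so by linearity of expectation it suffices to verify that each of these four ELBO estimators remains unbiased under the VRPO scheme. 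The conceptual point I would stress is that linearity of expectation requires no independence among the four terms, so the coupling introduced by antithetic sampling is harmless for the mean.

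First I would dispatch the sampling budget and the optimal allocation jointly. For any $\ntime, \nmask$ with $\nbudget = \ntime\nmask$, the estimator in Eq.~\eqref{eq:elbo-mc} draws $\stime$ i.i.d.\ from $\mathcal{U}[0,1]$ and, conditionally on each $t^{(j)}$, draws $\smask{(j)}$ i.i.d.\ from $q(y_t \mid t^{(j)}, y)$; the computation in the proof of Proposition~\ref{prop:pref_unbiased} then gives $\E\widehat{\mathcal{B}}_{\pi}(y) = \mathcal{B}_{\pi}(y)$ for every such pair. Taking $\ntime = \nbudget$, $\nmask = 1$ is a special case, so neither enlarging the budget nor switching to the optimal allocation affects unbiasedness of any ELBO estimator, hence nor of $\hat{s}_{\theta}$.

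Next I would treat antithetic sampling, the only technique not literally covered above. The modification is that for a given response $y$ the same realized samples $\stime$ and $\{\smask{(j)}\}_{j=1}^{\ntime}$ are reused to compute both $\widehat{\mathcal{B}}_{\pi_\theta}(y)$ and $\widehat{\mathcal{B}}_{\pi_{\mathrm{ref}}}(y)$. I would point out that this coupling changes only the joint law of the pair $\big(\widehat{\mathcal{B}}_{\pi_\theta}(y), \widehat{\mathcal{B}}_{\pi_{\mathrm{ref}}}(y)\big)$ --- concretely, their correlation, which is exactly the quantity exploited in Proposition~\ref{prop:sharing-sample} --- while each marginal is unchanged: $\widehat{\mathcal{B}}_{\pi_\theta}(y)$ is still an average of $\ell_{\pi_\theta}$ over timesteps drawn i.i.d.\ uniform and masked data drawn from $q$, and symmetrically for $\widehat{\mathcal{B}}_{\pi_{\mathrm{ref}}}(y)$. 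Hence $\E\widehat{\mathcal{B}}_{\pi_\theta}(y) = \mathcal{B}_{\pi_\theta}(y)$ and $\E\widehat{\mathcal{B}}_{\pi_{\mathrm{ref}}}(y) = \mathcal{B}_{\pi_{\mathrm{ref}}}(y)$, exactly as in the independent case. Substituting these four marginal identities into the linear combination yields $\E_{\spref}[\hat{s}_{\theta}(y_w,y_l)] = \beta(\mathcal{B}_{\pi_\theta}(y_w) - \mathcal{B}_{\pi_{\mathrm{ref}}}(y_w)) - \beta(\mathcal{B}_{\pi_\theta}(y_l) - \mathcal{B}_{\pi_{\mathrm{ref}}}(y_l)) = s_{\theta}(y_w,y_l)$, the claim.

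There is no substantive obstacle: the whole content is that unbiasedness is a first-moment property and each VRPO technique perturbs at most the second-moment (covariance) structure of the ELBO estimators, which is precisely why these variance reductions are a ``free lunch''. I would close with a one-line remark that the same argument transfers verbatim to the gradient estimator $\nabla_\theta \hat{s}_{\theta}$, using Lemma~\ref{lem:interchange-expectation-and-gradient} to exchange gradient and expectation so that $\E\nabla_\theta\widehat{\mathcal{B}}_{\pi_\theta}(y) = \nabla_\theta\mathcal{B}_{\pi_\theta}(y)$, which supplies the unbiasedness needed for the gradient-side analysis.
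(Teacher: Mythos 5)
Your proposal is correct and follows essentially the same route as the paper's proof: reduce to Proposition~\ref{prop:pref_unbiased} for the budget and allocation choices, and observe that antithetic sampling alters only the joint law (the correlation) of the coupled ELBO estimators while linearity of expectation needs no independence, so the marginal means --- and hence the score estimator's mean --- are unchanged. Your version is simply a more fully spelled-out rendering of the paper's two-sentence argument, with a harmless extra remark on the gradient estimator.
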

\begin{proof}
    For \emph{sampling budget} and \emph{optimal allocation}, the proof of Proposition~\ref{prop:pref_unbiased} for the unbiasedness of $\hat{s}_{\theta}(y_w, y_l)$ remains valid under variations in $\ntime$ and $\nmask$, so these do not affect the unbiasedness of the score estimator.  
    For \emph{antithetic sampling}, by linearity of expectation, the coupling of $\widehat{\mathcal{B}}_{\pi_\theta}(y)$ and $\widehat{\mathcal{B}}_{\pi_{\mathrm{ref}}}(y)$ also does not affect the unbiasedness of the score estimator.  
\end{proof}

\subsubsection{Sampling Budget and Allocation (Proposition~\ref{prop:budget-and-allocation})}
\label{app:proof-of-cor:budget-and-allocation}

\thmbudgetandallocation*

\begin{proof}
The proof is essentially based on the variance analysis of the ELBO estimator in Lemma~\ref{lem:variance-of-elbo-mc}.

According to Lemma~\ref{lem:variance-of-elbo-mc}, we know that:
\begin{align*}
    \V \widehat{\mathcal{B}}_{\pi}(y) 
    = \frac{1}{\ntime} \mathsfit{V}_{t} + \frac{1}{\ntime\nmask}  \mathsfit{V}_{y_t}.
\end{align*}

Given that $\nbudget = \ntime \times \nmask$ and a fixed allocation proportion $c \triangleq \frac{\ntime}{\nbudget} \in [\frac{1}{\nbudget}, 1]$, we have: 
\begin{align*}
    \V \widehat{\mathcal{B}}_{\pi}(y) = \frac{1}{c\nbudget} \mathsfit{V}_{t} + \frac{1}{\nbudget}  \mathsfit{V}_{y_t}.
\end{align*}
Then, we have $$\V \widehat{\mathcal{B}}_{\pi}(y) = \Theta(\frac{1}{\nbudget}),$$
and $$\argmin_{c\in[\frac{1}{\nbudget}, 1]}\V \widehat{\mathcal{B}}_{\pi}(y)=1,$$
which gives the desired result.

\end{proof}

\subsubsection{Antithetic Sampling (Proposition~\ref{prop:sharing-sample})}
\label{app:proof-of-cor:sharing-sample}

\thmsharingsample*

\begin{proof}
    This result yields naturally from Eq.~\eqref{eq:variance-of-preference} that when $\mathrm{Corr}(\widehat{\mathcal{B}}_{\pi_\theta}(y),\widehat{\mathcal{B}}_{\pi_{\rm ref}}(y))> 0$,  
    $$\V \widehat{\mathcal{B}}_{\pi_\theta}(y) +  \V \widehat{\mathcal{B}}_{\pi_{\mathrm{ref}}}(y) -  2\mathrm{Corr}\paren*{\!\widehat{\mathcal{B}}_{\pi_\theta}(y), \widehat{\mathcal{B}}_{\pi_{\mathrm{ref}}}(y)\!}\sqrt{\V \widehat{\mathcal{B}}_{\pi_\theta}(y)\V \widehat{\mathcal{B}}_{\pi_{\mathrm{ref}}}(y)} < \V \widehat{\mathcal{B}}_{\pi_\theta}(y) +  \V \widehat{\mathcal{B}}_{\pi_{\mathrm{ref}}}(y).$$
    
\end{proof}

\subsection{Deferred Analysis of Estimated Gradient}
\label{app:analysis-of-gradient}

In this section, we present a theoretical analysis of the effect of VRPO on gradient estimation, following a structure analogous to the loss analysis in the main paper.

We first introduce a bounded assumption on the gradient of per-step mask prediction loss $\ell_{\pi_{\theta}}$, which serves as a mild condition for the subsequent derivations.

\begin{assumption}[Bounded gradient of per-step mask prediction loss]
\label{ass:bounded-gradient}
    The gradient of the per-step masked prediction loss $\ell_{\pi_{\theta}}(y_t, t, y)$ (Eq.~\eqref{eq:elbo}) is bounded, i.e., there exists a constant $0\leq C < \infty$ such that
    $\|\nabla_\theta \ell_{\pi_{\theta}}(y_t,t,y)\|_2 \leq C$ for all $\theta$ in the model parameter space, $y$ in $\cD$, and $t \in [0,1]$.
\end{assumption}

This boundedness assumption is reasonable in practice and leads directly to the following corollary.

\begin{corollary}[Bounded gradient of preference score estimator]
\label{cor:bounded-gradient-of-score}
    Under Assumption~\ref{ass:bounded-gradient}, the gradient of the preference score estimator $\hat{s}_{\theta}(y_w, y_l)$ is bounded, i.e., there exists a constant $0\leq \tilde{C} < \infty$ such that $\|\nabla_\theta \hat{s}_{\theta}(y_w,y_l)\|_2 \leq \tilde{C}$ for all $\theta$ in the model parameter space and $(y_w, y_l)$ in $\cD$.
\end{corollary}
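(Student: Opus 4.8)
The plan is to reduce the statement to Assumption~\ref{ass:bounded-gradient} by pure linearity, with no analytic subtleties involved. First I would unfold $\hat{s}_{\theta}(y_w,y_l)$ using its definition in Eq.~\eqref{eq:dpo-elbo-mc}: it is the $\beta$-scaled linear combination $\beta(\widehat{\mathcal{B}}_{\pi_\theta}(y_w) - \widehat{\mathcal{B}}_{\pi_{\mathrm{ref}}}(y_w)) - \beta(\widehat{\mathcal{B}}_{\pi_\theta}(y_l) - \widehat{\mathcal{B}}_{\pi_{\mathrm{ref}}}(y_l))$ of four ELBO estimators of the form in Eq.~\eqref{eq:elbo-mc}. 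The two terms built from the frozen reference policy $\pi_{\mathrm{ref}}$ do not depend on $\theta$, so they vanish under $\nabla_\theta$, leaving $\nabla_\theta \hat{s}_{\theta}(y_w,y_l) = \beta\,\nabla_\theta\widehat{\mathcal{B}}_{\pi_\theta}(y_w) - \beta\,\nabla_\theta\widehat{\mathcal{B}}_{\pi_\theta}(y_l)$.

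Next I would bound $\nabla_\theta\widehat{\mathcal{B}}_{\pi_\theta}(y)$ for a single response $y$. Since $\widehat{\mathcal{B}}_{\pi_\theta}(y)$ is by definition a finite arithmetic average of $\nbudget = \ntime\nmask$ per-step losses $\ell_{\pi_\theta}(y^{(k)}_{t^{(j)}}, t^{(j)}, y)$, differentiation commutes with this finite sum term by term (note that no interchange-of-limit argument in the style of Lemma~\ref{lem:interchange-expectation-and-gradient} is needed here, since the estimator itself contains no expectation), giving $\nabla_\theta\widehat{\mathcal{B}}_{\pi_\theta}(y) = \tfrac{1}{\ntime\nmask}\sum_{j,k}\nabla_\theta\ell_{\pi_\theta}(y^{(k)}_{t^{(j)}}, t^{(j)}, y)$. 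This is a convex combination of vectors each of Euclidean norm at most $C$ by Assumption~\ref{ass:bounded-gradient}, so by the triangle inequality $\norm{\nabla_\theta\widehat{\mathcal{B}}_{\pi_\theta}(y)}_2 \le C$, uniformly over $\theta$ and $y$ (this uniformity is inherited directly from the uniformity in the assumption over $\theta$, $y\in\cD$, and $t\in[0,1]$).

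Finally, applying the triangle inequality once more to the difference,
\[
\norm{\nabla_\theta \hat{s}_{\theta}(y_w,y_l)}_2 \le |\beta|\,\norm{\nabla_\theta\widehat{\mathcal{B}}_{\pi_\theta}(y_w)}_2 + |\beta|\,\norm{\nabla_\theta\widehat{\mathcal{B}}_{\pi_\theta}(y_l)}_2 \le 2|\beta|\,C,
\]
so the claim holds with $\tilde{C} = 2|\beta|\,C$ (equivalently $\tilde{C} = 2\beta C$ since $\beta>0$ as a regularization coefficient). The bound is uniform over all $\theta$ in the parameter space and all $(y_w,y_l)\in\cD$, matching the statement of the corollary.

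I do not anticipate a genuine obstacle: the argument is essentially a two-fold application of the triangle inequality. The only points requiring a moment's care are (i) recognizing that the two reference-policy ELBO estimators are constants in $\theta$ and therefore drop out of the gradient, so that only two of the four summands contribute, and (ii) noting that $\widehat{\mathcal{B}}_{\pi_\theta}$ is a bona fide finite sum, so that $\nabla_\theta$ passes through it without any measure-theoretic justification. Everything else is bookkeeping, and the explicit constant $\tilde C = 2\beta C$ can be stated if a quantitative version is desired later (e.g., in the deferred gradient analysis).
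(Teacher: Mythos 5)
Your proposal is correct and follows essentially the same route as the paper: drop the two $\theta$-independent reference-policy terms, bound $\nabla_\theta\widehat{\mathcal{B}}_{\pi_\theta}(y)$ by $C$ as a convex combination of per-step gradients under Assumption~\ref{ass:bounded-gradient}, and conclude $\tilde{C}=2\beta C$ by the triangle inequality. If anything, your write-up is slightly more careful than the paper's, which states the intermediate bound $\norm{\nabla_\theta\widehat{\mathcal{B}}_{\pi_\theta}(y)}_2$ as an equality with the averaged sum of norms where it should be an inequality.
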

\begin{proof}
Recall that the preference score estimator is defined as:
\begin{align*}
    \hat{s}_{\theta}(y_w,y_l) = \beta 
    \left(\widehat{\mathcal{B}}_{\pi_\theta}(y_w) \! - \! \widehat{\mathcal{B}}_{\pi_{\mathrm{ref}}}(y_w)\right) \! - \!
    \beta
    \left(\widehat{\mathcal{B}}_{\pi_\theta}(y_l) \! - \! \widehat{\mathcal{B}}_{\pi_{\mathrm{ref}}}(y_l)\right),
\end{align*}
where 
\begin{align*} 
    \widehat{\mathcal{B}}_{\pi}(y) = \frac{1}{\ntime}\sum_{j=1}^{\ntime}
    \frac{1}{\nmask}\sum_{k=1}^{\nmask} \ell_{\pi}(y^{(k)}_{t^{(j)}}, t^{(j)}, y).
\end{align*} 

Taking the gradient with respect to $\theta$ leads to:
\begin{align*}
    \nabla_\theta\hat{s}_{\theta}(y_w,y_l) = 
    \beta\nabla_\theta\widehat{\mathcal{B}}_{\pi_\theta}(y_w) - 
    \beta\nabla_\theta\widehat{\mathcal{B}}_{\pi_\theta}(y_l).
\end{align*}

Now expand each gradient term to get:
\begin{align*} 
    \nabla_\theta\widehat{\mathcal{B}}_{\pi_{\theta}}(y) = \frac{1}{\ntime\nmask}\sum_{j=1}^{\ntime}
    \sum_{k=1}^{\nmask} \nabla_\theta\ell_{\pi_{\theta}}(y^{(k)}_{t^{(j)}}, t^{(j)}, y).
\end{align*} 
By Assumption~\ref{ass:bounded-gradient}, each term $\norm*{\nabla_\theta \ell_{\pi_\theta}(y^{(k)}_{t^{(j)}}, t^{(j)}, y)}_2 \leq C$, we have:
\begin{align*} 
    \norm*{\nabla_{\theta}\widehat{\mathcal{B}}_{\pi_{\theta}}(y)}_2 = \frac{1}{\ntime\nmask}\sum_{j=1}^{\ntime}
    \sum_{k=1}^{\nmask} \norm*{\nabla_{\theta}\ell_{\pi_{\theta}}(y^{(k)}_{t^{(j)}}, t^{(j)}, y)}_2 \leq C.
\end{align*} 
Thus,
\begin{align*} 
    \norm*{\nabla_\theta\hat{s}_{\theta}(y_w,y_l)}_2 \leq 
    \beta\norm*{\nabla_\theta\widehat{\mathcal{B}}_{\pi_\theta}(y_w)}_2 + 
    \beta\norm*{\nabla_\theta\widehat{\mathcal{B}}_{\pi_\theta}(y_l)}_2 \leq 2\beta C < \infty.
\end{align*} 
Setting $\tilde{C} = 2\beta C$ gives the desired result.

\end{proof}

\subsubsection{Effect of Preference Score Estimator Variance (Theorem~\ref{thm:effect-of-vp-on-bias-and-var-grad})}

We now present a theorem that characterizes how the variance of the score estimator and the variance of its gradient influence the bias and variance of $\nabla_{\theta}\widehat{\ell}_{\mathrm{DPO-E}}$.

\begin{theorem}
\label{thm:effect-of-vp-on-bias-and-var-grad}

Suppose Assumption~\ref{ass:bounded-gradient} holds. 
Then, there exists a constant $0\leq \tilde{C} < \infty$ such that, given a pair of preference data $y_w,y_l$, the bias and variance of $\nabla_{\theta}\widehat{\ell}_{\mathrm{DPO-E}}$ can be bounded as:
    \begin{align*}
        \E_{\spref}\left[\norm*{\nabla_{\theta}\ell_{\mathrm{DPO-E}}(y_w,y_l;\theta) - \nabla_{\theta}\widehat{\ell}_{\mathrm{DPO-E}}(y_w,y_l;\theta)}_2\right] \\
        \leq \frac{\tilde{C}}{4}\sqrt{\V_{\spref} {\hat{s}_{\theta}(y_w,y_l)}}+ \sqrt{\tr\V_{\spref}\nabla_{\theta}\hat{s}_{\theta}(y_w,y_l)}, 
    \end{align*} 
and
\begin{align*}
    \tr{\V_{\spref}\left[ \nabla_{\theta}\widehat{\ell}_{\mathrm{DPO-E}}(y_w,y_l;\theta)\right]} \leq \frac{\tilde{C}^2}{8} \V_{\spref} {\hat{s}_{\theta}(y_w,y_l)} + \tr\V_{\spref}\nabla_{\theta}{\hat{s}_{\theta}(y_w,y_l)}.
\end{align*}
\end{theorem}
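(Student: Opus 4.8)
The plan is to obtain both inequalities as a direct specialization of Lemma~\ref{lem:bound-of-loss-and-gradient}~\textbf{(ii)} — the generic bias and variance estimates for a transformed random variable $f(X_\theta)$ with $f=\log\sigma$ — taking $X_\theta$ to be the preference score estimator $\hat{s}_\theta(y_w,y_l)$ and $\mu_\theta$ its mean. So the whole argument is a reduction, and the real content is inherited from the auxiliary results established earlier.

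First I would set up the correspondence. Since $\widehat{\ell}_{\mathrm{DPO-E}}(y_w,y_l;\theta) = -\log\sigma(\hat{s}_\theta(y_w,y_l))$ and, by Proposition~\ref{prop:pref_unbiased}, $\E_{\spref}[\hat{s}_\theta(y_w,y_l)] = s_\theta(y_w,y_l)$ for every $\theta$, the true loss satisfies $\ell_{\mathrm{DPO-E}}(y_w,y_l;\theta) = -\log\sigma(\E_{\spref}[\hat{s}_\theta(y_w,y_l)])$. Writing $f=\log\sigma$, $X_\theta = \hat{s}_\theta(y_w,y_l)$ and $\mu_\theta = \E_{\spref}[X_\theta]=s_\theta(y_w,y_l)$ (so $\mu_\theta$ and $s_\theta$ coincide as functions of $\theta$), the overall sign cancels inside norms and variances, giving $\norm{\nabla_\theta\ell_{\mathrm{DPO-E}} - \nabla_\theta\widehat{\ell}_{\mathrm{DPO-E}}}_2 = \norm{\nabla_\theta f(X_\theta) - \nabla_\theta f(\mu_\theta)}_2$ and $\tr\V_{\spref}[\nabla_\theta\widehat{\ell}_{\mathrm{DPO-E}}] = \tr\V_{\spref}[\nabla_\theta f(X_\theta)]$. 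Next I would check the hypothesis of Lemma~\ref{lem:bound-of-loss-and-gradient}~\textbf{(ii)}, namely a uniform gradient bound $\norm{\nabla_\theta X_\theta}_2 \le \tilde{C}$: this is precisely Corollary~\ref{cor:bounded-gradient-of-score}, which under Assumption~\ref{ass:bounded-gradient} yields $\tilde{C}=2\beta C$. That same bound licenses Lemma~\ref{lem:interchange-expectation-and-gradient}, so $\nabla_\theta\mu_\theta = \E_{\spref}[\nabla_\theta X_\theta]$ — the identity used implicitly inside Lemma~\ref{lem:bound-of-loss-and-gradient}~\textbf{(ii)} — is legitimate here, since the Monte Carlo sampling laws for the timesteps ($\mathcal{U}[0,1]$) and the masked tokens (the forward diffusion process) do not depend on $\theta$.

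With this in place I would simply invoke Eq.~\eqref{eq:bias-bound-of-gradient} and Eq.~\eqref{eq:variance-bound-of-gradient} of Lemma~\ref{lem:bound-of-loss-and-gradient} with $C$ replaced by $\tilde{C}$: the former gives $\E_{\spref}\norm{\nabla_\theta f(X_\theta) - \nabla_\theta f(\mu_\theta)}_2 \le \frac{\tilde{C}}{4}\sqrt{\V_{\spref} X_\theta} + \sqrt{\tr\V_{\spref}\nabla_\theta X_\theta}$, and the latter gives $\tr\V_{\spref}\nabla_\theta f(X_\theta) \le \frac{\tilde{C}^2}{8}\V_{\spref} X_\theta + \tr\V_{\spref}\nabla_\theta X_\theta$; substituting back $X_\theta = \hat{s}_\theta(y_w,y_l)$ and the gradient identities from the previous step reproduces the two claimed bounds verbatim. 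I do not expect a genuine obstacle at this level — the proof is bookkeeping. The step demanding the most care (and the reason the constants take the form $\tilde{C}/4$ and $\tilde{C}^2/8$) sits inside Lemma~\ref{lem:bound-of-loss-and-gradient}~\textbf{(ii)} and is already discharged there: the chain-rule factor $f'(X_\theta)\nabla_\theta X_\theta$ couples two distinct sources of randomness — the fluctuation of $f'(X_\theta)$ about $f'(\mu_\theta)$, controlled via the $\frac14$-smoothness of $\log\sigma$ together with $\tilde{C}$, and the intrinsic fluctuation of $\nabla_\theta X_\theta$ itself — and keeping them disentangled (via the triangle inequality, and for the variance a three-term split) is the only delicate point, which requires nothing beyond what the cited lemma supplies.
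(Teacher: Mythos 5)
Your proposal is correct and follows essentially the same route as the paper: identify $X_\theta=\hat{s}_\theta(y_w,y_l)$ and $\mu_\theta=s_\theta(y_w,y_l)$, establish the uniform gradient bound $\tilde{C}$ via Corollary~\ref{cor:bounded-gradient-of-score} under Assumption~\ref{ass:bounded-gradient}, and then invoke Lemma~\ref{lem:bound-of-loss-and-gradient}~\textbf{(ii)}. Your explicit justification of the expectation--gradient interchange (the sampling laws being $\theta$-independent) is a point the paper leaves implicit, but the argument is otherwise identical.
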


\begin{proof}
The proof is essentially based on the analysis of the bias and variance of the transformed random variable in Lemma~\ref{lem:bound-of-loss-and-gradient} presented previously.

By definitions in Eq.~\eqref{eq:dpo-elbo} and Eq.~\eqref{eq:dpo-elbo-mc}, we know that:
\begin{align*}
    &\E_{\spref}\left[\norm*{\nabla_{\theta}\ell_{\mathrm{DPO-E}}(y_w,y_l;\theta) - \nabla_{\theta}\widehat{\ell}_{\mathrm{DPO-E}}(y_w,y_l;\theta)}_2\right]
    \\
    &= \E_{\spref}\left[\norm*{\nabla_{\theta}\log\sigma\paren*{s_{\theta}(y_w,y_l)} - \nabla_{\theta}\log\sigma\paren*{\hat{s}_{\theta}(y_w,y_l)}}_2\right],
\end{align*} 
and 
\begin{align*}
    \tr\V_{\spref}\left[\nabla_{\theta}\widehat{\ell}_{\mathrm{DPO-E}}(y_w,y_l;\theta)\right] = \tr\V_{\spref}\left[\nabla_{\theta}\log\sigma\paren*{\hat{s}_{\theta}(y_w,y_l)}\right].
\end{align*}

According to Corollary~\ref{cor:bounded-gradient-of-score}, under Assumption~\ref{ass:bounded-gradient}, there exists a constant $0\leq \tilde{C} < \infty$ such that the gradient of $\hat{s}_{\theta}(y_w,y_l)$ is uniformly bounded as $\|\hat{s}_{\theta}(y_w,y_l)\|_2 \leq \tilde{C}$.
Then by Lemma~\ref{lem:bound-of-loss-and-gradient}, we have:
\begin{align*}
    &\E_{\spref} \norm*{\nabla_{\theta} \log\sigma (\hat{s}_{\theta}(y_w,y_l))-\nabla_{\theta} \log\sigma (s_{\theta}(y_w,y_l))}_{2} \\
    &\leq \frac{\tilde{C}}{4} \sqrt{\V_{\spref} \hat{s}_{\theta}(y_w,y_l)} + 
    \sqrt{\tr\V_{\spref}\nabla_{\theta}\hat{s}_{\theta}(y_w,y_l)},  \\
    \text{and} \quad &\tr\V_{\spref}\nabla_{\theta} \log\sigma (\hat{s}_{\theta}(y_w,y_l)) 
    \leq \frac{\tilde{C}^2}{8} \V_{\spref} \hat{s}_{\theta}(y_w,y_l) + 
    \tr\V_{\spref}\nabla_{\theta}\hat{s}_{\theta}(y_w,y_l).   
\end{align*}
Applying these bounds to the above equations gives the desired results.

\end{proof}

\subsubsection{Sampling Budget and Allocation (Proposition~\ref{prop:budget-and-allocation-grad})}

Given Theorem~\ref{thm:effect-of-vp-on-bias-and-var-grad}, our goal is to reduce the variance associated with the preference score estimator, specifically $\V\hat{s}_{\theta}(y_w, y_l)$ and $\tr\V\nabla_{\theta}\hat{s}_{\theta}(y_w, y_l)$ (we omit the subscript on ${\spref}$ for brevity). The variance $\V\hat{s}_{\theta}(y_w, y_l)$ has been analyzed in Appendix~\ref{app:proof-additional-variance}.
Now, we turn our focus to $\tr\V\nabla_{\theta}\hat{s}_{\theta}(y_w, y_l)$, showing that the first two techniques in VRPO---increasing the sampling budget and applying optimal allocation---effectively reduce this term.

We begin by expanding $\V\nabla_{\theta}\hat{s}_{\theta}(y_w, y_l)$ for detailed analysis. According to the definition of the score estimator as in Eq.~\eqref{eq:dpo-elbo-mc}, the gradient of the preference score estimator takes the form:
\begin{align*}
    \nabla_\theta\hat{s}_{\theta}(y_w,y_l) = 
    \beta\nabla_\theta\widehat{\mathcal{B}}_{\pi_\theta}(y_w) - 
    \beta\nabla_\theta\widehat{\mathcal{B}}_{\pi_\theta}(y_l).
\end{align*}
Since the Monte Carlo sampling conditional on different data $y$ is independent, i.e., $\mathsfit{S}_{\mathcal{B}_{\pi_\theta} \mid y_w} \indep \mathsfit{S}_{\mathcal{B}_{\pi_\theta} \mid y_l}$, we have:
\begin{align}
\label{eq:variance-of-preference-grad}
    \V\nabla_\theta\hat{s}_{\theta}(y_w,y_l) 
    = 
    \V\beta\nabla_\theta\widehat{\mathcal{B}}_{\pi_\theta}(y_w) + 
    \V\beta\nabla_\theta\widehat{\mathcal{B}}_{\pi_\theta}(y_l) 
    = \beta^2
    {\V \nabla_{\theta}\widehat{\mathcal{B}}_{\pi_\theta}(y_w)} + \beta^2
    {\V \nabla_{\theta}\widehat{\mathcal{B}}_{\pi_\theta}(y_l)}.
\end{align}

Eq.~\eqref{eq:variance-of-preference-grad} shows that $\V\nabla_{\theta}\hat{s}_{\theta}(y_w, y_l)$ can be reduced by lowering the variance of $\nabla_\theta\widehat{\mathcal{B}}_{\pi}(y)$.
We next provide a theoretical guarantee that increasing the sampling budget and adopting optimal allocation in VRPO lead to a reduction in $\V \nabla_\theta\widehat{\mathcal{B}}_{\pi}(y)$.

\begin{proposition}
\label{prop:budget-and-allocation-grad}
Let $\widehat{\mathcal{B}}_{\pi}(y)$ be estimated using a total of $\nbudget = \ntime\times\nmask$ masked samples. 
Then we have:
\begin{enumerate*}[label=\textbf{(\roman*)}]
    \item \(\V \nabla_\theta\widehat{\mathcal{B}}_{\pi}(y) = \Theta(\frac{1}{n})\), and 
    \item $\V \nabla_\theta\widehat{\mathcal{B}}_{\pi}(y)$ is minimized when $\ntime=\nbudget$ and $\nmask=1$ with a fixed $n$.
\end{enumerate*}
\end{proposition}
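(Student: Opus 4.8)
The plan is to mirror the proof of Proposition~\ref{prop:budget-and-allocation} almost verbatim, replacing the scalar ELBO variance decomposition with its gradient counterpart. The starting point is Lemma~\ref{lem:variance-of-elbo-mc}~\textbf{(ii)}, which (under Assumption~\ref{ass:bounded-gradient}, invoking Lemma~\ref{lem:interchange-expectation-and-gradient} to interchange $\nabla_\theta$ with the inner expectations) gives
\[
\V \nabla_{\theta}\widehat{\mathcal{B}}_{\pi}(y) = \frac{1}{\ntime}\mathsfit{V}_{t}^\nabla + \frac{1}{\ntime\nmask}\mathsfit{V}_{y_t}^\nabla,
\]
where $\mathsfit{V}_{t}^\nabla$ and $\mathsfit{V}_{y_t}^\nabla$ are fixed positive semidefinite matrices determined by the data and the forward process, independent of $\ntime$ and $\nmask$.

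First I would introduce the allocation proportion $c \triangleq \ntime/\nbudget \in [1/\nbudget, 1]$, so that $\ntime = c\,\nbudget$ and $\ntime\nmask = \nbudget$, and rewrite the above as $\V \nabla_{\theta}\widehat{\mathcal{B}}_{\pi}(y) = \frac{1}{c\,\nbudget}\mathsfit{V}_{t}^\nabla + \frac{1}{\nbudget}\mathsfit{V}_{y_t}^\nabla$. For claim \textbf{(i)}, with $c$ held fixed both summands scale as $1/\nbudget$; since the two matrices are fixed and finite, $\tr\V\nabla_\theta\widehat{\mathcal{B}}_\pi(y) = \Theta(1/\nbudget)$ (and likewise entrywise and in operator norm), which is the reading consistent with its use in Theorem~\ref{thm:effect-of-vp-on-bias-and-var-grad}. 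For claim \textbf{(ii)}, since $\mathsfit{V}_{t}^\nabla \succeq 0$ and the factor $\frac{1}{c\,\nbudget}$ is decreasing in $c$, the map $c \mapsto \frac{1}{c\,\nbudget}\mathsfit{V}_{t}^\nabla + \frac{1}{\nbudget}\mathsfit{V}_{y_t}^\nabla$ is nonincreasing in the Löwner order, hence (a fortiori in trace) minimized at $c = 1$, i.e. $\ntime = \nbudget$ and $\nmask = 1$. This is formally identical to the $\argmin$ computation in the proof of Proposition~\ref{prop:budget-and-allocation}.

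The only real subtlety — and the place I would be most careful — is bookkeeping around the vector-valued variance: $\V\nabla_\theta\widehat{\mathcal{B}}_\pi(y)$ is a covariance matrix, so both ``$=\Theta(1/n)$'' and ``is minimized'' must be interpreted appropriately, either via trace (as used downstream in Theorem~\ref{thm:effect-of-vp-on-bias-and-var-grad}) or in the Löwner order, in which case the minimization claim is in fact stronger. Establishing the decomposition of Lemma~\ref{lem:variance-of-elbo-mc}~\textbf{(ii)} itself — the law of total variance for random vectors, the independence of the per-timestep masked samples $\mathsfit{S}_{y_{t^{(j)}} \mid y}$, and the gradient--expectation interchange — is the technical heart, but since that lemma is already granted the remaining argument reduces to the short substitution above.
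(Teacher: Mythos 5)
Your proposal is correct and follows exactly the route the paper intends: the paper's own proof simply defers to Lemma~\ref{lem:variance-of-elbo-mc}~\textbf{(ii)} and states that the argument parallels Proposition~\ref{prop:budget-and-allocation}, which is precisely the substitution $c = \ntime/\nbudget$ and monotonicity-in-$c$ argument you carry out. Your extra care about interpreting the matrix-valued variance via the trace (consistent with Theorem~\ref{thm:effect-of-vp-on-bias-and-var-grad}) or the L\"owner order is a detail the paper leaves implicit, and your treatment of it is sound.
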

\begin{proof}
    The proof relies on a variance analysis of the gradient of the ELBO estimator established in Lemma~\ref{lem:variance-of-elbo-mc}. 
    Since the argument closely parallels the proof of Proposition~\ref{prop:budget-and-allocation} in Appendix~\ref{app:proof-of-cor:budget-and-allocation}, we omit the details here.
\end{proof}

\section{Details of Figure~\ref{fig:iilustrition-logsigma-var}}

For Figure~\ref{fig:iilustrition-logsigma-var}, we generated synthetic data as follows. We sampled $N=1000$ points from a zero-mean Gaussian distribution $X \sim \cN(0, \sigma^2)$, with ten different variance levels $\sigma^2 \in \{0.1, 0.2, \dots, 1.0\}$. For each sample, we applied the transformation $\log\sigma(X) = \log(1 / (1 + e^{-X}))$ and recorded its empirical mean, variance, and bias. The ground-truth reference value for comparison is $\log\sigma(\E[X])$, which for $\E[X]=0$ equals $\log\sigma(0)$.

Panel (a) sets $\sigma^2=1.0$. The light blue curve in the horizontal axis shows the Gaussian density $\mathcal{N}(0,1)$, while the black curve plots the nonlinear function $x \mapsto \log\sigma(x)$. The blue histogram in the vertical axis displays the empirical distribution of $\log\sigma(X)$ under this sampling, and the horizontal dashed blue line indicates its empirical mean $\mathbb{E}[\log\sigma(X)]$. The red star and solid lines mark the reference value $\log\sigma(\mathbb{E}[X])$, highlighting the bias introduced by the nonlinear transformation.

Panel (b) summarizes the trends across all variance levels. The horizontal axis is the variance of the Gaussian input $X$, and the vertical axis reports the corresponding empirical variance and bias of $\log\sigma(X)$. Bias is computed as the absolute difference between the sample mean of $\log\sigma(X)$ and the reference $\log\sigma(\mathbb{E}[X])$. Both quantities are observed to grow monotonically with $\V[X]$, supporting the insight that reducing the variance of $X$ jointly mitigates both the bias and variance of the transformed estimator.

\section{Experiments}
\label{app:exp}

\subsection{Implementation of VRPO}

We implement VRPO using a packing strategy, where multiple preference data samples are packed into a single sequence to maximize hardware utilization. For each sequence, we construct an attention mask so that tokens from distinct samples within the sequence cannot attend to each other. Furthermore, all sequences are padded to a fixed length of 4096 with $|\text{EOS}|$ tokens, which is consistent with the default pre-training context length used in LLaDA. During VRPO training, these padded $|\text{EOS}|$ tokens are excluded from the loss calculation.

\subsection{Model Architecture}
\label{app:exp_arch}

In this section, we present the details of the SFT model LLaDA Instruct.

\begin{table}[t!]
    \centering
    \caption{\textbf{The architecture of LLaDA.}}
    \vspace{.2cm}
    \label{table:architecture}
    \begin{adjustbox}{max width=\textwidth}
    \begin{tabular}{lc}
      \toprule
         & LLaDA \\
         \midrule
         Layers & 32 \\
         \midrule
         Model dimension & 4096\\
         \midrule
         Attention heads & 32 \\
         \midrule
         Vocabulary size & 126,464\\
         \midrule
         FFN dimension & 12,288 \\   
         \midrule
         Key/Value heads & 32\\
         \midrule
         Total parameters & 8.02 B \\
         \midrule
         Non-embedding parameters & 6.98 B \\
      \bottomrule
    \end{tabular}
    \end{adjustbox}
    \vspace{-.2cm}
\end{table}

The LLaDA architecture closely follows that of LLaMA~\citep{dubey2024llama}: it is a masked diffusion model with 8B parameters, based on a Transformer Encoder. Like LLaMA, LLaDA employs RMSNorm~\citep{zhang2019root} for normalization, RoPE~\citep{su2024roformer} for positional encoding, and SwiGLU~\citep{shazeer2020glu} as the activation function. Detailed model specifications can be found in Table~\ref{table:architecture}.

\subsection{Training}
\label{app:exp_train}

To enhance the general capabilities of LLaDA, we use 350K preference pairs as our training data. These pairs were collected internally on a large scale and processed by filtering out low-quality samples, removing duplicates based on similarity matching, ranking samples with reward models to select high-quality data, and replacing some \textit{chosen} responses with outputs from state-of-the-art LLMs, ultimately resulting in a dataset comprising approximately 35\% creative writing, 18\% knowledge QA, 16\% NLP tasks, 14\% mathematics tasks, 7\% recommendation tasks, 5\% code generation, 3\% reasoning tasks, and a small portion of safety and other tasks.

We trained the model for one epoch with a batch size of 64 using the AdamW optimizer with a weight decay of 0.01, $\beta_1$ of 0.9, and $\beta_2$ of 0.95. The learning rate schedule employed 15 warmup steps to a maximum learning rate of $5\times10^{-7}$, followed by cosine decay. We configured DPO Loss with $\beta = 0.2$ and complemented it with a 0.05 weighted MDMs SFT loss to improve training stability. We initialize $\pi_{\mathrm{ref}}$ with LLaDA Instruct for VRPO. Training consumed approximately 405 H100 GPU hours for 8 Monte Carlo samples. Due to hardware resource constraints, we did not perform any hyperparameter search.

To evaluate the impact of our variance reduction strategies, Figure~\ref{fig:loss_curve} plots the training losses for the configurations reported in Table~\ref{tab:dpo} and Table~\ref{tab:ablation}. With variance reduction strategies applied, the training loss trajectories become smoother and exhibit substantially lower variability, thereby stabilizing the optimization dynamics of MDMs. We also observe a faster decrease in loss and a lower final loss; these trends are consistent with reduced gradient variance and improved optimization stability.

\begin{figure}[t]
    \centering

    \begin{subfigure}{\textwidth}
        \centering
        \includegraphics[width=0.8\linewidth]{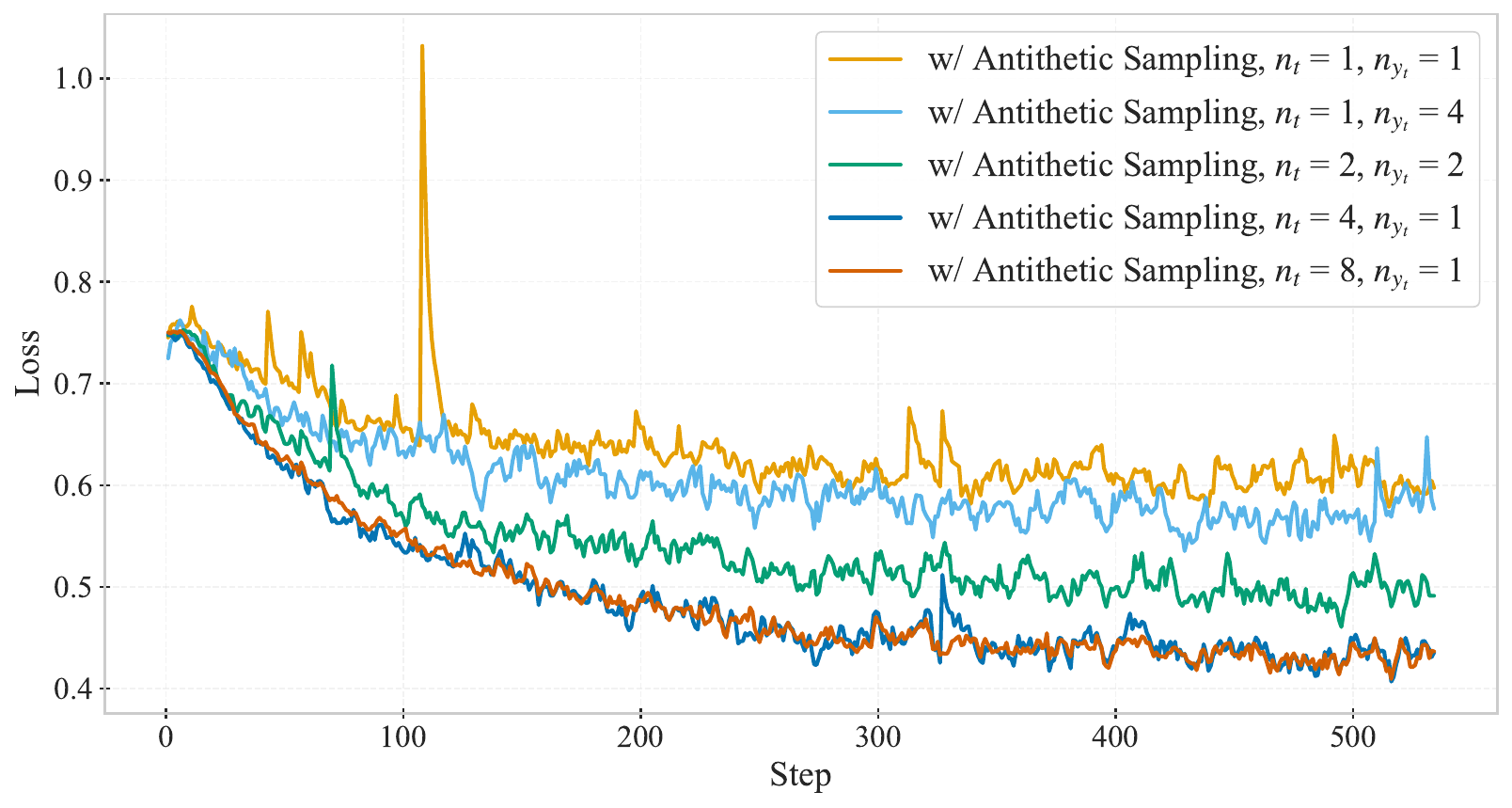}
        \label{fig:loss_antithetic}
    \end{subfigure}

    \begin{subfigure}{\textwidth}
        \centering
        \includegraphics[width=0.8\linewidth]{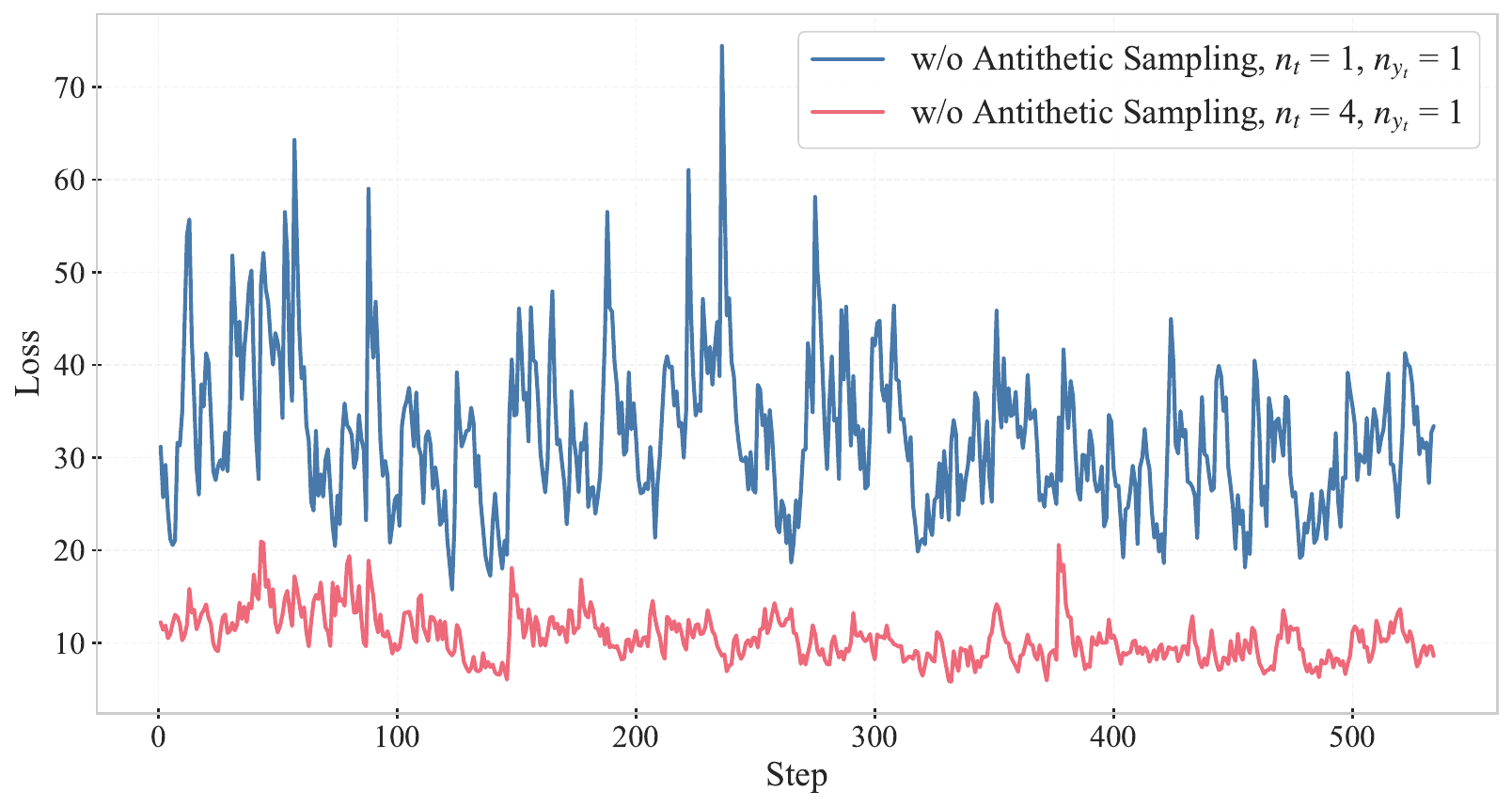}
        \label{fig:loss_no_antithetic}
    \end{subfigure}

    \caption{\textbf{Loss curves under different variance reduction strategies.} \textbf{Top:} w/ antithetic sampling; \textbf{bottom:} w/o antithetic sampling. The curve labeled ``w/o antithetic sampling, $n_t=1$, $n_{y_t}=1$'' corresponds to the training loss of the naive DPO baseline reported in Table~\ref{tab:dpo}, all other curves come from the ablation study in Table~\ref{tab:ablation}, obtained by varying the number of timesteps $n_t$, the number of masked samples $n_{y_t}$, and whether antithetic sampling is applied. We present two panels because the loss magnitudes differ substantially across settings. For visual clarity, all curves are smoothed with an exponential moving average with coefficient 0.3.}
    \label{fig:loss_curve}
\end{figure}

\subsection{Evaluation}
\label{app:exp_eval}

Similar to ARMs with diverse sampling methods~\citep{holtzman2019curious,brown2020language}, MDMs also benefit from various sampling strategies that can enhance sample quality. Following prior work~\citep{chang2022maskgit, nie2025large, sahoo2024simple}, we employ multiple methods to sample text from MDMs, including diffusion sampling, diffusion semi-autoregressive sampling, and low-confidence remasking.

In diffusion semi-autoregressive sampling, to generate a fixed length of $L$ tokens, the method divides the generation process into $\frac{L}{B}$ blocks, where $B$ is the number of tokens generated per block. Within each block, tokens are generated using the original reverse process, and then each block is generated autoregressively. Furthermore, the low-confidence remasking method remasks predicted tokens that exhibit the lowest confidence, based on the predictions.

\begin{table*}[t]
    \centering
    \caption{\textbf{Ablation study on sampling strategies across key benchmarks.} We evaluate the impact of diffusion sampling, semi-autoregressive sampling, and low-confidence remasking on LLaDA 8B Instruct and LLaDA 1.5 8B.}
    \label{tab:sampling_ablation}
    \begin{tabular}{lcc}
        \toprule
        & LLaDA 8B Instruct & LLaDA 1.5 8B \\
        \midrule
        GSM8K & & \\
        \quad Diffusion Sampling            & 53.2          & 55.7 \\
        \quad Low-Confidence Remasking      & 69.4          & 70.3 \\
        \quad Semi-Autoregressive Sampling  & \textbf{78.6} & \textbf{83.3} \\
        \midrule
        HumanEval & & \\
        \quad Diffusion Sampling            & 12.2          & 17.1 \\
        \quad Low-Confidence Remasking      & \textbf{49.4} & 47.0 \\
        \quad Semi-Autoregressive Sampling  & 47.6          & \textbf{52.4} \\
        \midrule
        IFEval & & \\
        \quad Diffusion Sampling            & 55.2          & 59.4 \\
        \quad Low-Confidence Remasking      & \textbf{62.2} & 60.1 \\
        \quad Semi-Autoregressive Sampling  & 61.7          & \textbf{66.2} \\
        \bottomrule
    \end{tabular}
\end{table*}

Additionally, we observed that for LLaDA SFT, due to the padding of $|\text{EOS}|$ tokens during its SFT phase, tends to generate an excessive number of $|\text{EOS}|$ tokens. This often leads to incomplete content generation, resulting in notably truncated outputs and adversely affecting model performance. Inspired by this, we set the confidence score for the $|\text{EOS}|$ token to zero and observe improved performance for LLaDA. For example, using the same inference configuration as LLaDA, setting the $|\text{EOS}|$ token's confidence score to zero improved HumanEval scores from 47.6 to 49.4. Consequently, we adopted this setting for evaluation. The MTBench, AlignBench, and the ArenaHard benchmark results are obtained via the “gpt-4-32k” API provided by OpenAI.

To ensure a fair comparison, we employ both diffusion sampling and semi-autoregressive sampling for LLaDA and LLaDA 1.5 and report the best results. We tuned the answer length over \{64, 128, 256, 512, 1024\}, for semi-autoregressive sampling, we tuned the block length over \{8, 16, 32, 64, 128\}. As shown in Table~\ref{tab:inference_config}, we detail the best inference configurations employed for each benchmark. Moreover, to test VRPO’s generality, we evaluate LLaDA and LLaDA 1.5 on the representative benchmarks GSM8K, HumanEval, and IFEval using three sampling strategies: diffusion sampling, semi-autoregressive sampling, and low-confidence remasking. The ablation results, summarized in Table~\ref{tab:sampling_ablation}, demonstrate the consistent performance gains of LLaDA 1.5 over LLaDA 8B Instruct across most sampling strategies. The optimal strategies identified in this study align with those reported in Table~\ref{tab:dpo}.

\begin{table*}[h]
    \centering
    \caption{\textbf{Comparison of LLaDA and LLaDA 1.5 under training randomness.} LLaDA 1.5 reports mean $\pm$ standard deviation and 95\% confidence intervals across three VRPO runs, varying only the random seed.}
    \label{tab:randomness_study}
    \begin{tabular}{lcc}
        \toprule
        Task & LLaDA & LLaDA 1.5 \\
        \midrule
        GSM8K     & 78.6 & \begin{tabular}[c]{@{}c@{}} 82.9 $\pm$ 0.6 \\ \scriptsize(95\% CI: [81.4, 84.3]) \end{tabular} \\
        Math      & 42.2 & \begin{tabular}[c]{@{}c@{}} 43.0 $\pm$ 0.3 \\ \scriptsize(95\% CI: [42.2, 43.8]) \end{tabular} \\
        GPQA      & 33.3 & \begin{tabular}[c]{@{}c@{}} 35.7 $\pm$ 1.0 \\ \scriptsize(95\% CI: [33.1, 38.3]) \end{tabular} \\
        HumanEval & 49.4 & \begin{tabular}[c]{@{}c@{}} 52.0 $\pm$ 0.7 \\ \scriptsize(95\% CI: [50.3, 53.7]) \end{tabular} \\
        MBPP      & 41.0 & \begin{tabular}[c]{@{}c@{}} 42.3 $\pm$ 0.8 \\ \scriptsize(95\% CI: [40.4, 44.1]) \end{tabular} \\
        IFEval    & 62.2 & \begin{tabular}[c]{@{}c@{}} 65.1 $\pm$ 0.9 \\ \scriptsize(95\% CI: [62.8, 67.4]) \end{tabular} \\
        \bottomrule
    \end{tabular}
\end{table*}

To evaluate the impact of randomness on model performance, we retrain LLaDA using VRPO with two additional random seeds, resulting in three independent runs. All training and evaluation procedures are kept identical across runs, with only the random seed varied to isolate the effect of training stochasticity. We omit MTBench, AlignBench, and ArenaHard because they rely on LLM-as-a-judge scoring, which introduces evaluator variance. We report the mean, standard deviation, and 95\% confidence intervals (calculated using the $t$-distribution) of performance across the three runs in Table~\ref{tab:randomness_study}. As shown, LLaDA 1.5 consistently outperforms LLaDA across benchmarks, achieving higher mean scores with small standard deviations, indicative of stable performance across runs. For most tasks, the 95\% confidence intervals for LLaDA 1.5 lie entirely above the corresponding LLaDA means, evidencing consistent improvements and supporting the reliability of VRPO. Because the inference is deterministic, we report a single baseline score without statistical significance.

\begin{table*}[t]
    \centering
    \caption{\textbf{Inference configurations for LLaDA and LLaDA 1.5.} MDMs benefit from various sampling strategies. We list inference configurations for LLaDA and LLaDA 1.5 that achieve optimal performance. A block length smaller than the answer length indicates the use of diffusion semi-autoregressive sampling; otherwise, diffusion sampling is employed.}
    \label{tab:inference_config}
    \vspace{.2cm}
    \begin{adjustbox}{max width=\textwidth}
    \begin{tabular}{l|cccc}
      \toprule
        & \multicolumn{2}{c}{LLaDA 8B Instruct} & \multicolumn{2}{c}{LLaDA 1.5 8B} \\
        & Block length & Answer length & Block length & Answer length \\
        \midrule
        GSM8K & 8 & 256 & 16 & 256 \\
        Math & 64 & 512 & 128 & 1024 \\
        GPQA & 64 & 64 & 16 & 256 \\
        HumanEval & 512 & 512 & 32 & 512 \\
        MBPP & 256 & 256 & 32 & 512 \\
        IFEval & 512 & 512 & 32 & 512 \\
        Arena-Hard & 128 & 1024 & 128 & 1024 \\
        AlignBench & 32 & 512 & 32 & 512 \\
        MTBench & 32 & 512 & 16 & 256 \\
      \bottomrule
    \end{tabular}
    \end{adjustbox}
\end{table*}

\subsection{Calculation of Variances}
\label{app:calculation-of-variances}

We estimated the variance of the score estimator, the variance of the loss, and the variance of the gradient in Table~\ref{tab:ablation}. We sampled 128 preference data samples, processed with a batch size of 16. For each data point, 8 independent calculations were performed. $\pi_{\mathrm{ref}}$ was initialized using LLaDA, while $\pi_\theta$ was a model checkpoint from the VRPO training process. Given the large model size, storing full gradients for each calculation was computationally prohibitive. Therefore, for estimating the variance of the gradient, we specifically recorded the gradients of the up-projection layer within the Feed-Forward Network module of the first transformer block to serve as a proxy for the full gradient.

\section{Case Study}
\label{app:case_study}

In this section, we present additional case studies of LLaDA 1.5. First, Table~\ref{tab:case_alignment_1} and Table~\ref{tab:case_alignment_2} illustrate the advantages of LLaDA 1.5 over LLaDA in alignment tasks. Table~\ref{tab:case_math_1} and Table~\ref{tab:case_math_2} provide a comparison between LLaDA and LLaDA 1.5 on mathematical tasks.

\begin{table}[t!]
\begin{minipage}{0.99\textwidth}
\centering
\caption{\bf Alignment cases of LLaDA 8B Instruct and LLaDA 1.5 (1).}
\label{tab:case_alignment_1}
\vspace{5pt}
\scalebox{1}{
\begin{tabular}{l p{5.3cm} p{5.3cm}}
\toprule
& \multicolumn{2}{p{10.6cm}}{\textbf{Prompt} 

Write a brief biography of a person named "Brilliant Le Hou". \textcolor{green}{Do not use commas in your reply}. \textcolor{blue}{Highlight at least 3 sections with markdown}, i.e. *highlighted section*. The biography should \textcolor{red}{start with the name of the person.}} \\
\midrule
& \textbf{LLaDA} 

\textcolor{red}{Brilliant Le Hou} was born in 1985 in Paris\textcolor{green}{,} France. From a young age\textcolor{green}{,} he showed a keen interest for science and technology. He attended the prestigious École Polytechnique\textcolor{green}{,} where he excelled in mathematics and engineering. After graduating\textcolor{green}{,} Le Hou began his career at MIT\textcolor{green}{,} where he conducted groundbreaking research in artificial intelligence.

\textcolor{blue}{*highlighted section*}

At MIT\textcolor{green}{,} Le Hou developed a new algorithm that significantly improved the accuracy of machine learning models. This breakthrough earned him numerous awards and recognition in the scientific community.

\textcolor{blue}{*highlighted section*}

After working at MIT\textcolor{green}{,} Le Hou founded a tech company called BrilliantTech Inc. The company quickly became a leader in the field of AI\textcolor{green}{,} with products that revolutionized industries such as healthcare\textcolor{green}{,} finance\textcolor{green}{,} and transportation.

\textcolor{blue}{*highlighted section*}

Le Hou's contributions to AI have had a profound impact on society. His work has led to advancements in medical diagnostics\textcolor{green}{,} financial forecasting, and autonomous vehicles. He has also been involved in various philanthropic activities\textcolor{green}{,} focusing on improving education and environmental sustainability.

\textcolor{blue}{*highlighted section*} & \textbf{LLaDA 1.5} 

\textcolor{red}{Brilliant Le Hou} was born in 1945 in a small village in France. From a young age he showed a remarkable talent for art and music. His parents encouraged him to pursue his passions and he quickly became a renowned artist and musician.

\textcolor{blue}{*Early Life*}
Brilliant Le Hou grew up in a small village surrounded by beautiful landscapes. His parents were both artists and they instilled in him a love for art and music. He spent his childhood painting and playing the guitar.

\textcolor{blue}{*Career*}
Brilliant Le Hou's career began in the 1960s when he started exhibiting his paintings in local galleries. His unique style and talent quickly gained him a following. He also started performing in local cafes and soon became a popular musician.

\textcolor{blue}{*Achievements*}
Brilliant Le Hou received numerous awards and accolades throughout his career. He won the prestigious Prix de Rome in 1975 and was inducted into the French Music Hall of Fame in 1980. His work has been featured in major galleries and museums around the world.

\textcolor{blue}{*Personal Life*}
Brilliant Le Hou was married to a fellow artist for many years. They had two children together. He was known for his kindness and generosity and often donated his earnings to charitable causes.

\textcolor{blue}{*Legacy*}
Brilliant Le Hou's legacy is one of creativity and generosity. His art and music continue to inspire people around the world. His contributions to charitable causes have left a lasting impact on the community. \\
\bottomrule
\end{tabular}
}
\end{minipage}
\end{table}

\newpage

\begin{table}[t!]
\begin{minipage}{0.99\textwidth}
\centering
\caption{\bf Alignment cases of LLaDA 8B Instruct and LLaDA 1.5 (2).}
\label{tab:case_alignment_2}
\vspace{5pt}
\scalebox{1}{
\begin{tabular}{l p{5.3cm} p{5.3cm}}
\toprule
& \multicolumn{2}{p{10.6cm}}{\textbf{Prompt} 

I need a joke involving Zoe and bubbles that contains \textcolor{red}{at least 3 placeholders} represented by square brackets such as [date].} \\
\midrule
& \textbf{LLaDA} 

Zoe was blowing bubbles on \textcolor{red}{[date]} when she heard a loud pop. She looked down and saw a bubble that had burst, causing her to burst into laughter. & \textbf{LLaDA 1.5} 

Zoe was at a bubblegum party and she saw a bubble that was \textcolor{red}{[size]} feet wide. She tried to pop it, but it just kept \textcolor{red}{[bubbling]} and \textcolor{red}{[bubbling]} and \textcolor{red}{[bubbling]}! Finally, she decided to take a break and go for a \textcolor{red}{[nap]} in the shade. When she woke up, the bubble was still \textcolor{red}{[bubbling]} and \textcolor{red}{[bubbling]} and \textcolor{red}{[bubbling]}! Zoe laughed, "I guess this bubble is just \textcolor{red}{[size]} of a party!" \\
\midrule
& \multicolumn{2}{p{10.6cm}}{\textbf{Prompt}

Suggest \textcolor{yellow}{two names for a new type of tonic}. Include the keyword "\textcolor{blue}{brand}" each time, followed by suggested name in \textcolor{green}{double angular brackets}, such as <<American Tonic>>.

First \textcolor{red}{repeat the request} above word for word without change, then give your answer.
Do not say any words or characters before repeating the request.} \\
\midrule
& \textbf{LLaDA}

\textcolor{red}{Suggest two names for a new type of tonic. Include the keyword "brand" each time, followed by suggested name in double angular brackets, such as <<American Tonic>>.}

& \textbf{LLaDA 1.5}

\textcolor{red}{Suggest two names for a new type of tonic. Include the keyword "brand" each time, followed by suggested name in double angular brackets, such as <<American Tonic>>.}

\textcolor{blue}{Brand} \textcolor{green}{<<}\textcolor{yellow}{Serenity Tonic}\textcolor{green}{>>}
\textcolor{blue}{Brand} \textcolor{green}{<<}\textcolor{yellow}{Elysian Tonic}\textcolor{green}{>>} \\
\bottomrule
\end{tabular}
}
\end{minipage}
\end{table}

\newpage

\begin{table}[t!]
\begin{minipage}{0.99\textwidth}
\centering
\caption{\bf Math cases of LLaDA and LLaDA 1.5 (1).}
\label{tab:case_math_1}
\vspace{5pt}
\scalebox{1}{
\begin{tabular}{l p{5.3cm} p{5.3cm}}
\toprule
& \multicolumn{2}{p{10.6cm}}{
\textbf{Prompt} 

Find $9^{-1} \pmod{100}$, as a residue modulo $100$.  (Give an answer between $0$ and $99$, inclusive.)

Please reason step by step, and put your final answer within \textbackslash boxed\{\}.} \\
\midrule
& \textbf{LLaDA} 

To find the modular inverse of 9 modulo 100, we need to find an integer \( x \) such that:

\[
9x \equiv 1 \pmod{100}
\]

This means we need to solve the congruence:

\[
9x - 1 \equiv 0 \pmod{100}
\]

or equivalently:

\[
9x - 1 = 100k
\]

for some integer \( k \). Rearranging this equation, we get:

\[
9x = 100k + 1
\]

We can simplify this equation by dividing both sides by 9:

\[
x = 11k + \frac{1}{9}
\]

Since \( x \) must be an integer, \( \frac{1}{9} \) must also be an integer. This implies that 9 must be a divisor of 1, which is not possible because 9 and 1 are coprime (they have no common factors other than 1). Therefore, there is no integer \( x \) such that \( 9x \equiv 1 \pmod{100} \).

Thus, the modular inverse of 9 modulo 100 is:

\[
\boxed{-1}
\]
 & \textbf{LLaDA 1.5}
To find the modular inverse of 9 modulo 100, we need to find an integer \( x \) such that \( 9x \equiv 1 \pmod{100} \). This means we need to solve the congruence \( 9x - 100k = 1 \) for some integer \( k \).

We can use the Extended Euclidean Algorithm to find this inverse. The Extended Euclidean Algorithm not only finds the greatest common divisor (gcd) of two numbers but also provides the coefficients \( x \) and \( k \) such that \( 9x - 100k = 1 \).

First, we apply the Euclidean Algorithm to find the gcd of 9 and 100:

1. \( 100 = 9 \cdot 11 + 1 \)

2. \( 9 = 1 \cdot 9 + 0 \)

The gcd of 9 and 100 is 1, which confirms that an inverse exists.

Next, we use the Extended Euclidean Algorithm to express 1 as a combination of 9 and 100:

From the first equation:
\[
1 = 100 - 9 \cdot 11
\]

This gives us:
\[
1 = 100 - 9 \cdot 11
\]

Thus, \( x = -11 \) and \( k = 1 \). However, we need \( x \) to be a positive residue modulo 100. Therefore, we convert \(-11\) to a positive equivalent by adding 100:
\[
-11 + 100 = 89
\]

So, the modular inverse of 9 modulo 100 is:
\[
\boxed{89}
\]

\\
\bottomrule
\end{tabular}
}
\end{minipage}
\end{table}

\newpage

\begin{table}[t!]
\begin{minipage}{0.99\textwidth}
\centering
\caption{\bf Math cases of LLaDA and LLaDA 1.5 (2).}
\label{tab:case_math_2}
\vspace{5pt}
\scalebox{1}{
\begin{tabular}{l p{5.3cm} p{5.3cm}}
\toprule
& \multicolumn{2}{p{10.6cm}}{
\textbf{Prompt} 

Question: The ratio of men to women on a bus is 5:9. If the total number of passengers on the bus is 84, and 20 women alight from the bus at the next stop, how many women have remained on the bus?

Let's think step by step

Answer:} \\
\midrule
& \textbf{LLaDA} 

1. The ratio of men to women on the bus is 5:9. This means for every 5 men, there are 9 women. 

2. The total number of passengers on the bus is 84.

3. The total number of parts in the ratio is 5 + 9 = 14.

4. To find the number of men and women, we divide the total number of passengers by the total number of parts:

- Number of men = 84 passengers / 14 parts = 6 men

- Number of women = 84 passengers / 14 parts = 9 women

5. Now, 20 women alight from the bus at the next stop.

6. To find the number of women remaining on the bus, we subtract the number of women who alight from the initial number of women:

- Number of women remaining = Initial number of women - Number of women who alight

- Number of women remaining = 9 women - 20 women

- Number of women remaining = 9 women

Therefore, the number of women remaining on the bus is 9.

The answer is 9.>

 & \textbf{LLaDA 1.5} 

1. The ratio of men to women on the bus is 5:9. This means for every 5 men, there are 9 women.

2. The total number of passengers on the bus is 84.

3. The total number of parts in the ratio is 5 + 9 = 14.

4. To find the number of men and women, we divide the total number of passengers by the total number of parts:

- Number of passengers per part = 84 / 14 = 6

- Number of men = 5 parts * 6 passengers per part = 30

- Number of women = 9 parts * 6 passengers per part = 54

So, initially, there are 30 men and 54 women on the bus.

5. At the next stop, 20 women alight from the bus.

6. To find the number of women remaining on the bus, we subtract the number of women who alighted from the initial number of women:

- Number of women remaining = 54 women - 20 women = 34 women

The answer is 34\\
\bottomrule
\end{tabular}
}
\end{minipage}
\end{table}

\clearpage

\end{document}